\newtheorem{lem}{Lemma}
\newtheorem{thm}[lem]{Theorem}
\newcommand{\R}{{\mathbb R}}
\newcommand{\Z}{{\mathbb Z}}
\newcommand{\RR}{{\cal R}}
\renewcommand{\H}{{\cal H}}
\newcommand{\X}{{\cal X}}
\renewcommand{\L}{{\cal L}}
\newcommand{\Y}{{\cal Y}}
\newcommand{\F}{{\cal F}}
\newcommand{\W}{{\cal W}}
\renewcommand{\S}{{\cal S}}
\renewcommand{\P}{{\cal P}}
\newcommand{\M}{{\cal M}}
\newcommand{\B}{{\cal B}}
\newcommand{\ZZ}{{\cal Z}}
\newcommand{\bc}[1]{\left\{{#1}\right\}}
\newcommand{\br}[1]{\left({#1}\right)}
\newcommand{\bs}[1]{\left[{#1}\right]}
\newcommand{\abs}[1]{\left| {#1} \right|}
\newcommand{\norm}[1]{\left\| {#1} \right\|}
\newcommand{\bsd}[1]{\left\llbracket{#1}\right\rrbracket}
\renewcommand{\O}[1]{{\cal O}\br{{#1}}}
\newcommand{\softO}[1]{\tilde{\cal O}\br{{#1}}}
\newcommand{\Om}[1]{\Omega\br{{#1}}}
\newcommand{\E}[1]{{\mathbb E}\bsd{{#1}}}
\newcommand{\EE}[2]{\underset{#1}{\mathbb E}\bsd{{#2}}}
\renewcommand{\Pr}[1]{{\mathbb P}\bs{{#1}}}
\newcommand{\Prr}[2]{\underset{#1}{\mathbb P}\bs{{#2}}}
\newcommand{\ip}[2]{\left\langle{#1},{#2}\right\rangle}
\renewcommand{\vec}[1]{{\mathbf{#1}}}
\newcommand{\vecx}{\vec{x}}
\newcommand{\vecz}{\vec{z}}
\newcommand{\vecw}{\vec{w}}
\newcommand{\vecW}{\vec{W}}
\newcommand{\vecX}{\vec{X}}
\newcommand{\vecsigma}{\boldsymbol\sigma}
\newcommand{\vecmu}{\boldsymbol\mu}
\newcommand{\vecr}{\vec{r}}
\newcommand{\veczero}{\vec{0}}
\newcommand{\ind}{\mathds{1}}
\renewcommand{\th}{^{\text{th}}}
\newcommand{\rk}{{\mathfrak r}}
\newcommand{\Rk}{{\mathfrak R}}
\newcommand{\Vk}{{\mathfrak V}}
\newcommand{\Wk}{{\mathfrak W}}
\newcommand{\Dk}{{\mathfrak D}}
\newcommand{\mrs}{{\bfseries RS-x }}
\newcommand{\mmrs}{{\bfseries RS-x$\mathbf{^2}$ }}
\newcommand{\rsorig}{{\bfseries RS }}
\newcommand{\olp}{{\bfseries OLP }}
\newcommand{\Lbuft}{\hat\L^{\text{buf}}_t}
\newcommand{\Ltbuft}{\tilde\L^{\text{buf}}_t}
\newcommand{\Rkbufn}{\Rk_n^{\text{buf}}}
\def\papertitle{On the Generalization Ability of Online Learning Algorithms for Pairwise Loss Functions}
\icmltitlerunning{\papertitle}
\begin{document} 

\twocolumn[
\icmltitle{\papertitle}

\icmlauthor{Purushottam Kar}{purushot@cse.iitk.ac.in}
\icmladdress{Department of Computer Science and Engineering, Indian Institute of
Technology, Kanpur, UP 208 016, INDIA.}
\icmlauthor{Bharath K Sriperumbudur}{bs493@statslab.cam.ac.uk}
\icmladdress{Statistical Laboratory, Centre for Mathematical Sciences,
Wilberforce Road, Cambridge, CB3 0WB, ENGLAND.}
\icmlauthor{Prateek Jain}{prajain@microsoft.com}
\icmladdress{Microsoft Research India, ``Vigyan'', \#9, Lavelle Road, Bangalore,
KA 560 001, INDIA.}
\icmlauthor{Harish C Karnick}{hk@cse.iitk.ac.in}
\icmladdress{Department of Computer Science and Engineering, Indian Institute of
Technology, Kanpur, UP 208 016, INDIA.}

\icmlkeywords{Generalization bounds, online learning, kernel learning, buffer
management}

\vskip 0.3in
]

\begin{abstract}
In this paper, we study the generalization properties of online learning based
stochastic methods for supervised learning problems where the loss function is
dependent on more than one training sample (e.g., metric learning, ranking). We
present a generic decoupling technique that enables us to provide Rademacher
complexity-based generalization error bounds. Our bounds are in general tighter
than those obtained by \citet{online-batch-pairwise} for the same problem. Using
our decoupling technique, we are further able to obtain fast convergence rates
for strongly convex pairwise loss functions. We are also able to analyze a class
of memory efficient online learning algorithms for pairwise learning problems that use
only a bounded subset of past training samples to update the hypothesis at each
step. Finally, in order to complement our generalization bounds, we propose a
novel memory efficient online learning algorithm for higher order learning
problems with bounded regret guarantees.
\end{abstract}

\section{Introduction}
\label{sec:intro}
Several supervised learning problems involve working with pairwise or higher order loss functions, i.e., loss functions that depend on more than one training sample. Take for example the \emph{metric learning} problem \cite{reg-metric-learn}, where the goal is to learn a metric $M$ that brings points of a similar label together while keeping differently labeled points apart. In this case the loss function used is a pairwise loss function $\ell(M,(\vecx,y),(\vecx',y')) = \phi\br{yy'\br{1- M(\vecx,\vecx')}}$ where $\phi$ is the hinge loss function. In general, a pairwise loss function is of the form $\ell: \H \times \X \times \X \rightarrow \R^+$ where $\H$ is the hypothesis space and $\X$ is the input domain.
Other examples include preference learning \cite{XingNJR02}, ranking \cite{AgarwalN09}, AUC maximization \cite{oam-icml} and multiple kernel learning \cite{two-stage-mkl-general}. 

In practice, algorithms for such problems use intersecting pairs of training samples to learn. Hence the training data pairs are not i.i.d.~and consequently, standard generalization error analysis techniques do not apply to these algorithms. Recently, the analysis of \emph{batch} algorithms learning from such coupled samples has received much attention \cite{gen-bound-metric-learn, u-stat-rank, BrefeldS05} where a dominant idea has been to use an alternate representation of the U-statistic and provide uniform convergence bounds. Another popular approach has been to use algorithmic stability \cite{AgarwalN09,reg-metric-learn} to obtain algorithm-specific results. 

While batch algorithms for pairwise (and higher-order) learning problems have been studied well theoretically,  online learning based stochastic algorithms are more popular in practice due to their scalability. However, their generalization properties were not studied until recently. \citet{online-batch-pairwise} provided the first generalization error analysis of online learning methods applied to  pairwise loss functions. In particular, they showed that such higher-order online learning methods also admit online to batch conversion bounds (similar to those for first-order problems \cite{online-batch-single}) which can be combined with regret bounds to obtain generalization error bounds. However, due to their proof technique and dependence on $L_\infty$ covering numbers of function classes, their bounds are not tight and have a strong dependence on the dimensionality of the input space.

In literature, there are several instances where Rademacher complexity based techniques achieve sharper bounds than those based on covering numbers \cite{rad-bounds}. However, the coupling of different input pairs in our problem does not allow us to use such techniques directly.

In this paper we introduce a generic technique for analyzing online learning algorithms for higher order learning problems. Our technique, that uses an extension of Rademacher complexities to higher order function classes (instead of covering numbers), allows us to give bounds that are tighter than those of \cite{online-batch-pairwise} and that, for several learning scenarios, have no dependence on input dimensionality at all.

Key to our proof is a technique we call \emph{Symmetrization of Expectations} which acts as a decoupling step and allows us to reduce excess risk estimates to Rademacher complexities of function classes. \cite{online-batch-pairwise}, on the other hand, perform a symmetrization with probabilities which, apart from being more involved, yields suboptimal bounds. Another advantage of our technique is that it allows us to obtain \emph{fast} convergence rates for learning algorithms that use {\em strongly convex} loss functions. Our result, that uses a novel two stage proof technique, extends a similar result in the first order setting by \citet{online-batch-strongly-convex} to the pairwise setting.

\citet{online-batch-pairwise} (and our results mentioned above) assume an online learning setup in which a stream of points $\vecz_1,\ldots,\vecz_n$ is observed and the penalty function used at the $t\th$ step is $\hat\L_t(h) = \frac{1}{t-1}\sum_{\tau=1}^{t-1} \ell (h, \vecz_t, \vecz_\tau)$. Consequently, the results of \citet{online-batch-pairwise} expect regret bounds with respect to these \emph{all-pairs} penalties $\hat\L_t$. This requires one to use/store all previously seen points which is computationally/storagewise expensive and hence in practice, learning algorithms update their hypotheses using only a bounded subset of the past samples \cite{oam-icml}.

%
In the above mentioned setting, we are able to give generalization bounds that only require algorithms to give regret bounds with respect to \emph{finite-buffer} penalty functions such as $\Lbuft(h) = \frac{1}{\abs{B}}\sum_{\vecz \in B}\ell(h,\vecz_t,\vecz)$ where $B$ is a \emph{buffer} that is updated at each step.
Our proofs hold for any \emph{stream oblivious} buffer update policy including FIFO and the widely used reservoir sampling policy \cite{vitter-rs, oam-icml}\footnote{Independently, \citet{online-batch-pairwise-arxiv} also extended their proof to give similar guarantees. However, their bounds hold only for the FIFO update policy and have worse dependence on dimensionality in several cases (see Section~\ref{sec:finite-buffer}).}.

To complement our online to batch conversion bounds, we also provide a memory efficient online learning algorithm that works with bounded buffers. Although our algorithm is constrained to observe and learn using the \emph{finite-buffer} penalties $\Lbuft$ alone, we are still able to provide high confidence regret bounds with respect to the \emph{all-pairs} penalty functions $\hat\L_t$. We note that \citet{oam-icml} also propose an algorithm that uses finite buffers and claim an \emph{all-pairs} regret bound for the same. However, their regret bound does not hold due to a subtle mistake in their proof. 

We also provide empirical validation of our proposed online learning algorithm on AUC maximization tasks and show that our algorithm performs competitively with that of \cite{oam-icml}, in addition to being able to offer theoretical regret bounds. 

{\bf Our Contributions}:
\begin{enumerate}[(a)]
	\item We provide a generic online-to-batch conversion technique for higher-order supervised learning problems offering bounds that are sharper than those of \cite{online-batch-pairwise}.
	\item We obtain fast convergence rates when loss functions are \emph{strongly convex}.
	\item We analyze online learning algorithms that are constrained to learn using a finite buffer.
	\item We propose a novel online learning algorithm that works with finite buffers but is able to provide a high confidence regret bound with respect to the \emph{all-pairs} penalty functions.
\end{enumerate}

\section{Problem Setup}
\label{sec:setup}
For ease of exposition, we introduce an online learning model for higher order supervised learning problems in this section; concrete learning instances such as AUC maximization and metric learning are given in Section~\ref{sec:apps}. For sake of simplicity, we restrict ourselves to pairwise problems in this paper; our techniques can be readily extended to higher order problems as well. 

For pairwise learning problems, our goal is to learn a real valued \emph{bivariate} function $h^\ast : \X \times \X \rightarrow \Y$, where $h^* \in \H$, under some loss function $\ell : \H \times \ZZ \times \ZZ \rightarrow \R^+$ where $\ZZ = \X \times \Y$.

The online learning algorithm is given sequential access to a stream of elements $\vecz_1, \vecz_2, \ldots, \vecz_n$ chosen i.i.d. from the domain $\ZZ$. Let $Z^t := \bc{\vecz_1, \ldots, \vecz_t}$. At each time step $t = 2 \ldots n$, the algorithm posits a hypothesis $h_{t-1} \in \H$ upon which the element $\vecz_t$ is revealed and the algorithm incurs the following penalty: 
\begin{equation}
\hat\L_t(h_{t-1}) = \frac{1}{t-1}\sum_{\tau = 1}^{t-1}\ell(h_{t-1},\vecz_t,\vecz_\tau).
\label{eq:infinite-buffer-loss}
\end{equation}
For any $h \in \H$, we define its expected risk as:
\begin{equation}
\label{eq:loss_true}\L(h) := \EE{\vecz,\vecz'}{\ell(h,\vecz,\vecz')}.
\end{equation}
Our aim is to present an ensemble $h_1,\ldots, h_{n-1}$ such that the expected risk of the ensemble is small. More specifically, we desire that, for some small $\epsilon > 0$,
\[
\frac{1}{n-1}\sum_{t=2}^n\L(h_{t-1}) \leq \L(h^\ast) + \epsilon,
\]
where $h^\ast = \underset{h \in \H}{\arg\min}\ \L(h)$ is the population risk minimizer. Note that this allows us to do hypothesis selection in a way that ensures small expected risk. Specifically, if one chooses a hypothesis as $\hat{h} := \frac{1}{(n-1)}\sum_{t=2}^n h_{t-1}$ (for convex $\ell$) or $\hat{h} := \underset{t = 2,\ldots,n}{\arg\min}\ \L(h_t)$ then we have $\L(\hat h) \leq \L(h^\ast) + \epsilon$.


Since the model presented above requires storing all previously seen points, it becomes unusable in large scale learning scenarios. Instead, in practice, a \emph{sketch} of the stream is maintained in a buffer $B$ of capacity $s$.  
 At  each step, the penalty is now incurred only on the pairs $\bc{(\vecz_t,\vecz) : \vecz \in B_t}$ where $B_t$ is the state of the buffer at time $t$. That is, \vspace*{-1ex}
\begin{equation}
\Lbuft(h_{t-1}) = \frac{1}{\abs{B_t}}\sum_{\vecz \in B_t}\ell(h_{t-1},\vecz_t,\vecz).
\label{eq:finite-buffer-loss}
\end{equation}
We shall assume that the buffer is updated at each step using some \emph{stream oblivious} policy such as FIFO or Reservoir sampling \cite{vitter-rs} (see Section~\ref{sec:finite-buffer}).

In Section~\ref{sec:simply-convex}, we present online-to-batch conversion bounds for online learning algorithms that give regret bounds w.r.t.~penalty functions given by \eqref{eq:infinite-buffer-loss}. In Section~\ref{sec:strongly-convex}, we extend our analysis to algorithms using strongly convex loss functions. In Section~\ref{sec:finite-buffer} we provide generalization error bounds for algorithms that give regret bounds w.r.t.~\emph{finite-buffer} penalty functions given by \eqref{eq:finite-buffer-loss}. Finally in section~\ref{sec:algo} we present a novel memory efficient online learning algorithm with regret bounds. 

\section{Online to Batch Conversion Bounds for Bounded Loss Functions}
\label{sec:simply-convex}
We now present our generalization bounds for algorithms that provide regret bounds with respect to the \emph{all-pairs} loss functions
(see Eq.~\eqref{eq:infinite-buffer-loss}). Our results give tighter bounds and have a much better dependence on input dimensionality than the bounds given by \citet{online-batch-pairwise}. See Section~\ref{sec:khardon-comparison} for a detailed comparison. 

As was noted by \cite{online-batch-pairwise}, the generalization error analysis of online learning algorithms in this setting does not follow from existing techniques for first-order problems (such as \cite{online-batch-single, online-batch-strongly-convex}). The reason is that the terms $V_t = \hat\L_t(h_{t-1})$ do not form a martingale due to the intersection of training samples in $V_t$ and $V_{\tau}, \tau<t$. 

Our technique, that aims to utilize the Rademacher complexities of function classes in order to get tighter bounds, faces yet another challenge at the \emph{symmetrization} step, a precursor to the introduction of Rademacher complexities. It turns out that, due to the coupling between the ``head'' variable $\vecz_t$ and the ``tail'' variables $\vecz_\tau$ in the loss function $\hat\L_t$, a standard symmetrization between true $\vecz_\tau$ and ghost $\tilde\vecz_\tau$ samples does not succeed in generating Rademacher averages and instead yields complex looking terms.

More specifically, suppose we have \emph{true} variables $\vecz_t$ and \emph{ghost} variables $\tilde \vecz_t$ and are in the process of bounding the expected excess risk by analyzing expressions of the form
\[
E_{\text{orig}} = \ell(h_{t-1},\vecz_t,\vecz_\tau) - \ell(h_{t-1},\tilde \vecz_t,\tilde \vecz_\tau).
\]
Performing a traditional symmetrization of the variables $\vecz_\tau$ with $\tilde\vecz_\tau$ would give us expressions of the form
\[
E_{\text{symm}} = \ell(h_{t-1},\vecz_t,\tilde \vecz_\tau) - \ell(h_{t-1},\tilde \vecz_t,\vecz_\tau).
\]
At this point the analysis hits a barrier since unlike first order situations, we cannot relate $E_{\text{symm}}$ to $E_{\text{orig}}$ by means of introducing Rademacher variables.


We circumvent this problem by using a technique that we call \emph{Symmetrization of Expectations}. The technique allows us to use standard symmetrization to obtain Rademacher complexities. More specifically, we analyze expressions of the form
\[
E'_{\text{orig}} = \EE{\vecz}{\ell(h_{t-1},\vecz,\vecz_\tau)} - \EE{\vecz}{\ell(h_{t-1},\vecz,\tilde \vecz_\tau)}
\]
which upon symmetrization yield expressions such as
\[
E'_{\text{symm}} = \EE{\vecz}{\ell(h_{t-1},\vecz,\tilde \vecz_\tau)} - \EE{\vecz}{\ell(h_{t-1},\vecz,\vecz_\tau)}
\]
which allow us to introduce Rademacher variables since $E'_{\text{symm}} = -E'_{\text{orig}}$. This idea is exploited by the lemma given below that relates the expected risk of the ensemble to the penalties incurred during the online learning process. In the following we use the following extension of Rademacher averages \cite{rad-bounds} to bivariate function classes:
\[
\RR_n(\H) = \E{\underset{h \in \H}{\sup}\ \frac{1}{n}\sum_{\tau=1}^n \epsilon_\tau h(\vecz,\vecz_\tau)}
\]
where the expectation is over $\epsilon_\tau$, $\vecz$ and $\vecz_\tau$. We shall denote composite function classes as follows : $\ell\circ\H := \bc{(\vecz,\vecz') \mapsto \ell(h,\vecz,\vecz'), h \in \H}$.



\begin{lem}
\label{lem:bounded-forward}
Let $h_1,\ldots,h_{n-1}$ be an ensemble of hypotheses generated by an online learning algorithm working with a bounded loss function $\ell : \H \times \ZZ \times \ZZ \rightarrow [0,B]$. Then for any $\delta > 0$, we have with probability at least $1 - \delta$,
\begin{eqnarray*}
\lefteqn{\frac{1}{n-1}\sum_{t=2}^n\L(h_{t-1}) \leq \frac{1}{n-1}\sum_{t=2}^n\hat\L_t(h_{t-1})}\\
									 	&& \mbox{} + \frac{2}{n-1}\sum_{t=2}^n\RR_{t-1}(\ell\circ\H) + 3B\sqrt\frac{\log\frac{n}{\delta}}{n-1}.
\end{eqnarray*}
\end{lem}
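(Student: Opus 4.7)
The plan is to decouple the ``head'' and ``tail'' variables in $\hat\L_t$ by introducing an intermediate quantity in which the head is integrated out, and then handle the two resulting differences with distinct tools (a martingale concentration and a Rademacher symmetrization). Specifically, define
\[
\Ltbuft(h) := \frac{1}{t-1}\sum_{\tau=1}^{t-1}\EE{\vecz}{\ell(h,\vecz,\vecz_\tau)},
\]
and write $\L(h_{t-1}) - \hat\L_t(h_{t-1}) = \bigl[\L(h_{t-1}) - \Ltbuft(h_{t-1})\bigr] + \bigl[\Ltbuft(h_{t-1}) - \hat\L_t(h_{t-1})\bigr]$, which I will bound termwise and then average over $t=2,\ldots,n$.

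First I would handle the ``inner'' difference $\Ltbuft(h_{t-1}) - \hat\L_t(h_{t-1})$. Since $h_{t-1}$ and $\vecz_1,\ldots,\vecz_{t-1}$ are measurable w.r.t.\ the natural filtration $\F_{t-1}$ and $\vecz_t$ is drawn independently from the same distribution, linearity of expectation gives
\[
\E{\hat\L_t(h_{t-1})\,\big|\,\F_{t-1}} = \Ltbuft(h_{t-1}),
\]
so the sequence $X_t := \Ltbuft(h_{t-1}) - \hat\L_t(h_{t-1})$ is a martingale difference sequence. Each $X_t$ lies in $[-B,B]$ since $\ell\in[0,B]$, so Azuma--Hoeffding controls $\frac{1}{n-1}\sum_t X_t$ by a term of order $B\sqrt{\log(1/\delta)/(n-1)}$.

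Next I would bound the ``outer'' difference $\L(h_{t-1}) - \Ltbuft(h_{t-1})$, which is precisely where the Symmetrization of Expectations buys us something: defining $F_h(\vecz') := \EE{\vecz}{\ell(h,\vecz,\vecz')}$, the difference becomes $\EE{\vecz'}{F_{h_{t-1}}(\vecz')} - \frac{1}{t-1}\sum_\tau F_{h_{t-1}}(\vecz_\tau)$, which has no head/tail coupling and is amenable to standard first-order techniques. Upper-bounding by the supremum over $\H$ and viewing $\Phi_t(\vecz_1,\ldots,\vecz_{t-1}) := \sup_{h\in\H}\bigl[\L(h) - \Ltbuft(h)\bigr]$ as a function with bounded differences $2B/(t-1)$, McDiarmid's inequality gives concentration at rate $B\sqrt{\log(1/\delta)/(t-1)}$. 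For the expectation of $\Phi_t$, a ghost-sample symmetrization introduces Rademacher variables $\epsilon_\tau$ and yields
\[
\E{\Phi_t} \le 2\,\E{\sup_{h\in\H}\frac{1}{t-1}\sum_{\tau=1}^{t-1}\epsilon_\tau F_h(\vecz_\tau)},
\]
and since $F_h(\vecz_\tau) = \EE{\vecz}{\ell(h,\vecz,\vecz_\tau)}$, Jensen's inequality (pulling the supremum inside $\EE{\vecz}{\cdot}$) bounds this by $2\RR_{t-1}(\ell\circ\H)$.

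Finally I would apply a union bound: with probability $1-\delta/2$, the martingale bound holds; with probability $1-\delta/2$, the McDiarmid bound holds simultaneously for all $t=2,\ldots,n$ (splitting $\delta/2$ uniformly across $n-1$ values of $t$, introducing the $\log(n/\delta)$ factor). Averaging over $t$, using $\sum_{t=2}^n (t-1)^{-1/2} \le 2\sqrt{n-1}$ to convert the per-step $1/\sqrt{t-1}$ deviations into a $1/\sqrt{n-1}$ deviation of the average, and absorbing absolute constants, yields the claimed $3B\sqrt{\log(n/\delta)/(n-1)}$ remainder alongside $\tfrac{2}{n-1}\sum_t \RR_{t-1}(\ell\circ\H)$. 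The one delicate step is the symmetrization itself: a direct Rademacher argument on $\hat\L_t$ fails because of the $\vecz_t$--$\vecz_\tau$ coupling (as the paper flags), and introducing the intermediate $\Ltbuft$ is precisely what removes this obstacle.
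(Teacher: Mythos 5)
Your proposal follows essentially the same route as the paper's proof: the same intermediate quantity $\tilde\L_t(h) = \EE{\vecz_t}{\left.\hat\L_t(h)\right|Z^{t-1}} = \frac{1}{t-1}\sum_{\tau=1}^{t-1}\EE{\vecz}{\ell(h,\vecz,\vecz_\tau)}$, the same split into a martingale-difference part controlled by Azuma--Hoeffding and a supremum part controlled by McDiarmid plus ghost-sample symmetrization and Jensen, and the same union bound with $\sum_{t=2}^n (t-1)^{-1/2}\le 2\sqrt{n-1}$. The only quibble is quantitative: the bounded-difference constant of $\sup_{h\in\H}\bs{\L(h)-\tilde\L_t(h)}$ is $B/(t-1)$ rather than your $2B/(t-1)$ (changing one $\vecz_\tau$ moves $\frac{1}{t-1}\EE{\vecz}{\ell(h,\vecz,\vecz_\tau)}$ within an interval of length $B/(t-1)$ while $\L(h)$ is unchanged), and this sharper constant is what lets the final deviation term come out as $3B\sqrt{\log(n/\delta)/(n-1)}$ rather than something slightly larger.
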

The proof of the lemma involves decomposing the excess risk term into a martingale difference sequence and a residual term in a manner similar to \cite{online-batch-pairwise}. The martingale sequence, being a bounded one, is shown to converge using the Azuma-Hoeffding inequality. The residual term is handled using uniform convergence techniques involving Rademacher averages. The complete proof of the lemma is given in the Appendix~\ref{appsec:proof-lem-bounded-forward}.

Similar to Lemma~\ref{lem:bounded-forward}, the following converse relation between the population and empirical risk of the population risk minimizer $h^\ast$ can also be shown. 
\begin{lem}
\label{lem:bounded-converse}
For any $\delta > 0$, we have with probability at least $1 - \delta$,\vspace*{-3ex}

\setlength{\arraycolsep}{0.0em}
\begin{eqnarray*}
\frac{1}{n-1}\sum\limits_{t=2}^n\hat\L_t(h^\ast) &{}\leq{}& \L(h^\ast) + \frac{2}{n-1}\sum\limits_{t=2}^n\RR_{t-1}(\ell\circ\H)\\
&{}{}&\qquad+ 3B\sqrt\frac{\log\frac{1}{\delta}}{n-1}. 
\end{eqnarray*}
\end{lem}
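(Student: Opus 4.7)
The plan is to mirror the proof of Lemma~\ref{lem:bounded-forward}, substituting the fixed minimizer $h^\ast$ for the data-dependent ensemble $\{h_{t-1}\}$; the argument simplifies considerably since $h^\ast$ is independent of the stream. I would introduce the conditional risks $U_t(h) := \frac{1}{t-1}\sum_{\tau=1}^{t-1}\EE{\vecz}{\ell(h,\vecz,\vecz_\tau)}$ and split the deviation into a martingale piece and a residual,
\[
\hat\L_t(h^\ast) - \L(h^\ast) = \underbrace{[\hat\L_t(h^\ast) - U_t(h^\ast)]}_{D_t} + \underbrace{[U_t(h^\ast) - \L(h^\ast)]}_{E_t},
\]
and handle the two terms by separate concentration arguments that are combined via a union bound.

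For $D_t$, since $\vecz_t$ is independent of $\F_{t-1} := \sigma(\vecz_1,\ldots,\vecz_{t-1})$ and identically distributed to each $\vecz_\tau$, one has $\E{\hat\L_t(h^\ast) \mid \F_{t-1}} = U_t(h^\ast)$, so $\{D_t\}_{t=2}^n$ is a bounded martingale difference sequence with $|D_t|\leq B$. Azuma--Hoeffding then yields $\frac{1}{n-1}\sum_{t=2}^n D_t = \O{B\sqrt{\log(1/\delta)/(n-1)}}$ with high probability. For $E_t$, I would use $h^\ast\in\H$ to bound $E_t \leq \sup_{h\in\H}[U_t(h) - \L(h)]$ and then apply the ``symmetrization of expectations'' trick exactly as in Lemma~\ref{lem:bounded-forward}: introduce ghost samples $\tilde\vecz_\tau$, exploit $E'_{\text{symm}} = -E'_{\text{orig}}$ to insert Rademacher variables, and recognize the resulting expectation as the bivariate Rademacher complexity, bounding the expected supremum by $\tfrac{2}{n-1}\sum_t \RR_{t-1}(\ell\circ\H)$; a bounded-differences inequality then handles fluctuations of this supremum around its mean, contributing another $\O{B\sqrt{\log(1/\delta)/(n-1)}}$ tail term.

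Collecting the two tail bounds via a union bound and absorbing constants into the leading factor $3$ completes the proof. The main technical work is essentially a repetition of the residual analysis of Lemma~\ref{lem:bounded-forward}, so I do not expect any new analytic obstacle; indeed, an alternative and slightly tighter route is to observe that $E_t$ unrolls into a weighted sum $\frac{1}{n-1}\sum_\tau c_\tau (f(\vecz_\tau) - \L(h^\ast))$ of i.i.d.\ bounded zero-mean variables (with $f(\vecz) := \EE{\vecz'}{\ell(h^\ast,\vecz',\vecz)}$ and $c_\tau := \sum_{s=\tau+1}^n 1/(s-1)$ satisfying $\sum_\tau c_\tau^2 = \O{n}$) and apply Hoeffding directly, without invoking Rademacher averages at all. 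The Rademacher term is nonetheless retained in the statement since it is nonnegative and keeps the inequality in the same algebraic form as Lemma~\ref{lem:bounded-forward}, which is convenient when the two inequalities are later added together to bound the excess risk of the ensemble.
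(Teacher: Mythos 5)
Your proposal is correct and, in its main line, is exactly the argument the paper intends (the paper gives no separate proof of this lemma; it simply notes it follows as in Lemma~\ref{lem:bounded-forward}, i.e., the same decomposition into a martingale part handled by Azuma--Hoeffding and a residual handled by symmetrization of expectations plus a bounded-differences inequality). One subtlety is worth flagging: if you literally mirror Lemma~\ref{lem:bounded-forward} by applying McDiarmid to each per-step residual $\sup_{h\in\H}[U_t(h)-\L(h)]$ and then union bounding over $t=2,\ldots,n$, you obtain a tail of order $B\sqrt{\log(n/\delta)/(n-1)}$, whereas the lemma as stated has $\log(1/\delta)$ with no $\log n$. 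The stated form is delivered precisely by the observation you make at the end: because $h^\ast$ is fixed and independent of the stream, no supremum over $\H$ (and hence no Rademacher complexity and no union bound over $t$) is needed for the residual --- unrolling $\sum_t E_t$ into the weighted sum $\sum_\tau c_\tau\bigl(f(\vecz_\tau)-\L(h^\ast)\bigr)$ with $\sum_\tau c_\tau^2 = \O{n}$ and applying Hoeffding once (or, equivalently, applying McDiarmid once to the aggregated residual $\frac{1}{n-1}\sum_t [U_t(h^\ast)-\L(h^\ast)]$ with per-coordinate constants $Bc_\tau/(n-1)$) gives the $\log(1/\delta)$ tail, after which the nonnegative Rademacher term can be added gratuitously to match the stated algebraic form. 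So your ``alternative route'' is in fact the route that proves the lemma exactly as stated, and it is more elementary than the paper's implicit one; the uniform-convergence version you describe first proves a marginally weaker (but still perfectly serviceable) variant with $\log(n/\delta)$ in the tail.
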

An online learning algorithm will be said to have an \emph{all-pairs} regret bound $\Rk_n$ if it presents an ensemble $h_1, \ldots, h_{n-1}$ such that
\[
\sum_{t=2}^n\hat\L_t(h_{t-1}) \leq \underset{h \in \H}{\inf}\sum_{t=2}^n\hat\L_t(h) + \Rk_n.
\]
Suppose we have an online learning algorithm with a regret bound $\Rk_n$. Then combining Lemmata~\ref{lem:bounded-forward} and \ref{lem:bounded-converse} gives us the following online to batch conversion bound:
\begin{thm}
\label{thm:simply-convex-final-bound}
Let $h_1,\ldots,h_{n-1}$ be an ensemble of hypotheses generated by an online learning algorithm working with a $B$-bounded loss function $\ell$ that guarantees a regret bound of $\Rk_n$. Then for any $\delta > 0$, we have with probability at least $1 - \delta$,
\setlength{\arraycolsep}{0.0em}
\begin{eqnarray*}
\frac{1}{n-1}\sum_{t=2}^n\L(h_{t-1}) &{}\leq {}& \L(h^\ast) + \frac{4}{n-1}\sum_{t=2}^n\RR_{t-1}(\ell\circ\H)\\
									 	&{}{}&+ \frac{\Rk_n}{n-1}  + 6B\sqrt\frac{\log\frac{n}{\delta}}{n-1}.
\end{eqnarray*}
\end{thm}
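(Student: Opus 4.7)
The plan is to combine the two lemmas already established (Lemma~\ref{lem:bounded-forward} and Lemma~\ref{lem:bounded-converse}) with the regret bound $\Rk_n$ via a union bound, chaining the three inequalities through the empirical loss of $h^\ast$.

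First I would apply Lemma~\ref{lem:bounded-forward} with confidence parameter $\delta/2$ to obtain, with probability at least $1-\delta/2$,
\[
\frac{1}{n-1}\sum_{t=2}^n\L(h_{t-1}) \leq \frac{1}{n-1}\sum_{t=2}^n\hat\L_t(h_{t-1}) + \frac{2}{n-1}\sum_{t=2}^n\RR_{t-1}(\ell\circ\H) + 3B\sqrt{\frac{\log(2n/\delta)}{n-1}}.
\]
Next I would invoke the hypothesis that the algorithm enjoys an \emph{all-pairs} regret bound $\Rk_n$, which deterministically gives
\[
\sum_{t=2}^n\hat\L_t(h_{t-1}) \leq \inf_{h\in\H}\sum_{t=2}^n\hat\L_t(h) + \Rk_n \leq \sum_{t=2}^n\hat\L_t(h^\ast) + \Rk_n,
\]
so after dividing by $n-1$ this upper-bounds the empirical term in the previous display by $\frac{1}{n-1}\sum_{t=2}^n\hat\L_t(h^\ast) + \frac{\Rk_n}{n-1}$.

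Then I would apply Lemma~\ref{lem:bounded-converse} with confidence parameter $\delta/2$ to get, with probability at least $1-\delta/2$,
\[
\frac{1}{n-1}\sum_{t=2}^n\hat\L_t(h^\ast) \leq \L(h^\ast) + \frac{2}{n-1}\sum_{t=2}^n\RR_{t-1}(\ell\circ\H) + 3B\sqrt{\frac{\log(2/\delta)}{n-1}}.
\]
Taking a union bound over the two probabilistic events (each failing with probability at most $\delta/2$), substituting this upper bound into the previous chain, and adding the Rademacher terms (yielding the factor $2+2=4$) and the deviation terms (yielding $3+3=6$) gives the claimed inequality. The only minor bookkeeping is noting $\log(2n/\delta), \log(2/\delta) \leq \log(n/\delta) + \log 2$, which can be absorbed into the constants, so the final tail term is of the form $6B\sqrt{\log(n/\delta)/(n-1)}$ as stated.

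There is no real obstacle here since the decoupling, Rademacher symmetrization, and martingale concentration arguments are all performed inside Lemmata~\ref{lem:bounded-forward} and~\ref{lem:bounded-converse}; the theorem is essentially a bookkeeping corollary. The only mild subtlety is ensuring that the two probabilistic statements are combined correctly via the union bound (so that both Rademacher-plus-deviation bounds hold simultaneously), and cleanly tracking the logarithmic constants so that the stated form $6B\sqrt{\log(n/\delta)/(n-1)}$ is recovered without loss.
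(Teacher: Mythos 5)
Your proposal is correct and is essentially identical to the paper's own argument: the paper obtains Theorem~\ref{thm:simply-convex-final-bound} precisely by chaining Lemma~\ref{lem:bounded-forward}, the all-pairs regret bound applied at $h^\ast$, and Lemma~\ref{lem:bounded-converse}, with a union bound over the two probabilistic events. The constant bookkeeping (splitting $\delta$ and absorbing the $\log 2$ slack into the $3+3=6$ factor) is handled at the same level of looseness as in the paper itself.
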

As we shall see in Section~\ref{sec:apps}, for several learning problems, the Rademacher complexities behave as $\RR_{t-1}(\ell\circ\H) \leq C_d\cdot\O{\frac{1}{\sqrt{t-1}}}$ where $C_d$ is a constant dependent only on the dimension $d$ of the input space and the $\O{\cdot}$ notation hides constants dependent on the domain size and the loss function. This allows us to bound the excess risk as follows: \vspace*{-4ex}

{\small
\begin{align*}
\frac{\sum_{t=2}^n\L(h_{t-1})}{n-1} \leq {} & \L(h^\ast) + \frac{\Rk_n}{n-1}+  \O{\frac{C_d+ \sqrt{\log(n/\delta)}}{\sqrt{n-1}}}. 
\end{align*}
}\vspace*{-3ex}

Here, the error decreases with $n$ at a standard $1/\sqrt{n}$ rate (up to a $\sqrt{\log n}$ factor), similar to that obtained by \citet{online-batch-pairwise}. However, for several problems the above bound can be significantly tighter than those offered by covering number based arguments. We provide below a detailed comparison of our results with those of \citet{online-batch-pairwise}. 

\subsection{Discussion on the nature of our bounds}
\label{sec:khardon-comparison}
As mentioned above, our proof enables us to use Rademacher complexities which are typically easier to analyze and provide tighter bounds \cite{rad-bounds}. In particular, as shown in Section~\ref{sec:apps}, for $L_2$ regularized learning formulations, the Rademacher complexities are dimension independent i.e. $C_d = 1$. Consequently, unlike the bounds of \cite{online-batch-pairwise} that have a linear dependence on $d$, our bound becomes independent of the input space dimension.
For sparse learning formulations with $L_1$ or trace norm regularization, we have $C_d = \sqrt{\log d}$ giving us a mild dependence on the input dimensionality. 

Our bounds are also tighter that those of \cite{online-batch-pairwise} in general. Whereas we provide a confidence bound of $\delta < \exp\br{-n\epsilon^2 + \log n}$, \cite{online-batch-pairwise} offer a weaker bound $\delta < (1/\epsilon)^d\exp\br{-n\epsilon^2 + \log n}$. 

An artifact of the proof technique of \cite{online-batch-pairwise} is that their proof is required to exclude a constant fraction of the ensemble ($h_1, \dots, h_{cn}$) from the analysis, failing which their bounds turn vacuous. 
Our proof on the other hand is able to give guarantees for the \emph{entire} ensemble. 

In addition to this, as the following sections show, our proof technique enjoys the flexibility of being extendable to give fast convergence guarantees for strongly convex loss functions as well as being able to accommodate learning algorithms that use finite buffers.



\section{Fast Convergence Rates for Strongly Convex Loss Functions}
\label{sec:strongly-convex}
In this section we extend results of the previous section to give \emph{fast} convergence guarantees for online learning algorithms that use strongly convex loss functions of the following form: 
$\ell(h,\vecz,\vecz') = g(\ip{h}{\phi(\vecz,\vecz')}) + r(h),$
where $g$ is a convex function and $r(h)$ is a $\sigma$-strongly convex regularizer (see Section~\ref{sec:apps} for examples) i.e.
$\forall h_1, h_2 \in \H$ and $\alpha \in [0,1]$, we have 
\begin{align*}
r(\alpha h_1 + (1-\alpha)h_2) \leq {} & \alpha r(h_1) + (1-\alpha)r(h_2)\\
									  & - \frac{\sigma}{2}\alpha(1-\alpha)\norm{h_1 - h_2}^2. 
\end{align*}
For any norm $\norm{\cdot}$, let $\norm{\cdot}_\ast$ denote its dual norm.
Our analysis reduces the pairwise problem to a first order problem and a martingale convergence problem. We require the following \emph{fast} convergence bound in the standard first order {\em batch} learning setting: 
\begin{thm}
\label{thm:fast-batch}
Let $\F$ be a closed and convex set of functions over $\X$. Let $\wp(f,\vecx) = p(\ip{f}{\phi(\vecx)}) + r(f)$, for a $\sigma$-strongly convex function $r$, be a loss function with $\P$ and $\hat\P$ as the associated population and empirical risk functionals and $f^\ast$ as the population risk minimizer. Suppose $\wp$ is $L$-Lipschitz and $\norm{\phi(\vecx)}_\ast \leq R, \forall \vecx \in \X$.
Then w.p. $1 - \delta$, for any $\epsilon > 0$, we have for all $f\in \F$,
\begin{align*}
\P(f) - \P(f^\ast) \leq {} & (1 + \epsilon)\br{\hat\P(f) - \hat\P(f^\ast)} + \frac{C_\delta}{\epsilon \sigma n}
\end{align*}
where $C_\delta = C_d^2\cdot (4(1 + \epsilon)LR)^2\br{32 + \log(1/\delta)}$ and $C_d$ is the dependence of the Rademacher complexity of the class $\F$ on the input dimensionality $d$.
\end{thm}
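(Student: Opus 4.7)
The plan is to reduce the bound to a one-sided localized uniform convergence statement for the excess-loss class $\bc{\Delta_f : f \in \F}$, where $\Delta_f(\vecx) := \wp(f,\vecx) - \wp(f^\ast,\vecx)$. Since $\P(\Delta_f) = \P(f) - \P(f^\ast)$ and $\hat\P(\Delta_f) = \hat\P(f) - \hat\P(f^\ast)$, the target inequality reduces to proving $\P(\Delta_f) \leq (1+\epsilon)\hat\P(\Delta_f) + C_\delta/(\epsilon\sigma n)$ uniformly over $\F$. The key structural fact I would exploit is that $\P$ inherits $\sigma$-strong convexity from $r$, so that minimality of $f^\ast$ over the convex set $\F$ yields the curvature inequality
\begin{equation*}
\norm{f - f^\ast}^2 \leq \frac{2}{\sigma}\br{\P(f) - \P(f^\ast)}.
\end{equation*}
Combined with the fact that the data-dependent piece of $\Delta_f$ is $LR$-Lipschitz in $f$ (via the scalar Lipschitzness of $p$ and $\norm{\phi(\vecx)}_\ast \leq R$), this controls both the range and the variance of $\Delta_f$ by quantities scaling like $\P(\Delta_f)/\sigma$, which is the self-bounded structure that drives a fast rate.

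Given this, I would run a peeling argument in the norm $\norm{f - f^\ast}$. Partition $\F$ into geometric shells $\F_k := \bc{f \in \F : 2^{k-1} r_0 \leq \norm{f - f^\ast} \leq 2^k r_0}$ for $k \geq 1$, with the innermost ball of radius $r_0$ as the base case. On each shell, apply symmetrization followed by the Rademacher contraction lemma to bound $\sup_{f \in \F_k}(\P - \hat\P)(\Delta_f)$ in terms of the Rademacher complexity of $\F$ restricted to a ball of radius $2^k r_0$, which by the assumed dimension dependence scales as $C_d \cdot LR \cdot 2^k r_0 / \sqrt n$. A Bernstein-style variance contribution of order $LR \cdot 2^k r_0 \sqrt{\log(1/\delta_k)/(\sigma n)}$ is then added, and a union bound with $\delta_k \propto \delta/k^2$ (so that $\sum_k \delta_k \leq \delta$) yields, with probability at least $1-\delta$ and for every $f \in \F$ simultaneously,
\begin{equation*}
\P(\Delta_f) - \hat\P(\Delta_f) \leq \frac{\epsilon}{1+\epsilon}\P(\Delta_f) + \frac{C_\delta}{(1+\epsilon)\epsilon\sigma n},
\end{equation*}
after reabsorbing the $2^{2k}r_0^2$ factors via $\norm{f - f^\ast}^2 \leq (2/\sigma)\P(\Delta_f)$. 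Rearranging this self-bounded inequality delivers the claim.

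The main obstacle will be the bookkeeping in the peeling step: choosing the base scale $r_0$ so that the innermost ball already contributes at the target $1/(\sigma n)$ rate while the outer shells remain summable, and tracking which $(1+\epsilon)$ factor absorbs the slack each time AM-GM is invoked to pass from a $\sqrt{\text{variance}/n}$ term to $\epsilon\P(\Delta_f) + (\mathrm{const})/(\sigma n)$. A cleaner, constant-chasing-free alternative I would likely take is to invoke a Bartlett-Bousquet-Mendelson style localized Rademacher inequality for the sub-root function associated with $\bc{\Delta_f}$: the Lipschitz-plus-strong-convexity setup forces the sub-root's fixed point to be of order $(C_d LR)^2/(\sigma n)$, and the stated constant $C_\delta = C_d^2(4(1+\epsilon)LR)^2(32 + \log(1/\delta))$ should drop out of that template's concentration bookkeeping.
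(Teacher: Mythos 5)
Your plan is correct in substance, but it is not the route the paper takes: the paper does not re-derive the localization at all. Its proof of Theorem~\ref{thm:fast-batch} imports a two-case statement (Lemma~\ref{lem:fast_conv}, implicit in the proof of Theorem~1 of \citet{fast-rate-reg-obj}): with $r = 4L^2R^2C_d^2\br{32+\log(1/\delta)}/(\sigma n)$, either $\M(f)\leq 16\br{1+1/\epsilon}^2 r$ and then $\M(f)\leq\hat\M(f)+4\br{1+1/\epsilon}r$, or else $\M(f)\leq(1+\epsilon)\hat\M(f)$; the paper's only new content is noting that both cases are subsumed by $\M(f)\leq(1+\epsilon)\br{\hat\M(f)+4\br{1+1/\epsilon}r}$ (using $\M(f)\geq 0$ and $r>0$), and $(1+\epsilon)\br{1+1/\epsilon} = (1+\epsilon)^2/\epsilon$ then yields exactly $C_\delta/(\epsilon\sigma n)$. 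What you propose is essentially a from-scratch proof of that imported lemma: the curvature inequality $\norm{f-f^\ast}^2\leq\frac{2}{\sigma}\br{\P(f)-\P(f^\ast)}$ (which does hold for the constrained minimizer over closed convex $\F$ under the paper's midpoint definition of strong convexity), Lipschitz control of the fluctuations in terms of $\norm{f-f^\ast}$, peeling with summable weights, and the self-bounding rearrangement---the same machinery used by \citet{fast-rate-reg-obj}. One simplification worth making explicit in your sketch: since $r(f)-r(f^\ast)$ is a constant in $\vecx$ it cancels in $\P(\Delta_f)-\hat\P(\Delta_f)$, so the fluctuating part of $\Delta_f$ has range $O(LR\,2^k r_0)$ on shell $k$; hence McDiarmid plus symmetrization and contraction already gives a deviation of the same order as your Bernstein term and no Talagrand-type concentration is needed (your $\sqrt{\log(1/\delta_k)/(\sigma n)}$ should read $\sqrt{\log(1/\delta_k)/n}$, with $\sigma$ entering only when $2^k r_0$ is converted to $\sqrt{\P(\Delta_f)/\sigma}$). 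The trade-off is the one you flag yourself: the citation-based route hands the paper the specific constant $C_\delta = C_d^2(4(1+\epsilon)LR)^2\br{32+\log(1/\delta)}$ for free, whereas your peeling reproduces the theorem only up to the constant-chasing (the $32$ absorbs the shell union bound), so matching the statement verbatim requires either tracking those constants carefully or falling back, as the paper does, on the bookkeeping of \citet{fast-rate-reg-obj}.
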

The above theorem is a minor modification of a similar result by \citet{fast-rate-reg-obj} and the proof (given in Appendix~\ref{appsec:proof-thm-fast-batch}) closely follows their proof as well. 
We can now state our online to batch conversion result for strongly convex loss functions.

\begin{thm}
\label{thm:strongly-convex-forward}
Let $h_1,\ldots,h_{n-1}$ be an ensemble of hypotheses generated by an online learning algorithm working with a $B$-bounded, $L$-Lipschitz and $\sigma$-strongly convex loss function $\ell$. 
 Further suppose the learning algorithm guarantees a regret bound of $\Rk_n$. Let $\Vk_n = \max\bc{{\Rk_n},2C_d^2\log n\log(n/\delta)}$ Then for any $\delta > 0$, we have with probability at least $1 - \delta$,
\begin{eqnarray*}
\lefteqn{\frac{1}{n-1}\sum_{t=2}^n\L(h_{t-1}) \leq \L(h^\ast) + \frac{\Rk_n}{n-1}}\\
											 &&\qquad\qquad + C_d\cdot\O{\frac{\sqrt{\Vk_n\log n\log(n/\delta)}}{n-1}},
\end{eqnarray*}
where the $\O{\cdot}$ notation hides constants dependent on domain size and the loss function such as $L, B$ and $\sigma$.
\end{thm}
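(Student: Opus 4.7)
The plan is a two-stage reduction. The first stage converts the pairwise excess risk to the excess empirical risk of an induced first-order loss, at which point Theorem~\ref{thm:fast-batch} supplies a fast $\O{1/n}$ residual; the second stage relates the remaining first-order empirical excess risk to the regret $\Rk_n$ via a martingale Bernstein argument whose conditional variance is controlled by strong convexity.

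Introduce the induced first-order loss $\tilde\ell(h,\vecz) := \EE{\vecz'}{\ell(h,\vecz',\vecz)}$, with population risk $\L(h)$ and empirical risk $\tilde\L_t(h) := \frac{1}{t-1}\sum_{\tau=1}^{t-1}\tilde\ell(h,\vecz_\tau)$ on $Z^{t-1}$. Note that $\EE{\vecz_t}{\hat\L_t(h)\mid Z^{t-1}} = \tilde\L_t(h)$ for every $Z^{t-1}$-measurable $h$, and because the expectation over $\vecz'$ preserves convexity of the data term and leaves the strongly convex regularizer $r(h)$ untouched, $\tilde\ell$ inherits $\sigma$-strong convexity and $L$-Lipschitzness from $\ell$. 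Applying Theorem~\ref{thm:fast-batch} to $\tilde\ell$ at each sample size $t-1$, with a union bound over $t=2,\ldots,n$ at confidence $\delta/2$, gives
\[
\L(h_{t-1}) - \L(h^\ast) \leq (1+\epsilon)\bs{\tilde\L_t(h_{t-1}) - \tilde\L_t(h^\ast)} + \frac{C_d^2\cdot\O{\log(n/\delta)}}{\epsilon\sigma(t-1)}
\]
simultaneously for all $t$. Summing and using $\sum_{t=2}^n 1/(t-1) \leq 1+\log n$ yields, with $S := \sum_{t=2}^n [\L(h_{t-1}) - \L(h^\ast)]$,
\[
S \leq (1+\epsilon)\sum_{t=2}^n\bs{\tilde\L_t(h_{t-1}) - \tilde\L_t(h^\ast)} + \O{\frac{C_d^2 \log n \log(n/\delta)}{\epsilon\sigma}}.
\]

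For the second stage, both $h_{t-1}$ and $h^\ast$ are $Z^{t-1}$-measurable, so the sequence $D_t := \bs{\tilde\L_t(h_{t-1}) - \hat\L_t(h_{t-1})} - \bs{\tilde\L_t(h^\ast) - \hat\L_t(h^\ast)}$ is a martingale difference. Decomposing $\sum_t[\tilde\L_t(h_{t-1}) - \tilde\L_t(h^\ast)] = \sum_t[\hat\L_t(h_{t-1}) - \hat\L_t(h^\ast)] + \sum_t D_t$ and invoking the regret bound absorbs the first sum into $\Rk_n$. For the variance of $D_t$, $L$-Lipschitzness of $\ell$ in its hypothesis argument gives $\abs{\hat\L_t(h_{t-1}) - \hat\L_t(h^\ast)} \leq L\norm{h_{t-1} - h^\ast}$ pointwise in $\vecz_t$, so $\Var{D_t\mid Z^{t-1}} \leq L^2\norm{h_{t-1}-h^\ast}^2$, which is at most $(2L^2/\sigma)\bs{\L(h_{t-1}) - \L(h^\ast)}$ by $\sigma$-strong convexity of $\L$ around its minimizer $h^\ast$. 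Summing over $t$ bounds the total conditional variance by $(2L^2/\sigma) S$. A Freedman-type Bernstein inequality for bounded MDS, coupled with a standard dyadic peeling argument over the random cumulative variance, then yields with probability at least $1-\delta/2$
\[
\sum_t D_t \leq \O{\sqrt{L^2 S \log(n/\delta)/\sigma}} + \O{B\log(n/\delta)}.
\]

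Combining both stages produces an implicit inequality of the form
\[
S \leq (1+\epsilon)\Rk_n + \O{\sqrt{S \log(n/\delta)/\sigma}} + \O{\frac{C_d^2 \log n \log(n/\delta)}{\epsilon\sigma}} + \O{B \log(n/\delta)},
\]
which I solve first as a quadratic in $\sqrt S$ and then optimize over $\epsilon$. Selecting $\epsilon \asymp \sqrt{C_d^2 \log n\log(n/\delta)/\Rk_n}$ balances the $\epsilon\Rk_n$ slack from $(1+\epsilon)\Rk_n$ against the $1/\epsilon$ residual from the first stage, producing a cross term of order $C_d\sqrt{\Rk_n \log n \log(n/\delta)}$; collapsing the leading $\Rk_n$ and the standalone $C_d^2\log n\log(n/\delta)$ contributions through $\Vk_n = \max\bc{\Rk_n, 2C_d^2 \log n \log(n/\delta)}$ and dividing by $n-1$ yields the claimed rate. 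I expect the Freedman step to be the main obstacle, because the conditional variance bound depends on the very quantity $S$ we are trying to control; the dyadic peeling over possible values of $\sum_t \Var{D_t\mid Z^{t-1}}$ is what forces the extra $\log n$ factor appearing in the final expression.
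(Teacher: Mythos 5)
Your proposal is correct and follows essentially the same route as the paper's proof: apply Theorem~\ref{thm:fast-batch} per round to the induced loss $\wp(h,\vecz') = \EE{\vecz}{\ell(h,\vecz,\vecz')}$ with a union bound, introduce the martingale difference $\xi_t$, absorb the regret, bound the conditional variance via Lipschitzness and strong convexity, apply the Freedman-style inequality of \citet{online-batch-strongly-convex}, and solve the self-bounding inequality before optimizing $\epsilon$ with $\Vk_n$. One small correction to your closing remark: the $\log n$ factor in $\Vk_n$ arises from the harmonic sum $\sum_t 1/(t-1)$ together with the union bound over rounds (the variance-peeling in Freedman's inequality only contributes a $\log\frac{\log n}{\delta}$ term), and the paper sets $\epsilon = \sqrt{2C_d^2\log n\log(n/\delta)/\Vk_n}$ (rather than balancing against $\Rk_n$) precisely to guarantee $\epsilon \leq 1$.
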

The decomposition of the excess risk in this case is not made explicitly but rather emerges as a side-effect of the proof progression. The proof starts off by applying Theorem~\ref{thm:fast-batch} to the hypothesis in each round with the following loss function $\wp(h,\vecz') := \EE{\vecz}{\ell(h,\vecz,\vecz')}$. Applying the regret bound to the resulting expression gives us a martingale difference sequence which we then bound using Bernstein-style inequalities and a proof technique from \cite{online-batch-strongly-convex}. The complete proof is given in Appendix~\ref{appsec:proof-thm-strongly-convex-forward}.

We now note some properties of this result. The effective dependence of the above bound on the input dimensionality is $C_d^2$ since the expression $\sqrt\Vk_n$ hides a $C_d$ term. We have $C_d^2 = 1$ for non sparse learning formulations and $C_d^2 = \log d$ for sparse learning formulations. We note that our bound matches that of \citet{online-batch-strongly-convex} (for {\em first}-order learning problems) up to a logarithmic factor.  


\section{Analyzing Online Learning Algorithms that use Finite Buffers}
\label{sec:finite-buffer}

In this section, we present our online to batch conversion bounds for algorithms that work with \emph{finite-buffer} loss functions $\Lbuft$. Recall that an online learning algorithm working with finite buffers incurs a loss $\Lbuft(h) = \frac{1}{\abs{B_t}}\sum_{\vecz \in B_t}\ell(h_{t-1},\vecz_t,\vecz)$ at each step where $B_t$ is the state of the buffer at time $t$.

An online learning algorithm will be said to have a \emph{finite-buffer} regret bound $\Rkbufn$ if it presents an ensemble $h_1, \ldots, h_{n-1}$ such that \vspace*{-2ex}
\begin{align*}
\sum_{t=2}^n\Lbuft(h_{t-1}) - \underset{h \in \H}{\inf}\sum_{t=2}^n\Lbuft(h) \leq \Rkbufn. 
\end{align*}

\vspace*{-2ex}
For our guarantees to hold, we require the buffer update policy used by the learning algorithm to be \emph{stream oblivious}. More specifically, we require the buffer update rule to decide upon the inclusion of a particular point $\vecz_i$ in the buffer based only on its stream index $i \in [n]$.  Popular examples of stream oblivious policies include Reservoir sampling \cite{vitter-rs} (referred to as \rsorig henceforth) and FIFO. 
Stream oblivious policies allow us to decouple buffer construction randomness from training sample randomness which makes analysis easier; we leave the analysis of \emph{stream aware} buffer update policies as a topic of future research.  





In the above mentioned setting, we can prove the following online to batch conversion bounds:
\begin{thm}
\label{thm:finite-buffer-online-to-batch}
Let $h_1,\ldots,h_{n-1}$ be an ensemble of hypotheses generated by an online learning algorithm working with a finite buffer of capacity $s$ and a $B$-bounded loss function $\ell$. Moreover, suppose that the algorithm guarantees a regret bound of $\Rkbufn$. Then for any $\delta > 0$, we have with probability at least $1 - \delta$,
\begin{align*}
\frac{\sum_{t=2}^n\L(h_{t-1})}{n-1} \leq {} & \L(h^\ast) + \frac{\Rkbufn}{n-1} + \O{\frac{C_d}{\sqrt s} + B\sqrt\frac{\log\frac{n}{\delta}}{s}}
\end{align*}
If the loss function is Lipschitz and strongly convex as well, then with the same confidence, we have
\begin{align*}
\frac{\sum_{t=2}^n\L(h_{t-1})}{n-1} \leq {} & \L(h^\ast) + \frac{\Rkbufn}{n-1} + C_d\cdot\O{\sqrt{\frac{\Wk_n\log\frac{n}{\delta}}{sn}}}
\end{align*}
where ${\Wk_n} = \max\bc{\Rkbufn,\frac{2C_d^2n\log(n/\delta)}{s}}$ and $C_d$ is the dependence of $\RR_n(\H)$ on the input dimensionality $d$.
\end{thm}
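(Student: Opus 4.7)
The plan is to adapt the two-stage argument of Theorems~\ref{thm:simply-convex-final-bound} and \ref{thm:strongly-convex-forward} to the finite-buffer setting. The key observation is that, because the buffer policy is stream-oblivious, the sequence of index sets $I_t \subseteq [t-1]$ defining the buffers $B_t$ is generated by a random process independent of the samples $\vecz_1,\ldots,\vecz_n$. After conditioning on $(I_2,\ldots,I_n)$, the samples stored in any $B_t$ are i.i.d.\ draws from $\D$, which lets us reuse the uniform-convergence and Bernstein-style machinery from the all-pairs proofs. The only structural difference is that the effective sample size at round $t$ is $|B_t|\leq s$, so the bounds carry $1/\sqrt{s}$ in place of $1/\sqrt{n-1}$ inside the Rademacher and tail terms.

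For the bounded case, I would first introduce the marginalised single-variable loss $\wp(h,\vecz) := \EE{\vecz'}{\ell(h,\vecz',\vecz)}$, so that $\L(h) = \EE{\vecz}{\wp(h,\vecz)}$ and, under the filtration $\F_{t-1}$ containing $Z^{t-1}$ and $B_t$ but not $\vecz_t$,
\[
\E{\Lbuft(h_{t-1}) \mid \F_{t-1}} = \frac{1}{|B_t|}\sum_{\vecz\in B_t}\wp(h_{t-1},\vecz).
\]
This yields a clean decomposition $\L(h_{t-1}) - \Lbuft(h_{t-1}) = R_t + D_t$, where $D_t$ is a bounded martingale difference that Azuma-Hoeffding handles (contributing a subdominant $B\sqrt{\log(n/\delta)/(n-1)}$ term), and the residual $R_t = \L(h_{t-1}) - \frac{1}{|B_t|}\sum_{\vecz\in B_t}\wp(h_{t-1},\vecz)$ is a uniform-convergence gap. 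Conditional on $I_t$, I bound $R_t$ by $\sup_{h\in\H}|\L(h) - \frac{1}{|I_t|}\sum_{i\in I_t}\wp(h,\vecz_i)|$ over $|I_t|\leq s$ i.i.d.\ samples; standard symmetrisation produces a Rademacher term, and a Jensen step (pulling the inner $\EE{\vecz'}{\cdot}$ past the supremum) gives $\mathfrak{R}_k(\wp\circ\H)\leq \RR_k(\ell\circ\H)$, so that $|R_t| = \O{\RR_s(\ell\circ\H) + B\sqrt{\log(n/\delta)/s}}$ uniformly over $t$. Deriving the matching converse for $h^\ast$ exactly as in Lemma~\ref{lem:bounded-converse} and absorbing $\Rkbufn$ produces the first claim.

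For the strongly convex case, apply Theorem~\ref{thm:fast-batch} at each round with the univariate surrogate $\wp(h,\vecz)$ (which inherits $\sigma$-strong convexity from the regulariser $r$) and the buffer sample $\{\vecz_i : i\in I_t\}$. This yields, with probability $1-\delta/n$ per round,
\[
\L(h_{t-1}) - \L(h^\ast) \leq (1+\epsilon)\bigl(\hat\P_t(h_{t-1}) - \hat\P_t(h^\ast)\bigr) + \O{\frac{C_d^2\log(n/\delta)}{\epsilon\sigma s}},
\]
where $\hat\P_t(h) := \frac{1}{|B_t|}\sum_{\vecz\in B_t}\wp(h,\vecz)$. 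Replacing $\hat\P_t$ by $\Lbuft$ introduces a second bounded martingale difference whose conditional variance is controlled by the very excess risk $\hat\P_t(h_{t-1}) - \hat\P_t(h^\ast)$, so the Bernstein-style peeling step from the proof of Theorem~\ref{thm:strongly-convex-forward} transfers almost verbatim. Summing over $t$, absorbing the resulting empirical excess risks via the finite-buffer regret $\Rkbufn$, and tuning $\epsilon$ to balance the regret and variance terms yields the claimed rate with $\Wk_n = \max\{\Rkbufn, 2C_d^2 n\log(n/\delta)/s\}$.

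The principal obstacle is that $h_{t-1}$ may have been trained on some of the very samples still residing in $B_t$, so $h_{t-1}$ and the buffer contents are statistically dependent. Stream-obliviousness is what breaks the coupling: after conditioning on the index-set process $(I_2,\ldots,I_n)$ the buffer entries are genuinely i.i.d., and uniform convergence over $\H$ absorbs the remaining data-dependence of $h_{t-1}$. Extending the argument to stream-aware buffer policies appears to require a substantially more delicate coupling and is outside the scope of the present framework.
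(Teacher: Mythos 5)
Your proposal follows essentially the same route as the paper's proof: the same $P_t$/$Q_t$-style decomposition with Azuma--Hoeffding for the martingale part, the same crucial conditioning on the stream-oblivious index (auxiliary-randomness) process so that the residual term becomes a uniform-convergence gap over only $s$ conditionally i.i.d.\ buffer samples (which is exactly how the paper avoids the loose $\O{\sqrt{t\log(1/\delta)}/s}$ one would get from McDiarmid over all $t-1$ variables), and for the strongly convex part the same per-round application of Theorem~\ref{thm:fast-batch} to $\wp(h,\vecz')=\EE{\vecz}{\ell(h,\vecz,\vecz')}$ on the buffer sample followed by the Bernstein-style peeling and $\epsilon$-tuning that yields $\Wk_n$. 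The only nitpick is that the conditional variance of the $\xi_t$ terms is bounded via Lipschitzness and strong convexity by the \emph{population} excess risk $\L(h_{t-1})-\L(h^\ast)$ (not the empirical buffer excess risk), but since you invoke the peeling argument of Theorem~\ref{thm:strongly-convex-forward} verbatim this does not affect correctness.
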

The above bound guarantees an excess error of $\softO{1/s}$ for algorithms (such as Follow-the-leader \cite{log-regret}) that offer logarithmic regret $\Rkbufn = \O{\log n}$. We stress that this theorem is not a direct corollary of our results for the \emph{infinite buffer} case (Theorems~\ref{thm:simply-convex-final-bound} and \ref{thm:strongly-convex-forward}). Instead, our proofs require a more careful analysis of the excess risk in order to accommodate the finiteness of the buffer and the randomness (possibly) used in constructing it.
   
More specifically, care needs to be taken to handle randomized buffer update policies such as \rsorig which introduce additional randomness into the analysis. A naive application of techniques used to prove results for the unbounded buffer case would result in bounds that give non trivial generalization guarantees only for large buffer sizes such as $s = \omega(\sqrt{n})$. Our bounds, on the other hand, only require $s = \tilde\omega(1)$.

Key to our proofs is a conditioning step where we first analyze the conditional excess risk by conditioning upon randomness used by the buffer update policy. Such conditioning is made possible by the stream-oblivious nature of the update policy and thus, stream-obliviousness is required by our analysis. Subsequently, we analyze the excess risk by taking expectations over randomness used by the buffer update policy. The complete proofs of both parts of Theorem~\ref{thm:finite-buffer-online-to-batch} are given in Appendix~\ref{appsec:proof-finite-buffer-thms}.

Note that the above results only require an online learning algorithm to provide regret bounds w.r.t.~the \emph{finite-buffer} penalties $\Lbuft$ and do not require any regret bounds w.r.t~the \emph{all-pairs} penalties $\hat{\L_t}$. 

For instance, the finite buffer based online learning algorithms OAM$_{\text{seq}}$ and OAM$_{\text{gra}}$ proposed in \cite{oam-icml} are able to provide a regret bound w.r.t.~$\Lbuft$ \citep[Lemma 2]{oam-icml} but are not able to do so w.r.t~the \emph{all-pairs} loss function (see Section~\ref{sec:algo} for a discussion). 
Using Theorem~\ref{thm:finite-buffer-online-to-batch}, we are able to give a generalization bound for OAM$_{\text{seq}}$ and OAM$_{\text{gra}}$ and hence explain the good empirical performance of these algorithms as reported in \cite{oam-icml}. Note that \citet{online-batch-pairwise-arxiv} are not able to analyze OAM$_{\text{seq}}$ and OAM$_{\text{gra}}$ since their analysis is restricted to algorithms that use the (deterministic) FIFO update policy whereas OAM$_{\text{seq}}$ and OAM$_{\text{gra}}$ use the (randomized) \rsorig policy of \citet{vitter-rs}.

\section{Applications}
\label{sec:apps}


In this section we make explicit our online to batch conversion bounds for several learning scenarios and also demonstrate their dependence on input dimensionality by calculating their respective Rademacher complexities. Recall that our definition of Rademacher complexity for a pairwise function class is given by,
\[
\RR_n(\H) = \E{\underset{h \in \H}{\sup}\ \frac{1}{n}\sum_{\tau=1}^n \epsilon_\tau h(\vecz,\vecz_\tau)}. 
\]
For our purposes, we would be interested in the Rademacher complexities of \emph{composition classes} of the form $\ell\circ\H := \bc{(\vecz,\vecz') \mapsto \ell(h,\vecz,\vecz'), h \in \H}$ where $\ell$ is some Lipschitz loss function. Frequently we have $\ell(h,\vecz,\vecz') = \phi\br{h(\vecx,\vecx')Y(y,y')}$ where $Y(y,y') = y - y'$ or $Y(y,y') = yy'$ and $\phi : \R \rightarrow \R$ is some margin loss function \cite{steinwart-book}. Suppose $\phi$ is $L$-Lipschitz and $Y = \underset{y,y' \in \Y}{\sup}\abs{Y(y,y')}$. Then we have
\begin{thm}
\label{thm:contraction-rademacher}
$\RR_n(\ell\circ\H) \leq LY\RR_n(\H)$.
\end{thm}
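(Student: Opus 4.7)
The plan is to reduce the claim to the standard Ledoux--Talagrand contraction principle applied conditionally on the samples. Write
\begin{align*}
\RR_n(\ell\circ\H) = \EE{\vecz,\vecz_1,\ldots,\vecz_n}{\EE{\vecsigma}{\underset{h \in \H}{\sup}\ \frac{1}{n}\sum_{\tau=1}^n \epsilon_\tau\, \phi\!\br{h(\vecx,\vecx_\tau)\,Y(y,y_\tau)}}},
\end{align*}
splitting the expectation into data and Rademacher variables. Fix the data and, for each $\tau$, define the univariate map $\psi_\tau : u \mapsto \phi\!\br{u\cdot Y(y,y_\tau)}$. Since $\phi$ is $L$-Lipschitz and $\abs{Y(y,y_\tau)} \leq Y$, each $\psi_\tau$ is $LY$-Lipschitz (with the Lipschitz constant uniform in $\tau$ and in the data).

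Next, I would apply the Ledoux--Talagrand contraction inequality to the Rademacher average inside, with the one-dimensional ``coordinate'' indexed by $\tau$ being the scalar $h(\vecx,\vecx_\tau)$. This yields, conditional on the data,
\begin{align*}
\EE{\vecsigma}{\underset{h \in \H}{\sup}\ \frac{1}{n}\sum_{\tau=1}^n \epsilon_\tau\, \psi_\tau(h(\vecx,\vecx_\tau))} \leq LY \cdot \EE{\vecsigma}{\underset{h \in \H}{\sup}\ \frac{1}{n}\sum_{\tau=1}^n \epsilon_\tau\, h(\vecx,\vecx_\tau)}.
\end{align*}
Taking expectation over the data on both sides delivers $\RR_n(\ell\circ\H) \leq LY\,\RR_n(\H)$, which is the desired inequality.

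The main subtlety is that the classical statement of the contraction principle requires $\psi_\tau(0)=0$, which need not hold here. I would handle this by centering: replace $\psi_\tau$ with $\tilde\psi_\tau(u) = \psi_\tau(u) - \psi_\tau(0)$, which is still $LY$-Lipschitz and vanishes at zero. The difference contributes a term of the form $\frac{1}{n}\sum_\tau \epsilon_\tau\,\psi_\tau(0)$ inside the supremum; since this term does not depend on $h$, it can be pulled out of $\sup_h$, and then $\EE{\vecsigma}{\frac{1}{n}\sum_\tau \epsilon_\tau\,\psi_\tau(0)} = 0$ by independence and zero mean of the Rademacher variables. Thus the centering has no effect on the bound, and the rest of the argument goes through. (Alternatively, one can invoke a version of the contraction inequality, such as that of Meir and Zhang, which does not require vanishing at zero, bypassing this step entirely.) No other step is expected to be nontrivial; the argument is essentially a one-line application of contraction once the data has been conditioned upon.
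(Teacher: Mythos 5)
Your proposal is correct and follows essentially the same route as the paper's proof: condition on the data, absorb the factor $Y(y,y_\tau)$ into a composed map that is $LY$-Lipschitz, handle the nonvanishing value at zero by centering with $\phi(0)$ (which disappears since the Rademacher variables have zero mean and the term is independent of $h$), and apply the Ledoux--Talagrand contraction inequality to the empirical Rademacher average before taking expectations over the data. No gaps; this matches the paper's argument in Appendix~\ref{appsec:proof-thm-contraction-rademacher}.
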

The proof uses standard contraction inequalities and is given in Appendix~\ref{appsec:proof-thm-contraction-rademacher}. This reduces our task to computing the values of $\RR_n(\H)$ which we do using a two stage proof technique (see Appendix~\ref{appsec:apps-supp}). For any subset $X$ of a Banach space and any norm $\norm{\cdot}_p$, we define $\norm{X}_p := \underset{\vecx \in X}{\sup} \norm{\vecx}_p$. Let the domain $\X \subset \R^d$.

\textbf{AUC maximization} \cite{oam-icml}: the goal here is to maximize the area under the ROC curve for a linear classification problem where the hypothesis space $\W \subset \R^d$. We have $h_{\vecw}(\vecx,\vecx') = \vecw^\top\vecx-\vecw^\top\vecx'$ and $\ell(h_\vecw,\vecz,\vecz') = \phi\br{(y-y')h_{\vecw}(\vecx,\vecx')}$ where $\phi$ is the hinge loss.
In case our classifiers are $L_p$ regularized for $p > 1$, we can show that $\RR_n(\W) \leq 2\norm{\X}_q\norm{\W}_p\sqrt{\frac{q - 1}{n}}$ where $q = p/(p-1)$. Using the sparsity promoting $L_1$ regularizer gives us $\RR_n(\W) \leq 2\norm{\X}_\infty \norm{\W}_1\sqrt{\frac{e\log d}{n}}$. Note that we obtain dimension independence, for example when the classifiers are $L_2$ regularized which allows us to bound the Rademacher complexities of kernelized function classes for bounded kernels as well.

\textbf{Metric learning} \cite{reg-metric-learn}: the goal here is to learn a Mahalanobis metric $M_\vecW(\vecx,\vecx') = (\vecx-\vecx')^\top\vecW(\vecx-\vecx')$ using the loss function $\ell(\vecW,\vecz,\vecz') = \phi\br{yy'\br{1- M_\vecW^2(\vecx,\vecx')}}$ for a hypothesis class $\W \subset \R^{d\times d}$. In this case it is possible to use a variety of mixed norm $\norm{\cdot}_{p,q}$ and Schatten norm $\norm{\cdot}_{S(p)}$ regularizations on matrices in the hypothesis class. In case we use trace norm regularization on the matrix class, we get $\RR_n(\W) \leq \norm{\X}_2^2\norm{\W}_{S(1)}\sqrt{\frac{e\log d}{n}}$. The $(2,2)$-norm regularization offers a dimension independent bound $\RR_n(\W) \leq \norm{\X}_2^2\norm{\W}_{2,2}\sqrt{\frac{1}{n}}$. The mixed $(2,1)$-norm regularization offers $\RR_n(\W) \leq \norm{\X}_2\norm{\X}_\infty \norm{\W}_{2,1}\sqrt{\frac{e\log d}{n}}$.

\textbf{Multiple kernel learning} \cite{two-stage-mkl-general}: the goal here is to improve the SVM classification algorithm by learning a \emph{good} kernel $K$ that is a positive combination of \emph{base} kernels $K_1,\ldots,K_p$ i.e. $K_{\vecmu}(\vecx,\vecx')=\sum_{i=1}^p\vecmu_iK_i(\vecx,\vecx')$ for some $\vecmu \in \R^p, \vecmu \geq 0$. The base kernels are bounded, i.e. for all $i$, $\abs{K_i(\vecx,\vecx')} \leq \kappa^2$ for all $\vecx,\vecx' \in \X$ The notion of goodness used here is the one proposed by \citet{good-sim} and involves using the loss function $\ell(\vecmu,\vecz,\vecz') = \phi\br{yy'K_{\vecmu}(\vecx,\vecx')}$ where $\phi(\cdot)$ is a margin loss function meant to encode some notion of alignment. The two hypothesis classes for the combination vector $\vecmu$ that we study are the $L_1$ regularized unit simplex $\Delta(1) = \bc{\vecmu : \norm{\vecmu}_1 = 1, \vecmu \geq 0}$ and the $L_2$ regularized unit sphere $\S_2(1) = \bc{\vecmu : \norm{\vecmu}_2 = 1, \vecmu \geq 0}$. We are able to show the following Rademacher complexity bounds for these classes: $\RR_n(\S_2(1)) \leq \kappa^2\sqrt{\frac{p}{n}}$ and $\RR_n(\Delta(1)) \leq \kappa^2\sqrt{\frac{e\log p}{n}}$.

The details of the Rademacher complexity derivations for these problems and other examples such as similarity learning can be found in Appendix~\ref{appsec:apps-supp}.

\section{\olp: Online Learning with Pairwise Loss Functions}
\label{sec:algo}


\begin{algorithm}[t]
	\caption{\small \mrs: Stream Subsampling with Replacement}
	\label{algo:rs-x}
	\begin{algorithmic}[1]
		\small{
			\REQUIRE Buffer $B$, new point $\vecz_t$, buffer size $s$, timestep $t$.
			\IF[There is space]{$|B| < s$}
				\STATE $B \leftarrow B \cup \bc{\vecz_t}$
			\ELSE[Overflow situation]
				\IF[Repopulation step]{$t = s+1$}
					\STATE $\text{TMP} \leftarrow B \cup \bc{\vecz_t}$
					\STATE Repopulate $B$ with $s$ points sampled uniformly with replacement from TMP.
				\ELSE[Normal update step]
					\STATE Independently, replace each point of $B$ with $\vecz_t$ with probability $1/t$.
				\ENDIF
			\ENDIF
		}
	\end{algorithmic}
\end{algorithm}

\begin{algorithm}[t]
	\caption{\small \olp: Online Learning with Pairwise Loss Functions}
	\label{algo:olp}
	\begin{algorithmic}[1]
		\small{
			\REQUIRE Step length scale $\eta$, Buffer size $s$
			\ENSURE An ensemble $\vecw_2,\ldots,\vecw_n \in \W$ with low regret
			\STATE $\vecw_0 \leftarrow \veczero$, $B \leftarrow \phi$
			\FOR{$t = 1$ \TO $n$}
				\STATE Obtain a training point $\vecz_t$
				\STATE Set step length $\eta_t \leftarrow \frac{\eta}{\sqrt t}$
				\STATE $\vecw_t \leftarrow \Pi_\W\bs{\vecw_{t-1} +  \frac{\eta_t}{\abs{B}}\sum_{\vecz \in B}\nabla_\vecw\ell(\vecw_{t-1},\vecz_t,\vecz)}$\newline
				\mbox{}\hfill\COMMENT{$\Pi_\W$ projects onto the set $\W$}
				\STATE $B \leftarrow \text{Update-buffer}(B,\vecz_t,s,t)$\COMMENT{using \mrs}
			\ENDFOR
			\RETURN{$\vecw_2,\ldots,\vecw_n$}
		}
	\end{algorithmic}
\end{algorithm}

In this section, we present an online learning algorithm for learning with pairwise loss functions in a finite buffer setting. The key contribution in this section is a buffer update policy that when combined with a variant of the GIGA algorithm \cite{zinkevich} allows us to give high probability regret bounds.



In previous work, \citet{oam-icml} presented an online learning algorithm that uses finite buffers with the \rsorig policy and proposed an \emph{all-pairs} regret bound. The \rsorig policy ensures, over the randomness used in buffer updates, that at any given time, the buffer contains a uniform sample from the preceding stream. Using this property, \citep[Lemma 2]{oam-icml} claimed that $\E{\Lbuft(h_{t-1})} = \hat\L_t(h_{t-1})$ where the expectation is taken over the randomness used in buffer construction. However, a property such as $\E{\Lbuft(h)} = \hat\L_t(h)$ holds only for functions $h$ that are either fixed or obtained independently of the random variables used in buffer updates (over which the expectation is taken). Since $h_{t-1}$ is learned from points in the buffer itself, the above property, and consequently the regret bound, does not hold.

We remedy this issue by showing a relatively weaker claim; we show that with high probability we have $\hat\L_t(h_{t-1}) \leq \Lbuft(h_{t-1}) + \epsilon$. At a high level, this claim is similar to showing uniform convergence bounds for $\Lbuft$. However, the reservoir sampling algorithm is not particularly well suited to prove such uniform convergence bounds as it essentially performs sampling without replacement (see Appendix~\ref{appsec:regret-rs} for a discussion). We overcome this hurdle by proposing a new buffer update policy \mrs (see Algorithm~\ref{algo:rs-x}) that, at each time step, guarantees $s$ i.i.d.~samples from the preceding stream (see Appendix~\ref{appsec:rs-x-analysis} for a proof).

Our algorithm uses this buffer update policy in conjunction with an online learning algorithm \olp (see Algorithm~\ref{algo:olp}) that is a variant of the well-known GIGA algorithm \cite{zinkevich}. We provide the following \emph{all-pairs} regret guarantee for our algorithm: 
\begin{thm}
\label{thm:rs-x-regret}
Suppose the \olp algorithm working with an $s$-sized buffer generates an ensemble $\vecw_1,\ldots,\vecw_{n-1}$. Then with probability at least $1 - \delta$,
\[
\frac{\Rk_n}{n-1} \leq \O{C_d\sqrt{\frac{\log \frac{n}{\delta}}{s}} + \sqrt{\frac{1}{n-1}}}
\]
\end{thm}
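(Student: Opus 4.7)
The plan is to prove regret against the finite-buffer penalties $\Lbuft$ using standard online convex optimization machinery, and then bridge to the all-pairs penalties $\hat\L_t$ via a uniform deviation bound that exploits the i.i.d.~buffer guarantee of \mrs. The first step is a GIGA-style analysis: since \olp is projected gradient descent \cite{zinkevich} applied to the convex sequence $\Lbuft$ with step sizes $\eta_t=\eta/\sqrt{t}$, using the boundedness of $\W$ and of $\nabla\ell$ gives a finite-buffer regret $\Rkbufn := \sum_{t=2}^n \Lbuft(\vecw_{t-1}) - \inf_\vecw \sum_{t=2}^n \Lbuft(\vecw) \leq \O{\sqrt{n}}$. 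This step alone accounts for the $\sqrt{1/(n-1)}$ term in the final bound.

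The next ingredient is the defining property of \mrs (which I would prove separately in an appendix): conditioned on $z_1,\ldots,z_{t-1}$, the buffer $B_t$ consists of $s$ independent samples drawn uniformly with replacement from $\{z_1,\ldots,z_{t-1}\}$. Consequently, for any $\vecw$ measurable with respect to the stream alone, $\EE{B_t}{\Lbuft(\vecw)\mid z_1,\ldots,z_t} = \hat\L_t(\vecw)$. I would then promote this pointwise identity to a uniform deviation bound: fixing $z_1,\ldots,z_t$ and treating the $s$ buffer draws as an i.i.d.~sample, Rademacher-complexity arguments identical in spirit to those in Section~\ref{sec:apps} (applied to the $\vecz$-slot of $\ell(\vecw,\vecz_t,\vecz)$ with $\vecz_t$ frozen) yield $\sup_{\vecw\in\W}\abs{\Lbuft(\vecw) - \hat\L_t(\vecw)} \leq \O{C_d/\sqrt{s} + B\sqrt{\log(n/\delta)/s}}$ with conditional probability at least $1-\delta/n$. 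A union bound over $t = 2, \ldots, n$ makes the deviation hold simultaneously across all rounds with probability at least $1-\delta$.

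The final combination is short. On the good event, applying the deviation bound both at the learnt iterates $\vecw_{t-1}$ and at the stream-measurable minimizer $\vecw^\star := \arg\min_\vecw\sum_t\hat\L_t(\vecw)$ yields $\sum_t\hat\L_t(\vecw_{t-1}) \leq \sum_t\Lbuft(\vecw_{t-1}) + n\cdot\O{C_d/\sqrt{s}+B\sqrt{\log(n/\delta)/s}}$ and $\sum_t\Lbuft(\vecw^\star) \leq \sum_t\hat\L_t(\vecw^\star) + n\cdot\O{C_d/\sqrt{s}+B\sqrt{\log(n/\delta)/s}}$. Subtracting, using $\sum_t\Lbuft(\vecw^\star)\geq\inf_\vecw\sum_t\Lbuft(\vecw)$, and invoking Step 1 gives $\Rk_n \leq \Rkbufn + 2n\cdot\O{C_d/\sqrt{s}+B\sqrt{\log(n/\delta)/s}}$; dividing through by $n-1$ produces the stated bound.

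The hard part is the uniform convergence step. The naive identity $\E{\Lbuft(\vecw_{t-1})} = \hat\L_t(\vecw_{t-1})$ relied upon in \cite{oam-icml} is invalid because $\vecw_{t-1}$ is itself buffer-dependent, so a supremum over $\vecw\in\W$ is forced on us; for that supremum to concentrate at rate $1/\sqrt{s}$, the buffer slots must be genuinely independent. This is precisely why reservoir sampling (which samples \emph{without} replacement and thereby exhibits cross-slot dependence) will not do: we need \mrs, whose i.i.d.~conditional distribution is what allows the Rademacher-based control to yield the advertised $C_d\sqrt{\log(n/\delta)/s}$ rate.
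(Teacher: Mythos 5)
Your proposal is correct and follows essentially the same route as the paper: a GIGA/Zinkevich analysis giving the $\O{\sqrt{n}}$ finite-buffer regret, the conditional i.i.d.-with-replacement property of \mrs, a McDiarmid-plus-Rademacher uniform deviation bound (union-bounded over rounds, conditional on the stream) bridging $\Lbuft$ to $\hat\L_t$ at the learnt iterates and at the stream-measurable all-pairs minimizer, and the final combination — including the correct diagnosis of why the supremum over $\W$ is needed to fix the flaw in the earlier finite-buffer argument. The only cosmetic differences are that the paper carries out the concentration and symmetrization over the auxiliary Bernoulli variables that drive the buffer updates (equivalent to your direct treatment of the $s$ buffer draws) and states two one-sided lemmas instead of your single two-sided supremum bound.
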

\vspace*{-1ex}
See Appendix~\ref{appsec:proof-thm-rs-x-regret} for the proof. A drawback of our bound is that it offers sublinear regret only for buffer sizes $s=\omega(\log n)$. A better regret bound for constant $s$ or a lower-bound on the regret is an open problem. 

\section{Experimental Evaluation}
\label{sec:exp}


\begin{figure}[t]
	\centering
	\hspace*{-7ex}
	\subfigure[Sonar\hspace*{-6ex}]{
		\includegraphics[width=0.54\linewidth]{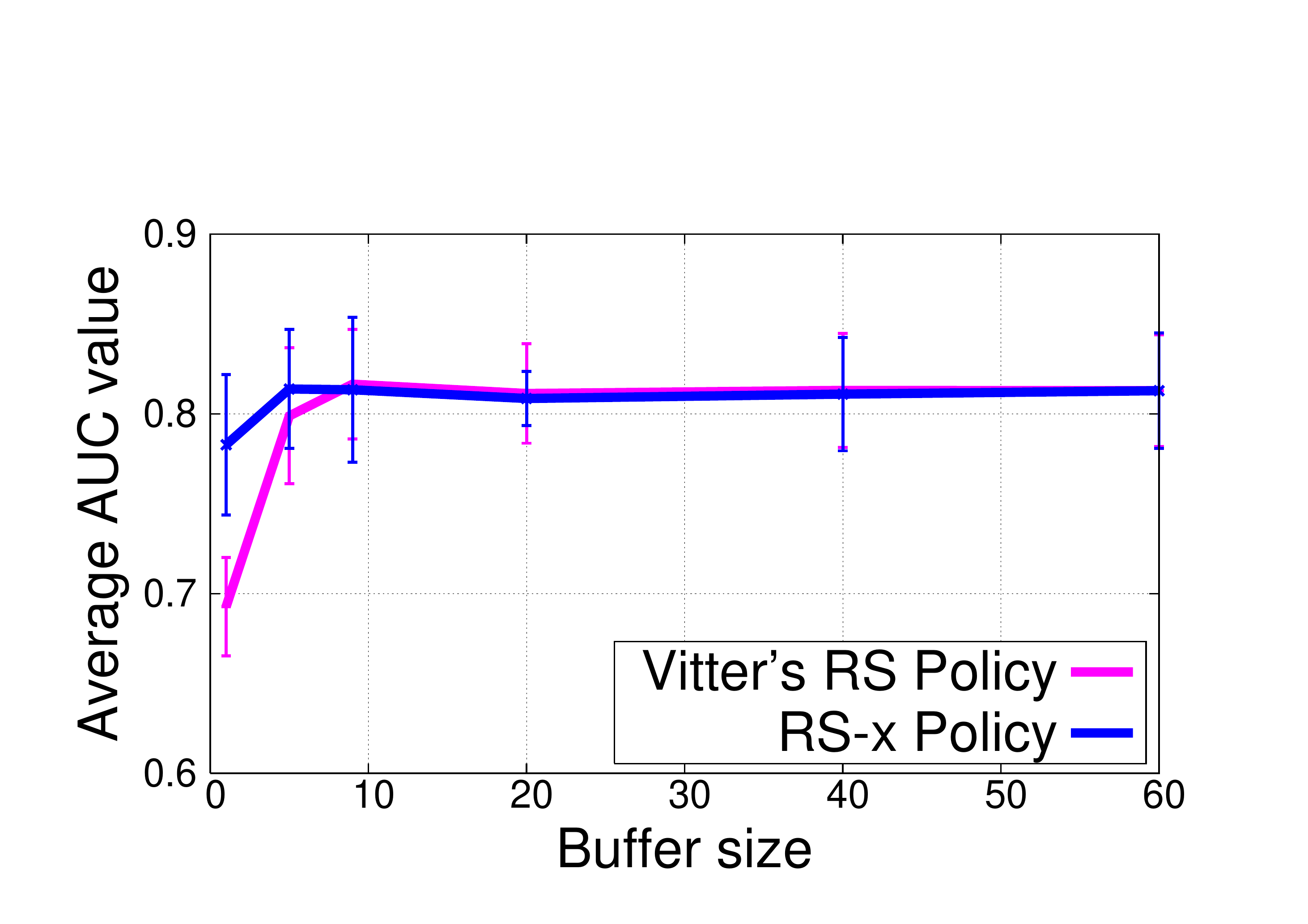}\hspace*{-6ex}
		\label{subfig:sonar-res}
	}
	\subfigure[Segment\hspace*{-6ex}]{
		\includegraphics[width=0.54\linewidth]{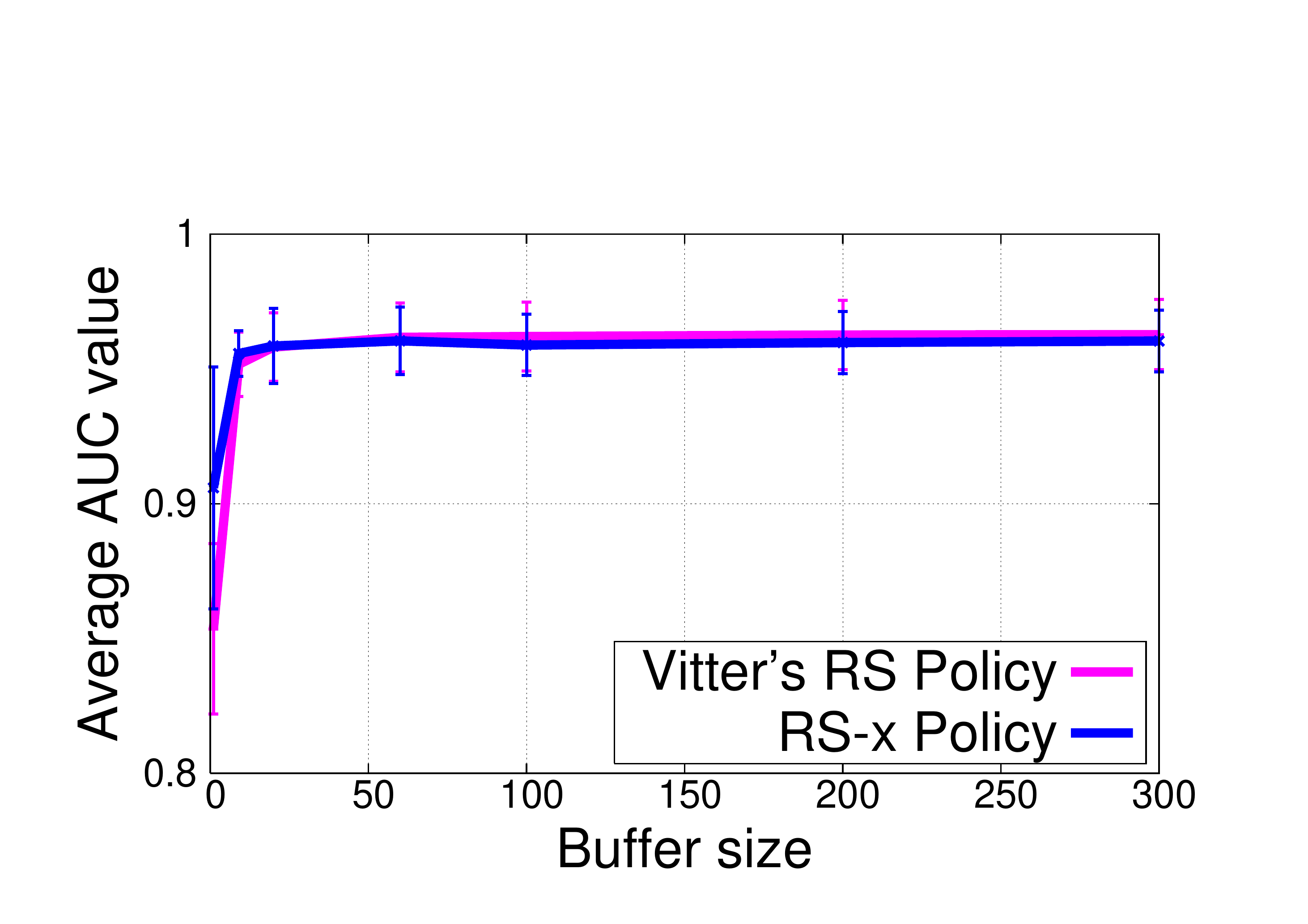}\hspace*{-6ex}
		\label{subfig:segment-res}
	}\\\vspace*{-6ex}\hspace*{-6ex}
	\subfigure[IJCNN\hspace*{-6ex}]{
		\includegraphics[width=0.54\linewidth]{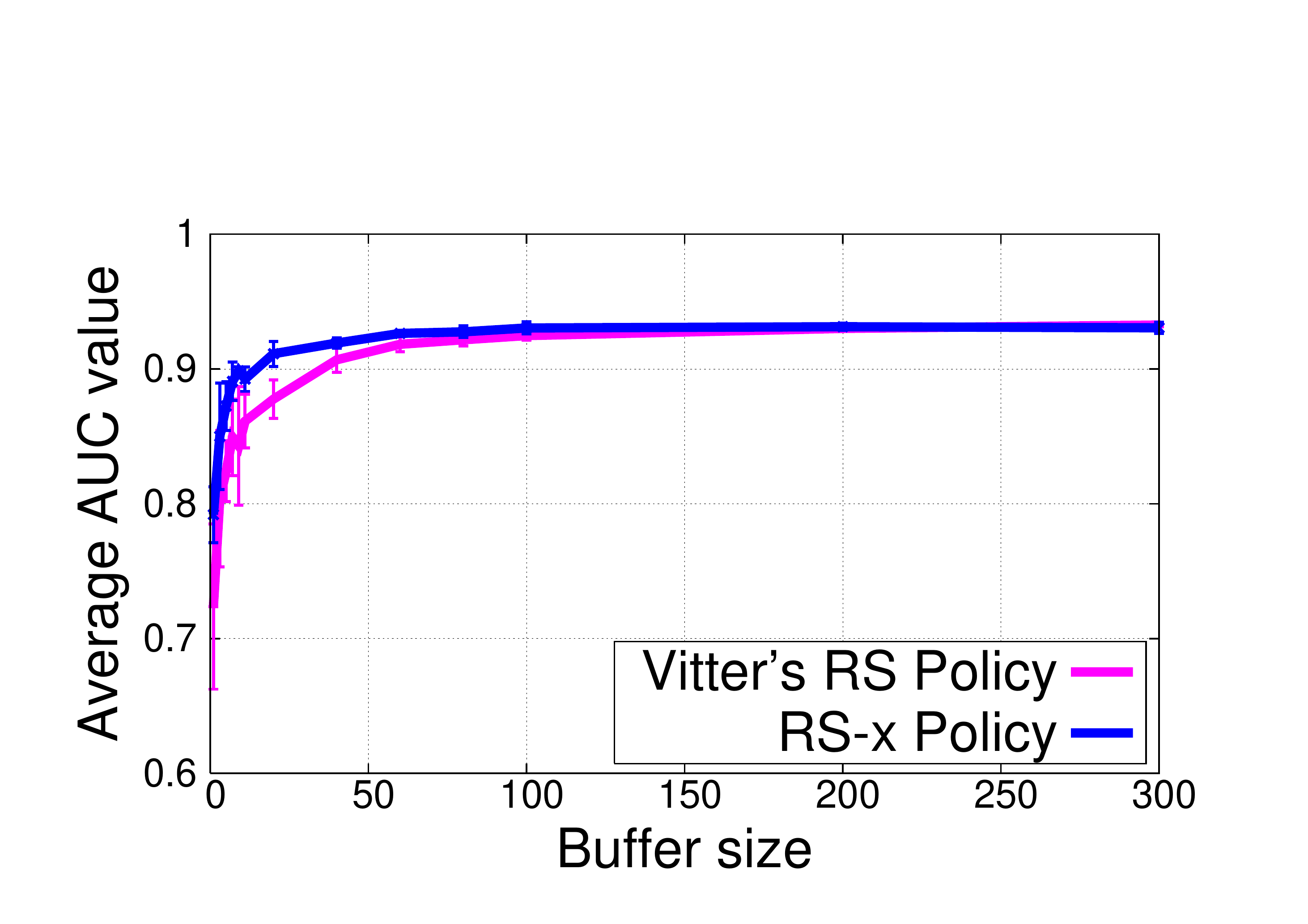}\hspace*{-6ex}
		\label{subfig:ijcnn-res}
	}
	\subfigure[Covertype\hspace*{-6ex}]{
		\includegraphics[width=0.54\linewidth]{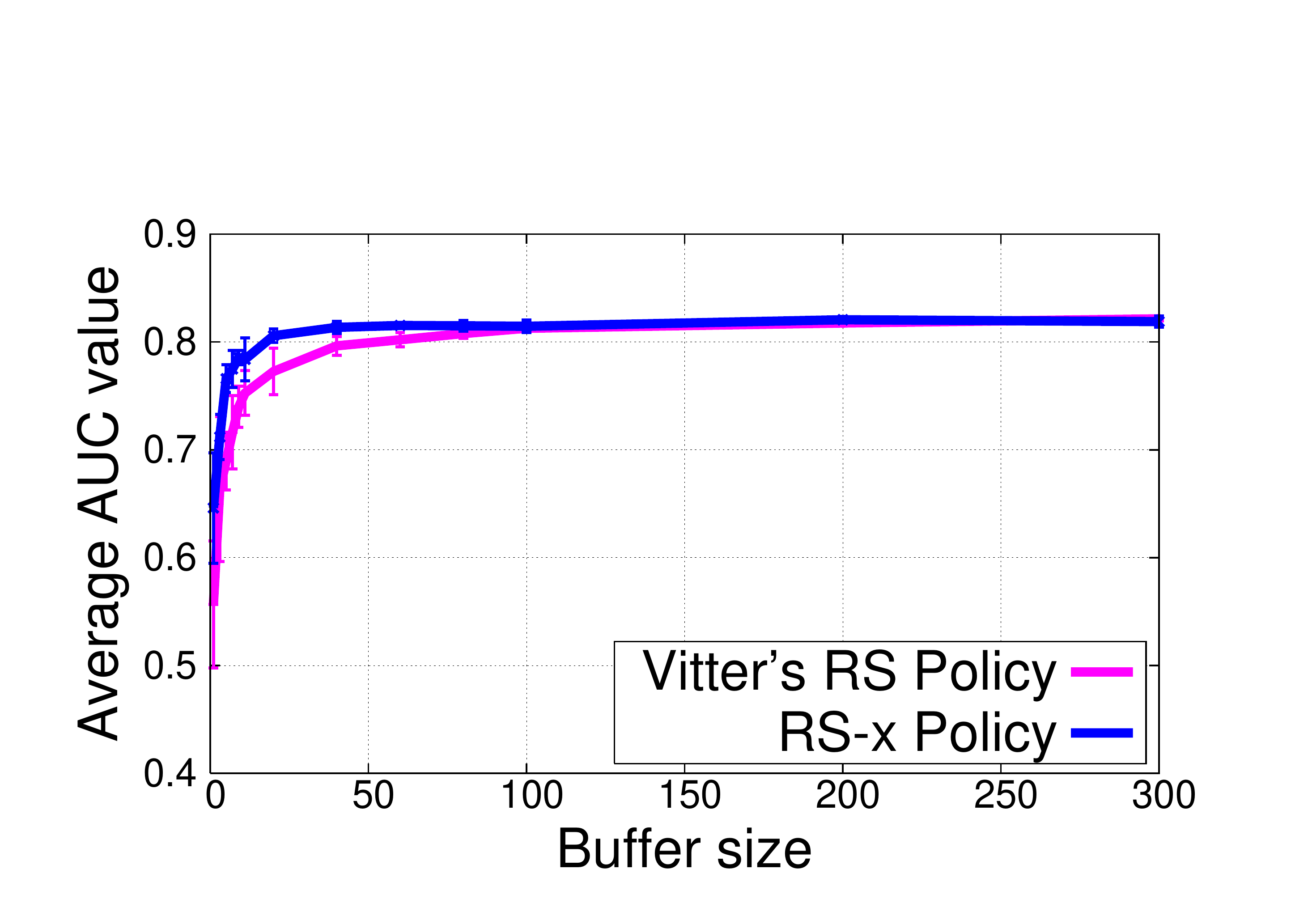}\hspace*{-6ex}
		\label{subfig:covtype-res}
	}
	\caption{Performance of \olp (using \mrs\!\!) and OAM$_{\text{gra}}$ (using \rsorig\!\!) by \cite{oam-icml} on AUC maximization tasks with varying buffer sizes.}
	\label{fig:auc-expts}
\end{figure}

In this section we present experimental evaluation of our proposed \olp algorithm. We stress that the aim of this evaluation is to show that our algorithm, that enjoys high confidence regret bounds, also performs competitively in practice with respect to the OAM$_{\text{gra}}$ algorithm proposed by \citet{oam-icml} since our results in Section~\ref{sec:finite-buffer} show that OAM$_{\text{gra}}$ does enjoy good generalization guarantees despite the lack of an \emph{all-pairs} regret bound.



In our experiments, we adapted the \olp algorithm to the AUC maximization problem and compared it with OAM$_{\text{gra}}$ on 18 different benchmark datasets. We used $60\%$ of the available data points up to a maximum of 20000 points to train both algorithms. We refer the reader to Appendix~\ref{appsec:rs-x-implementation} for a discussion on the implementation of the \mrs algorithm. Figure~\ref{fig:auc-expts} presents the results of our experiments on 4 datasets across 5 random training/test splits. Results on other datasets can be found in Appendix~\ref{appsec:expt-supp}. The results demonstrate that \olp performs competitively to OAM$_{\text{gra}}$ while in some cases having slightly better performance for small buffer sizes.

\vspace{-1.5ex}
\section{Conclusion}
In this paper we studied the generalization capabilities of online learning algorithms for pairwise loss functions from several different perspectives.
Using the method of \emph{Symmetrization of Expectations}, we first provided sharp online to batch conversion bounds for algorithms that offer \emph{all-pairs} regret bounds. Our results for bounded and strongly convex loss functions closely match their first order counterparts. We also extended our analysis to algorithms that are only able to provide \emph{finite-buffer} regret bounds using which we were able to explain the good empirical performance of some existing algorithms. Finally we presented a new memory-efficient online learning algorithm that is able to provide \emph{all-pairs} regret bounds in addition to performing well empirically.

Several interesting directions can be pursued for future work, foremost being the development of online learning algorithms that can guarantee sub-linear regret at constant buffer sizes or else a regret lower bound for finite buffer algorithms. Secondly, the idea of a \emph{stream-aware} buffer update policy is especially interesting both from an empirical as well as theoretical point of view and would possibly require novel proof techniques for its analysis. Lastly, scalability issues that arise when working with higher order loss functions also pose an interesting challenge.

%

\section*{Acknowledgment}
The authors thank the anonymous referees for comments that improved the presentation of the paper.
PK is supported by the Microsoft Corporation and Microsoft Research India under a Microsoft Research India Ph.D. fellowship award.


\bibliographystyle{icml2013}
\bibliography{ref}


\appendix

\section{Proof of Lemma~\ref{lem:bounded-forward}}
\label{appsec:proof-lem-bounded-forward}

\begin{lem}[Lemma~\ref{lem:bounded-forward} restated]
Let $h_1,\ldots,h_{n-1}$ be an ensemble of hypotheses generated by an online learning algorithm working with a bounded loss function $\ell : \H \times \ZZ \times \ZZ \rightarrow [0,B]$. Then for any $\delta > 0$, we have with probability at least $1 - \delta$,
\begin{eqnarray*}
\lefteqn{\frac{1}{n-1}\sum_{t=2}^n\L(h_{t-1}) \leq \frac{1}{n-1}\sum_{t=2}^n\hat\L_t(h_{t-1})}\\
									 	&& \mbox{} + \frac{2}{n-1}\sum_{t=2}^n\RR_{t-1}(\ell\circ\H) + 3B\sqrt\frac{\log\frac{n}{\delta}}{n-1}.
\end{eqnarray*}
\end{lem}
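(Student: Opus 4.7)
The strategy is to insert an intermediate quantity that, on a per-round basis, isolates a martingale difference sequence from a residual that can be controlled by uniform convergence. Specifically, define
\[
U_t := \frac{1}{t-1}\sum_{\tau=1}^{t-1}\EE{\vecz}{\ell(h_{t-1},\vecz,\vecz_\tau)},
\]
which, since $h_{t-1}$ is measurable with respect to $\F_{t-1} := \sigma(\vecz_1,\ldots,\vecz_{t-1})$ and $\vecz_t$ is drawn independently of $\F_{t-1}$, is exactly $\E{\hat\L_t(h_{t-1}) \mid \F_{t-1}}$. I would then decompose the per-round excess risk as
\[
\L(h_{t-1}) - \hat\L_t(h_{t-1}) = \underbrace{\bigl(\L(h_{t-1}) - U_t\bigr)}_{(I_t)} + \underbrace{\bigl(U_t - \hat\L_t(h_{t-1})\bigr)}_{(II_t)}
\]
and bound the averaged sums $\frac{1}{n-1}\sum_t (I_t)$ and $\frac{1}{n-1}\sum_t (II_t)$ separately, combining the results by a union bound at the end.

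The sequence $\{(II_t)\}_{t=2}^n$ is by construction a martingale difference sequence with respect to $\F_t$, and each term lies in $[-B,B]$. An application of the Azuma--Hoeffding inequality then controls $\frac{1}{n-1}\sum_t (II_t)$ by something of order $B\sqrt{\log(1/\delta)/(n-1)}$, which, after the union bound mentioned below, becomes the $B\sqrt{\log(n/\delta)/(n-1)}$ piece of the stated bound.

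For $(I_t)$ I would upper bound by the uniform deviation
\[
G_t := \sup_{h\in\H}\!\Bigl(\L(h) - \tfrac{1}{t-1}\sum_{\tau=1}^{t-1}F_h(\vecz_\tau)\Bigr),\qquad F_h(\vecz') := \EE{\vecz}{\ell(h,\vecz,\vecz')}.
\]
This is precisely the \emph{symmetrization of expectations} step advertised in the introduction: pre-integrating over the head variable $\vecz$ turns the pairwise loss into a \emph{univariate} function $F_h$ of $\vecz_\tau$, so the standard ghost-sample argument now goes through cleanly. I would then symmetrize $G_t$ in the usual way, yielding $\E{G_t} \le 2\,\E{\sup_h \frac{1}{t-1}\sum_\tau \epsilon_\tau F_h(\vecz_\tau)}$, and move $\EE{\vecz}{\cdot}$ past the supremum using Jensen's inequality to recognise the right-hand side as $2\RR_{t-1}(\ell\circ\H)$. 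Since changing a single $\vecz_\tau$ alters $G_t$ by at most $B/(t-1)$, McDiarmid's inequality coupled with a union bound over $t=2,\dots,n$ gives, with probability at least $1-\delta/2$, that $G_t \le 2\RR_{t-1}(\ell\circ\H) + B\sqrt{\log(2n/\delta)/(2(t-1))}$ for every $t$; averaging and using $\sum_{t=2}^n (t-1)^{-1/2} = O(\sqrt{n-1})$ produces the Rademacher term plus a concentration term of the stated order.

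The main obstacle is the symmetrization itself, exactly as flagged in the text: a naive ghost-sample argument applied to $\ell(h_{t-1},\vecz_t,\vecz_\tau)$ while keeping $\vecz_t$ alive leaves expressions of the form $\ell(h,\vecz_t,\tilde\vecz_\tau) - \ell(h,\tilde\vecz_t,\vecz_\tau)$ which cannot be relabelled by Rademacher signs. Committing to $\EE{\vecz}{\cdot}$ \emph{before} symmetrizing, and only invoking Jensen at the end to recover the bivariate Rademacher complexity $\RR_{t-1}(\ell\circ\H)$, is the crucial manoeuvre that sidesteps this difficulty. A final union bound between the martingale and uniform-convergence events, with each event allotted $\delta/2$ failure probability, assembles the three terms (empirical penalty, averaged Rademacher complexity, $3B\sqrt{\log(n/\delta)/(n-1)}$) in the claimed inequality.
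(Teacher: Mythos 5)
Your proposal is correct and follows essentially the same route as the paper's own proof: your $U_t$ is exactly the paper's conditional expectation $\tilde\L_t(h_{t-1})$, the martingale part $(II_t)$ is handled by Azuma--Hoeffding just as the paper handles $Q_t$, and your uniform deviation $G_t$ over the pre-integrated univariate functions $F_h(\vecz') = \EE{\vecz}{\ell(h,\vecz,\vecz')}$ is precisely the paper's symmetrization-of-expectations step, finished off identically with McDiarmid's inequality, a union bound over $t$, and the bound $\sum_{t=2}^n (t-1)^{-1/2} = \O{\sqrt{n-1}}$. There is no substantive difference to report.
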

\begin{proof}
As a first step, we decompose the excess risk in a manner similar to \cite{online-batch-pairwise}. For any $h \in \H$ let
\[
\tilde\L_t(h) := \EE{\vecz_t}{\left.\hat\L_t(h)\right|Z^{t-1}}.
\]
This allows us to decompose the excess risk as follows: \vspace*{-8ex}

\begin{align*}
&\frac{1}{n-1}\sum_{t=2}^n\L(h_{t-1}) - \hat\L_t(h_{t-1})\\ &= \frac{1}{n-1}\br{\sum_{t=2}^n\underbrace{\L(h_{t-1}) - \tilde\L_t(h_{t-1})}_{P_t}
											+\underbrace{\tilde\L(h_{t-1}) - \hat\L_t(h_{t-1})}_{Q_t}}.
\end{align*}
By construction, we have $\EE{\vecz_t}{\left.Q_t\right|Z^{t-1}} = 0$ and hence the sequence $Q_2,\ldots,Q_n$ forms a martingale difference sequence. Since $\abs{Q_t} \leq B$ as the loss function is bounded, an application of the Azuma-Hoeffding inequality shows that with probability at least $1 - \delta$
\begin{eqnarray}
\frac{1}{n-1}\sum_{t=2}^n Q_t \leq B\sqrt\frac{2\log\frac{1}{\delta}}{n-1}.
\label{eq:Q-t-bound}
\end{eqnarray}
We now analyze each term $P_t$ individually. By linearity of expectation, we have for a ghost sample $\tilde Z^{t-1} = \bc{\tilde\vecz_1,\ldots,\tilde\vecz_{t-1}}$,
\begin{align}
\L(h_{t-1}) = \EE{\tilde Z^{t-1}}{\frac{1}{t-1}\sum_{\tau = 1}^{t-1}\EE{\vecz}{\ell(h_{t-1},\vecz,\tilde\vecz_\tau)}}.
\label{eq:second-to-first-order}
\end{align}
The expression of $\L(h_{t-1})$ as a nested expectation is the precursor to performing symmetrization with expectations and plays a crucial role in overcoming coupling problems. This allows us to write $P_t$ as
\begin{eqnarray*}
P_t &=& \EE{\tilde Z^{t-1}}{\frac{1}{t-1}\sum_{\tau = 1}^{t-1}\EE{\vecz}{\ell(h_{t-1},\vecz,\tilde\vecz_\tau)}} - \tilde\L_t(h_{t-1})\\
	&\leq& \underbrace{\underset{h \in \H}{\sup}\bs{\EE{\tilde Z^{t-1}}{\frac{1}{t-1}\sum_{\tau = 1}^{t-1}\EE{\vecz}{\ell(h,\vecz,\tilde\vecz_\tau)}} - \tilde\L_t(h)}}_{g_t(\vecz_1,\ldots,\vecz_{t-1})}.
\end{eqnarray*}
Since $\tilde\L_t(h) = \EE{\vecz}{\left.\frac{1}{t-1}\sum_{\tau = 1}^{t-1}\ell(h,\vecz,\vecz_\tau)\right|Z^{t-1}}$ and $\ell$ is bounded, the expression $g_t(\vecz_1,\ldots,\vecz_{t-1})$ can have a variation of at most $B/(t-1)$ when changing any of its $(t-1)$ variables. Hence an application of McDiarmid's inequality gives us, with probability at least $1 - \delta$,
\[
g_t(\vecz_1,\ldots,\vecz_{t-1}) \leq \EE{Z^{t-1}}{g_t(\vecz_1,\ldots,\vecz_{t-1})} + B\sqrt\frac{\log\frac{1}{\delta}}{2(t-1)}.
\]
For any $h \in \H, \vecz' \in \Z$, let $\wp(h,\vecz') := \frac{1}{t-1}\EE{\vecz}{\ell(h,\vecz,\vecz')}$. Then we can write $\EE{Z^{t-1}}{g(\vecz_1,\ldots,\vecz_{t-1})}$ as
\begin{align*}
& \EE{Z^{t-1}}{\underset{h \in \H}{\sup}\bs{\EE{\tilde Z^{t-1}}{\sum_{\tau = 1}^{t-1}\wp(h,\tilde\vecz_\tau)} - \sum_{\tau = 1}^{t-1}\wp(h,\vecz_\tau)}}\\
\leq {} & \EE{Z^{t-1},\tilde Z^{t-1}}{\underset{h \in \H}{\sup}\bs{\sum_{\tau = 1}^{t-1}\wp(h,\tilde\vecz_\tau) - \sum_{\tau = 1}^{t-1}\wp(h,\vecz_\tau)}}\\ 
   = {} & \EE{Z^{t-1},\tilde Z^{t-1},\{\epsilon_\tau\}}{\underset{h \in \H}{\sup}\bs{\sum_{\tau = 1}^{t-1}\epsilon_\tau\br{\wp(h,\tilde\vecz_\tau) - \wp(h,\vecz_\tau)}}}\\
\leq {} & \frac{2}{t-1}\EE{Z^{t-1},\{\epsilon_\tau\}}{\underset{h \in \H}{\sup}\bs{\sum_{\tau = 1}^{t-1}\epsilon_\tau\EE{\vecz}{\ell(h,\vecz,\vecz_\tau)}}}\\
\leq {} & \frac{2}{t-1}\EE{\vecz,Z^{t-1},\{\epsilon_\tau\}}{\underset{h \in \H}{\sup}\bs{\sum_{\tau = 1}^{t-1}\epsilon_\tau\ell(h,\vecz,\vecz_\tau)}}\\
 ={} & 2\RR_{t-1}(\ell\circ\H).
\end{align*}
Note that in the third step, the symmetrization was made possible by the decoupling step in Eq.~\eqref{eq:second-to-first-order} where we decoupled the ``head'' variable $\vecz_t$ from the ``tail'' variables by absorbing it inside an expectation. This allowed us to symmetrize the true and ghost samples $\vecz_\tau$ and $\tilde\vecz_\tau$ in a standard manner.
Thus we have, with probability at least $1 - \delta$,
\[
P_t \leq 2\RR_{t-1}(\ell\circ\H) + B\sqrt\frac{\log\frac{1}{\delta}}{2(t-1)}.
\]
Applying a union bound on the bounds for $P_t, t = 2,\ldots,n$ gives us with probability at least $1 - \delta$,
\begin{align}
\frac{1}{n-1}\sum_{t=2}^n P_t \leq \frac{2}{n-1}\sum_{t=2}^n\RR_{t-1}(\ell\circ\H) + B\sqrt\frac{2\log\frac{n}{\delta}}{n-1}.
\label{eq:P-t-bound}
\end{align}
Adding Equations \eqref{eq:Q-t-bound} and \eqref{eq:P-t-bound} gives us the result.
\end{proof}

\section{Proof of Theorem~\ref{thm:fast-batch}}
\label{appsec:proof-thm-fast-batch}

\begin{thm}[Theorem~\ref{thm:fast-batch} restated]
Let $\F$ be a closed and convex set of functions over $\X$. Let $\wp(f,\vecx) = p(\ip{f}{\phi(\vecx)}) + r(f)$, for a $\sigma$-strongly convex function $r$, be a loss function with $\P$ and $\hat\P$ as the associated population and empirical risk functionals and $f^\ast$ as the population risk minimizer. Suppose $\wp$ is $L$-Lipschitz and $\norm{\phi(\vecx)}_\ast \leq R, \forall \vecx \in \X$.
Then w.p. $1 - \delta$, for any $\epsilon > 0$, we have for all $f\in \F$,
\begin{align*}
\P(f) - \P(f^\ast) \leq {} & (1 + \epsilon)\br{\hat\P(f) - \hat\P(f^\ast)} + \frac{C_\delta}{\epsilon \sigma n}
\end{align*}
where $C_\delta = C_d^2\cdot (4(1 + \epsilon)LR)^2\br{32 + \log(1/\delta)}$ and $C_d$ is the dependence of the Rademacher complexity of the class $\F$ on the input dimensionality $d$.\end{thm}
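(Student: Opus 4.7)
The plan is to adapt the localization-based fast-rate argument of Sridharan--Shalev-Shwartz--Srebro that this theorem explicitly invokes. The crucial structural ingredient is that $\wp(f,\vecx) = p(\ip{f}{\phi(\vecx)}) + r(f)$ is $\sigma$-strongly convex in $f$: the first term is convex as the composition of a convex function with a linear functional, and $r$ is $\sigma$-strongly convex. Consequently $\P$ is $\sigma$-strongly convex. Since $f^\ast$ minimizes $\P$ over the closed convex set $\F$, first-order optimality plus strong convexity gives the \emph{localization inequality}
\[
\norm{f - f^\ast}^2 \leq \frac{2}{\sigma}\br{\P(f) - \P(f^\ast)} \qquad \forall f \in \F,
\]
so any function with small excess risk automatically lies close to $f^\ast$ in norm. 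This is exactly what drives the $1/n$ rate rather than the usual $1/\sqrt n$.

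First I would set up the centered loss class $\mathcal{G} = \bc{\vecx \mapsto \wp(f,\vecx) - \wp(f^\ast,\vecx) : f \in \F}$ and its localized sub-class $\mathcal{G}_r = \bc{g \in \mathcal{G} : \norm{f - f^\ast} \leq r}$. Because $\wp$ is $L$-Lipschitz in its linear argument and $\norm{\phi(\vecx)}_\ast \leq R$, each $g \in \mathcal{G}_r$ is bounded by $LRr$ in absolute value. Applying Ledoux--Talagrand contraction to strip off the scalar nonlinearity $p$, and using the definition of $C_d$ as the dimensional factor in the Rademacher complexity of $\F$, yields $\RR_n(\mathcal{G}_r) \leq c_0\, C_d LR\, r / \sqrt{n}$ for a universal $c_0$. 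A standard symmetrization plus McDiarmid's inequality on the $(LRr)/n$-bounded-differences functional then gives: with probability $\geq 1 - \delta'$, for every $f \in \F$ with $\norm{f - f^\ast} \leq r$,
\[
\P(f) - \P(f^\ast) - \br{\hat\P(f) - \hat\P(f^\ast)} \leq 2\RR_n(\mathcal{G}_r) + LRr\sqrt{\frac{2\log(1/\delta')}{n}}.
\]

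Next I would run a peeling (dyadic shell) argument to lift this local bound to a global one. Partition $\F$ into shells $\F_k = \bc{f \in \F : \norm{f - f^\ast} \in [2^{k-1}r_0, 2^k r_0]}$ (with a harmless base ball for $\norm{f - f^\ast} \leq r_0$) and apply the previous bound on each $\F_k$ with $\delta_k = \delta/(k(k+1))$, union-bounding over $k$. Substituting the localization inequality $\norm{f - f^\ast} \leq \sqrt{2(\P(f)-\P(f^\ast))/\sigma}$ to eliminate the explicit $r$ in each shell's bound collapses the peeling sum into a single inequality
\[
\P(f) - \P(f^\ast) - \br{\hat\P(f) - \hat\P(f^\ast)} \leq C_1\, C_d LR \sqrt{\frac{\br{\P(f)-\P(f^\ast)}\br{32 + \log(1/\delta)}}{\sigma n}},
\]
uniformly in $f \in \F$, where the constant $32$ absorbs the Rademacher universal constants together with the $\sum_k 1/(k(k+1))$ bookkeeping from the union bound. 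The final algebraic step is a weighted AM--GM of the form $\sqrt{ab} \leq \frac{\epsilon}{2(1+\epsilon)}a + \frac{1+\epsilon}{2\epsilon}b$ applied with $a = \P(f)-\P(f^\ast)$, after which collecting the coefficient $(1+\epsilon)$ on $\hat\P(f) - \hat\P(f^\ast)$ and the residual into $\frac{C_\delta}{\epsilon\sigma n}$ recovers exactly $C_\delta = C_d^2(4(1+\epsilon)LR)^2(32 + \log(1/\delta))$.

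The main obstacle I anticipate is the peeling bookkeeping: selecting the base radius $r_0$ and the sequence $\delta_k$ so that the cumulative union-bound logarithms collapse into the single clean $32 + \log(1/\delta)$ factor, and then tracking constants through the weighted AM--GM so that the prefactor lands on $(4(1+\epsilon)LR)^2$ exactly as stated. The strong convexity, contraction, and symmetrization steps are otherwise routine.
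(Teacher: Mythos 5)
Your proposal is correct in spirit, but it is considerably heavier than what the paper actually does: the paper does not re-derive the localized analysis at all. It imports, as a black box, the two-case lemma implicit in the proof of Theorem 1 of \citet{fast-rate-reg-obj} --- with $r = \frac{4L^2R^2C_d^2\br{32+\log(1/\delta)}}{\sigma n}$, either $\M(f)\le 16\br{1+1/\epsilon}^2 r$ and $\M(f)\le\hat\M(f)+4\br{1+1/\epsilon}r$, or $\M(f)> 16\br{1+1/\epsilon}^2 r$ and $\M(f)\le(1+\epsilon)\hat\M(f)$ --- and its only new content is the elementary observation that both cases imply the single inequality $\M(f)\le(1+\epsilon)\br{\hat\M(f)+4\br{1+1/\epsilon}r}$, using in the first case that the right-hand side dominates $\M(f)\ge 0$ and is therefore nonnegative, so it may be multiplied by $(1+\epsilon)$. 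Your plan instead reconstructs the machinery behind that lemma (localization via strong convexity, localized Rademacher complexity via contraction, symmetrization plus McDiarmid, dyadic peeling with $\delta_k=\delta/(k(k+1))$). That is a legitimate route, but everything you flag as the anticipated difficulty --- the shell bookkeeping, collapsing the union-bound logarithms into the single $32+\log(1/\delta)$ factor, landing exactly on $(4(1+\epsilon)LR)^2$ --- is precisely the content of the cited result, so your proposal amounts to re-proving the imported lemma rather than the short argument the paper builds on top of it.

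One step in your final algebra needs repair. After peeling you have, uniformly in $f$, $a\le b+K\sqrt a$ with $a=\P(f)-\P(f^\ast)\ge 0$, $b=\hat\P(f)-\hat\P(f^\ast)$ and $K$ of order $C_dLR\sqrt{\br{32+\log(1/\delta)}/(\sigma n)}$. With your weights this yields $a\br{1-\tfrac{\epsilon}{2(1+\epsilon)}}\le b+\tfrac{1+\epsilon}{2\epsilon}K^2$, i.e.\ a coefficient $\tfrac{2(1+\epsilon)}{2+\epsilon}$ on $b$; replacing that coefficient by $1+\epsilon$ is only valid when $b\ge 0$, and $b$ is negative precisely in the interesting regime (e.g.\ for the empirical risk minimizer). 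The fix is either the paper's device (bound $a$ by a nonnegative right-hand side and multiply the whole right-hand side by $1+\epsilon$) or a different split, $K\sqrt a\le\tfrac{\epsilon}{1+\epsilon}a+\tfrac{1+\epsilon}{4\epsilon}K^2$, which gives $\tfrac{a}{1+\epsilon}\le b+\tfrac{1+\epsilon}{4\epsilon}K^2$ and hence $a\le(1+\epsilon)b+\tfrac{(1+\epsilon)^2}{4\epsilon}K^2$ irrespective of the sign of $b$. With that correction, and the peeling carried out as in \citet{fast-rate-reg-obj}, your route does recover the stated bound.
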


\begin{proof}
We begin with a lemma implicit in the proof of Theorem 1 in \cite{fast-rate-reg-obj}. For the function class $\F$ and loss function $\wp$ as above, define a new loss function $\mu : (f,\vecx) \mapsto \wp(f,\vecx) - \wp(f^\ast,\vecx)$ with $\M$ and $\hat\M$ as the associated population and empirical risk functionals. Let $r = \frac{4L^2R^2C_d^2\br{32 + \log(1/\delta)}}{\sigma n}$. Then we have the following
\begin{lem}
\label{lem:fast_conv}
For any $\epsilon > 0$, with probability at least $1 - \delta$, the following happens
\begin{enumerate}
	\item For all $f \in \F$ such that $\M(f) \leq 16\br{1 + \frac{1}{\epsilon}}^2r$, we have $\M(f) \leq \hat\M(f) + 4\br{1 + \frac{1}{\epsilon}}r$.
	\item For all $f \in \F$ such that $\M(f) > 16\br{1 + \frac{1}{\epsilon}}^2r$, we have $\M(f) \leq (1 + \epsilon)\hat\M(f)$.
\end{enumerate}
\end{lem}
The difference in our proof technique lies in the way we combine these two cases. We do so by proving the following two simple results.
\begin{lem}
For all $f$ s.t. $\M(f) \leq 16\br{1 + \frac{1}{\epsilon}}^2r$, we have $\M(f) \leq (1 + \epsilon)\br{\hat\M(f) + 4\br{1 + \frac{1}{\epsilon}}r}$.
\end{lem}
\begin{proof}
We notice that for all $f \in \F$, we have $\M(f) = \P(f) - \P(f^\ast) \geq 0$. Thus, using Lemma~\ref{lem:fast_conv}, Part 1, we have $\hat\M(f) + 4\br{1 + \frac{1}{\epsilon}}r \geq \M(f) \geq 0$. Since for any $a, \epsilon > 0$, we have $a \leq (1 + \epsilon)a$, the result follows.
\end{proof}
\begin{lem}
For all $f$ s.t. $\M(f) > 16\br{1 + \frac{1}{\epsilon}}^2r$, we have $\M(f) \leq (1 + \epsilon)\br{\hat\M(f) + 4\br{1 + \frac{1}{\epsilon}}r}$.
\end{lem}
\begin{proof}
We use the fact that $r > 0$ and thus $4(1 + \epsilon)\br{1 + \frac{1}{\epsilon}}r > 0$ as well. The result then follows from an application of Part 2 of Lemma~\ref{lem:fast_conv}.
\end{proof}
From the definition of the loss function $\mu$, we have for any $f \in \F$, $\M(f) = \P(f) - \P(f^\ast)$ and $\hat\M(f) = \hat\P(f) - \hat\P(f^\ast)$. Combining the above lemmata with this observation completes the proof.
\end{proof}

\section{Proof of Theorem~\ref{thm:strongly-convex-forward}}
\label{appsec:proof-thm-strongly-convex-forward}
\begin{thm}[Theorem~\ref{thm:strongly-convex-forward} restated]
Let $h_1,\ldots,h_{n-1}$ be an ensemble of hypotheses generated by an online learning algorithm working with a $B$-bounded, $L$-Lipschitz and $\sigma$-strongly convex loss function $\ell$. 
 Further suppose the learning algorithm guarantees a regret bound of $\Rk_n$. Let $\Vk_n = \max\bc{{\Rk_n},2C_d^2\log n\log(n/\delta)}$ Then for any $\delta > 0$, we have with probability at least $1 - \delta$,
\begin{eqnarray*}
\lefteqn{\frac{1}{n-1}\sum_{t=2}^n\L(h_{t-1}) \leq \L(h^\ast) + \frac{\Rk_n}{n-1}}\\
											 &&\qquad\qquad + C_d\cdot\O{\frac{\sqrt{\Vk_n\log n\log(n/\delta)}}{n-1}},
\end{eqnarray*}
where the $\O{\cdot}$ notation hides constants dependent on domain size and the loss function such as $L, B$ and $\sigma$.
\end{thm}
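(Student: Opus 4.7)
My plan is to reduce the pairwise problem to a first-order problem in each round by conditioning, apply the batch fast-rate result of Theorem~\ref{thm:fast-batch} in each round, and then handle the deviation between the conditional expectation and the observed pairwise loss via a Bernstein-style martingale argument followed by unwinding a self-bounded recursion.

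\textbf{Step 1 (reduction to a first-order loss).} Define the auxiliary loss $\wp(h,\vecz') := \EE{\vecz}{\ell(h,\vecz,\vecz')}$. Since $\ell(h,\vecz,\vecz') = g(\ip{h}{\phi(\vecz,\vecz')}) + r(h)$ with $g$ convex and $L$-Lipschitz and $r$ being $\sigma$-strongly convex, $\wp$ inherits the $L$-Lipschitz and $\sigma$-strongly convex structure required by Theorem~\ref{thm:fast-batch}, and the Rademacher complexity of $\wp\circ\H$ is controlled by $\RR_n(\ell\circ\H)$ via a Jensen-type contraction, so the dimension constant $C_d$ carries over. The crucial identities are that the population risk of $\wp$ equals $\L(h)$ and its empirical risk on $Z^{t-1}$ equals $\tilde\L_t(h) := \EE{\vecz_t}{\hat\L_t(h)\mid Z^{t-1}} = \frac{1}{t-1}\sum_{\tau=1}^{t-1}\wp(h,\vecz_\tau)$.

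\textbf{Step 2 (per-round fast-rate bound).} Since $h_{t-1}$ is measurable w.r.t.\ $Z^{t-1}$ and the samples $\vecz_1,\dots,\vecz_{t-1}$ are i.i.d., I apply Theorem~\ref{thm:fast-batch} to $\wp$ with confidence $\delta/n$, obtaining for each $t$,
\[
\L(h_{t-1})-\L(h^\ast) \le (1{+}\epsilon)\br{\tilde\L_t(h_{t-1})-\tilde\L_t(h^\ast)} + \frac{C'_d\log(n/\delta)}{\epsilon\sigma(t{-}1)},
\]
with $C'_d = O(C_d^2 L^2 R^2)$. A union bound over $t=2,\dots,n$ and summation, using $\sum_{t=2}^n 1/(t-1)=O(\log n)$, yields
\[
S_n \le (1{+}\epsilon)\sum_{t=2}^n\br{\tilde\L_t(h_{t-1}) - \tilde\L_t(h^\ast)} + O\!\br{\tfrac{C_d^2\log n\log(n/\delta)}{\epsilon\sigma}},
\]
where $S_n := \sum_{t=2}^n(\L(h_{t-1})-\L(h^\ast))$.

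\textbf{Step 3 (martingale plus regret).} Now I introduce the centered differences $X_t := [\hat\L_t(h_{t-1})-\hat\L_t(h^\ast)] - [\tilde\L_t(h_{t-1})-\tilde\L_t(h^\ast)]$, which form a martingale difference sequence w.r.t.\ the filtration generated by $Z^{t-1}$ since $\tilde\L_t$ is precisely the conditional expectation of $\hat\L_t$. Using the all-pairs regret bound $\Rk_n$, I get $\sum_t(\tilde\L_t(h_{t-1})-\tilde\L_t(h^\ast)) \le \Rk_n - \sum_t X_t$. Each $X_t$ is bounded in $[-2B,2B]$, and its conditional variance admits the Lipschitz+strong-convexity estimate
\[
\Var(X_t\mid Z^{t-1}) \le L^2\norm{h_{t-1}-h^\ast}^2 \le \tfrac{2L^2}{\sigma}\br{\L(h_{t-1})-\L(h^\ast)}.
\]
Bernstein's inequality for martingales (as in \citet{online-batch-strongly-convex}) then yields, with high probability, $-\sum_t X_t \le c_1\sqrt{S_n\log(n/\delta)/\sigma} + c_2 B\log(n/\delta)$.

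\textbf{Step 4 (self-bounded recursion).} Chaining Steps 2 and 3 gives an inequality of the form $S_n \le (1{+}\epsilon)\Rk_n + A\sqrt{S_n} + B_0$, where $A = O(L\sqrt{\log(n/\delta)/\sigma})$ and $B_0$ collects the $O(C_d^2\log n\log(n/\delta)/\sigma)$ term and the $O(B\log(n/\delta))$ term. Using the standard trick $x \le A\sqrt x + D \Rightarrow x \le 2D + 2A^2$, I solve for $S_n$ to obtain $S_n \le \Rk_n + O(\sqrt{\Vk_n\log n\log(n/\delta)}\cdot C_d)$, where the definition $\Vk_n = \max\{\Rk_n,\,2C_d^2\log n\log(n/\delta)\}$ arises exactly because the cross term $A\sqrt{S_n}$ is dominated by $\sqrt{A^2\cdot\max(\Rk_n, B_0)}$. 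Dividing by $n-1$ and choosing $\epsilon$ as a suitable constant finishes the proof.

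\textbf{Main obstacle.} The subtle point is the interplay in Step 3: the variance bound for $X_t$ depends on the very quantity $\L(h_{t-1})-\L(h^\ast)$ that we are trying to control, producing a circular, self-bounded inequality. Breaking this loop requires using Bernstein's inequality in its variance-sensitive form rather than Azuma-Hoeffding (which would only give slow $1/\sqrt n$ rates), and then invoking the $x\le A\sqrt x + D$ trick at exactly the right moment. A secondary technical nuisance is that Theorem~\ref{thm:fast-batch} is stated for losses of the form $p(\ip{f}{\phi})+r(f)$ whereas $\wp$ is an expectation of such a loss; I will verify that the Rademacher-based proof of Theorem~\ref{thm:fast-batch} goes through unchanged for $\wp$ by a Jensen-type contraction argument, so the constants match.
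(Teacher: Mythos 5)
Your Steps 1--3 reproduce the paper's argument essentially verbatim: the same auxiliary loss $\wp(h,\vecz') = \EE{\vecz}{\ell(h,\vecz,\vecz')}$ with $\P(h)=\L(h)$ and $\hat\P(h)=\tilde\L_t(h)$ on $Z^{t-1}$, a per-round application of Theorem~\ref{thm:fast-batch} at confidence $\delta/n$ followed by a union bound and $\sum_{t}1/(t-1)=\O{\log n}$, the martingale difference $\xi_t$ (your $X_t$ is its negative), and the conditional variance bound $\E{\xi_t^2\,|\,Z^{t-1}} \le \O{L^2/\sigma}\br{\L(h_{t-1})-\L(h^\ast)}$ fed into a Freedman/Bernstein inequality. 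Your remark that only strong convexity of the population risk is needed, and your flag about $\wp$ not being literally of the form $p(\ip{f}{\phi})+r(f)$, match (indeed are slightly more careful than) the paper's own treatment.

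The gap is in Step 4, and it is a real one because the regret term in the theorem sits outside the $\O{\cdot}$. First, the crude implication $x \le A\sqrt{x}+D \Rightarrow x \le 2D+2A^2$ doubles $D$; applied to $S_n \le (1+\epsilon)\Rk_n + A\sqrt{S_n} + B_0$ it yields a bound of the form $2(1+\epsilon)\Rk_n/(n-1)$, not the stated $\Rk_n/(n-1)$. The paper instead uses the sharper Lemma 4 of \cite{online-batch-strongly-convex}: $s \le r + \max\bc{4\sqrt{ds},6b\Delta}\Delta$ implies $s \le r + 4\sqrt{dr}\Delta + \max\bc{16d,6b}\Delta^2$, which keeps the coefficient of $r$ equal to one and converts the self-bounding into a $\sqrt{r}$ cross term. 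Second, ``choosing $\epsilon$ as a suitable constant'' does not finish the proof: a constant $\epsilon$ leaves a term $\epsilon\Rk_n/(n-1)$ that is not dominated by $C_d\sqrt{\Vk_n\log n\log(n/\delta)}/(n-1)$ whenever $\Rk_n \gg C_d^2\log n\log(n/\delta)$ (e.g.\ $\Rk_n = \Theta(\sqrt n)$), so you would again only prove a bound with $(1+c)\Rk_n$. The paper's finish bounds $\epsilon\Rk_n \le \epsilon\Vk_n$ and balances it against $C_d^2\log n\log(n/\delta)/\epsilon$ via the $n$-dependent choice $\epsilon = \sqrt{2C_d^2\log n\log(n/\delta)/\Vk_n} \le 1$; this balancing---not the domination of $A\sqrt{S_n}$ by $\sqrt{A^2\max\bc{\Rk_n,B_0}}$---is where the definition of $\Vk_n$ and the $\sqrt{\Vk_n\log n\log(n/\delta)}$ term actually come from. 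With these two repairs your argument coincides with the paper's proof.
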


\begin{proof}
The decomposition of the excess risk shall not be made explicitly in this case but shall emerge as a side-effect of the proof progression. Consider the loss function $\wp(h,\vecz') := \EE{\vecz}{\ell(h,\vecz,\vecz')}$ with $\P$ and $\hat\P$ as the associated population and empirical risk functionals. Clearly, if $\ell$ is $L$-Lipschitz and $\sigma$-strongly convex then so is $\wp$. As Equation~\eqref{eq:second-to-first-order} shows, for any $h \in \H$, $\P(h) = \L(h)$. Also it is easy to see that for any $Z^{t-1}$, $\hat\P(h) = \tilde\L_t(h)$. Applying Theorem~\ref{thm:fast-batch} on $h_{t-1}$ with the loss function $\wp$ gives us w.p. $1-\delta$,\vspace*{-4ex}

{\small
\setlength{\arraycolsep}{0.0em}
\begin{eqnarray*}
\L(h_{t-1}) - \L(h^\ast) &{}\leq{}& (1 + \epsilon)\br{\tilde\L_t(h_{t-1}) - \tilde\L_t(h^\ast)} + \frac{C_\delta}{\epsilon \sigma (t-1)}
\end{eqnarray*}
}
which, upon summing across time steps and taking a union bound, gives us with probability at least $1 - \delta$,\vspace*{-4ex}


{\small
\setlength{\arraycolsep}{0.0em}
\begin{eqnarray*}
\frac{1}{n-1}\sum_{t=2}^n\L(h_{t-1}) &{}\leq{}& \L(h^\ast) + \frac{C_{(\delta/n)}\log n}{\epsilon \sigma (n-1)}\\
											 &{}{}&\!\! +\frac{1 + \epsilon}{n-1}\sum_{t=2}^n\left(\tilde\L_t(h_{t-1}) - \tilde\L_t(h^\ast)\right).
\end{eqnarray*}
}
Let $\xi_t := \br{\tilde\L_t(h_{t-1}) - \tilde\L_t(h^\ast)} - \br{\hat\L_t(h_{t-1}) - \hat\L_t(h^\ast)}$. Then using the regret bound $\Rk_n$ we can write,
\vspace*{-1ex}
{\small
\begin{align*}
\frac{1}{n-1}\sum_{t=2}^n\L(h_{t-1}) \leq {} & \L(h^\ast) + \frac{1 + \epsilon}{n-1}\br{\Rk_n + \sum_{t=2}^n\xi_t}\\
											 & + \frac{C_{(\delta/n)}\log n}{\epsilon \sigma (n-1)}.
\end{align*}
}
%
%

We now use Bernstein type inequalities to bound the sum $\sum_{t=2}^n\xi_t$ using a proof technique used in \cite{online-batch-strongly-convex, online-batch-single-better}. We first note some properties of the sequence below.
\begin{lem}
The sequence $\xi_2,\ldots,\xi_n$ is a bounded martingale difference sequence with bounded conditional variance.
\end{lem}
\begin{proof}
That $\xi_t$ is a martingale difference sequence follows by construction: we can decompose the term $\xi_t = \phi_t - \psi_t$ where $\phi_t = \tilde\L_t(h_{t-1}) - \hat\L_t(h_{t-1})$ and $\psi_t = \tilde\L_t(h^\ast) - \hat\L_t(h^\ast)$, both of which are martingale difference sequences with respect to the common filtration $\F = \bc{\F_n : n = 0,1,\ldots}$ where $\F_n = \sigma\br{\vecz_i : i = 1,\ldots,n}$.

Since the loss function takes values in $\bs{0,B}$, we have $\abs{\xi_t} \leq 2B$ which proves that our sequence is bounded.

To prove variance bounds for the sequence, we first use the Lipschitz properties of the loss function to get
\begin{eqnarray*}
\xi_t &=& \br{\tilde\L_t(h_{t-1}) - \tilde\L_t(h^\ast)} - \br{\hat\L_t(h_{t-1}) - \hat\L_t(h^\ast)}\\
	  &\leq& 2L\norm{h_{t-1} - h^\ast}.
\end{eqnarray*}
Recall that the hypothesis space is embedded in a Banach space equipped with the norm $\norm{\cdot}$. Thus we have $\E{\left. \xi^2_t \right| Z^{t-1}} \leq 4L^2\norm{h_{t-1} - h^\ast}^2$. Now using $\sigma$-strong convexity of the loss function we have
\setlength{\arraycolsep}{0.0em}
\begin{eqnarray*}
	\frac{\L(h_{t-1}) + \L(h^\ast)}{2} &{}\geq{}& \L\br{\frac{h_{t-1} + h^\ast}{2}} + \frac{\sigma}{8}\norm{h_{t-1} - h^\ast}^2\\
												 &{}\geq{}& \L(h^\ast) + \frac{\sigma}{8}\norm{h_{t-1} - h^\ast}^2.	
\end{eqnarray*}
Let $\sigma^2_t := \frac{16L^2}{\sigma}\br{\L(h_{t-1}) - \L(h^\ast)}$. Combining the two inequalities we get $\E{\left. \xi^2_t \right| Z^{t-1}} \leq \sigma^2_t$.\hfill\qedhere
\end{proof}

We note that although \cite{online-batch-strongly-convex} state their result with a requirement that the loss function be strongly convex in a point wise manner, i.e., for all $\vecz,\vecz' \in \ZZ$, the function $\ell(h,\vecz,\vecz')$ be strongly convex in $h$, they only require the result in expectation. More specifically, our notion of strong convexity where we require the population risk functional $\L(h)$ to be strongly convex actually suits the proof of \cite{online-batch-strongly-convex} as well.

We now use a Bernstein type inequality for martingales proved in \cite{online-batch-strongly-convex}. The proof is based on a fundamental result on martingale convergence due to \citet{freedman}.
\begin{thm}
Given a martingale difference sequence $X_t, t = 1 \ldots n$ that is uniformly $B$-bounded and has conditional variance $\E{X^2_t|X_1,\ldots,X_{t-1}} \leq \sigma^2_t$, we have for any $\delta < 1/e$ and $n \geq 3$, with probability at least $1 - \delta$,
\[
\sum_{t=1}^n X_t \leq \max\bc{2\sigma^\ast,3B\sqrt{\log\frac{4\log n}{\delta}}}\sqrt{\log\frac{4\log n}{\delta}},
\]
where $\sigma^\ast = \sqrt{\sum_{t=1}^n\sigma^2_t}$.
\end{thm}
Let $\Dk_n = \sum_{t=2}^n\br{\L(h_{t-1}) - \L(h^\ast)}$. Then we can write the variance bound as
\setlength{\arraycolsep}{0.0em}
\begin{eqnarray*}
\sigma^\ast &{}={}& \sqrt{\sum_{t=1}^n\sigma^2_t} = \sqrt{\sum_{t=1}^n\frac{16L^2}{\sigma}\br{\L(h_{t-1}) - \L(h^\ast)}}\\
	   &{}={}& 4L\sqrt{\frac{\mathfrak D_n}{\sigma}}.
\end{eqnarray*}

Thus, with probability at least $1 - \delta$, we have
\[
\sum_{t=1}^n \xi_t \leq \max\bc{8L\sqrt{\frac{\mathfrak D_n}{\sigma}},6B\sqrt{\log \frac{4\log n}{\delta}}}\sqrt{\log \frac{4\log n}{\delta}}.
\]
Denoting $\Delta = \sqrt{\log \frac{4\log n}{\delta}}$ for notational simplicity and using the above bound in the online to batch conversion bound gives us
\begin{align*}
\frac{\Dk_n}{n-1} \leq {} & \frac{1 + \epsilon}{n-1}\br{\Rk_n + \max\bc{8L\sqrt{\frac{\mathfrak D_n}{\sigma}},6B\Delta}\Delta} \\
											 & + \frac{C_{(\delta/n)}\log n}{\epsilon \sigma (n-1)}.
\end{align*}
Solving this quadratic inequality is simplified by a useful result given in \citep[Lemma 4]{online-batch-strongly-convex}
\begin{lem}
For any $s,r,d,b,\Delta > 0$ such that
\[
s \leq r + \max\bc{4\sqrt{ds},6b\Delta}\Delta,
\]
we also have
\[
s \leq r + 4\sqrt{dr}\Delta + \max\bc{16d,6b}\Delta^2.
\]
\end{lem}
Using this result gives us a rather nasty looking expression which we simplify by absorbing constants inside the $\O{\cdot}$ notation. We also make a simplifying ad-hoc assumption that we shall only set $\epsilon \in (0,1]$. The resulting expression is given below:
\begin{align*}
\Dk_n \leq {} & \br{1 + \epsilon}\Rk_n + \O{\frac{C_d^2\log n\log(n/\delta)}{\epsilon} + \log\frac{\log n}{\delta}}\\
											 & + \O{\sqrt{\br{\Rk_n + \frac{C_d^2\log n\log(n/\delta)}{\epsilon}}\log\frac{\log n}{\delta}}}.
\end{align*}
Let ${\Vk_n} = \max\bc{\Rk_n,2C_d^2\log n\log\br{n/\delta}}$. Concentrating only on the portion of the expression involving $\epsilon$ and ignoring the constants, we get
\begin{align*}
& \epsilon\Rk_n + \frac{C_d^2\log n\log(n/\delta)}{\epsilon} + \sqrt{\frac{C_d^2\log n\log(n/\delta)}{\epsilon}\log\frac{\log n}{\delta}}\\
\leq {} & \epsilon\Rk_n + \frac{2C_d^2\log n\log(n/\delta)}{\epsilon} \leq \epsilon\Vk_n + \frac{2C_d^2\log n\log(n/\delta)}{\epsilon}\\
\leq {} & 2C_d\sqrt{2\Vk_n\log n\log(n/\delta)},
\end{align*}
where the second step follows since $\epsilon \leq 1$ and the fourth step follows by using $\epsilon = \sqrt\frac{2C_d^2\log n\log\br{n/\delta}}{\Vk_n} \leq 1$. Putting this into the excess risk expression gives us
\begin{align*}
\frac{1}{n-1}\sum_{t=2}^n\L(h_{t-1}) \leq {} & \L(h^\ast) + \frac{\Rk_n}{n-1}\\
											 & + C_d\cdot\O{\frac{\sqrt{\Vk_n\log n\log(n/\delta)}}{n-1}}
\end{align*}
which finishes the proof.
\end{proof}

\section{Generalization Bounds for Finite Buffer Algorithms}
\label{appsec:proof-finite-buffer-thms}

In this section we present online to batch conversion bounds for learning algorithms that work with finite buffers and are able to provide regret bounds $\Rkbufn$ with respect to \emph{finite-buffer} loss functions $\Lbuft$.

Although due to lack of space, Theorem~\ref{thm:finite-buffer-online-to-batch} presents these bounds for bounded as well as strongly convex functions together, we prove them separately for sake of clarity. Moreover, the techniques used to prove these two results are fairly different which further motivates this. Before we begin, we present the problem setup formally and introduce necessary notation.

In our finite buffer online learning model, one observes a stream of elements $\vecz_1,\ldots,\vecz_n$. A \emph{sketch} of these elements is maintained in a buffer $B$ of size $s$, i.e., at each step $t = 2, \ldots, n$, the buffer contains a subset of the elements $Z^{t-1}$ of size at most $s$. At each step $t = 2 \ldots n$, the online learning algorithm posits a hypothesis $h_{t-1} \in \H$, upon which the element $\vecz_t$ is revealed and the algorithm incurs the loss
\[
\Lbuft(h_{t-1}) = \frac{1}{\abs{B_t}}\sum_{\vecz \in B_t}\ell(h_{t-1},\vecz_t,\vecz),
\]
where $B_t$ is the state of the buffer at time $t$. Note that $\abs{B_t} \leq s$. We would be interested in algorithms that are able to give a \emph{finite-buffer} regret bound, i.e., for which, the proposed ensemble $h_1,\ldots,h_{n-1}$ satisfies
\[
\sum_{t=2}^n\Lbuft(h_{t-1}) - \underset{h \in \H}{\inf}\sum_{t=2}^n\Lbuft(h) \leq \Rkbufn.
\]

We assume that the buffer is updated after each step in a \emph{stream-oblivious} manner. For randomized buffer update policies (such as reservoir sampling \cite{vitter-rs}), we assume that we are supplied at each step with some fresh randomness $r_t$ (see examples below) along with the data point $\vecz_t$. Thus the data received at time $t$ is a tuple $\vecw_t = (\vecz_t,r_t)$. We shall refer to the random variables $r_t$ as \textit{auxiliary} variables. It is important to note that stream obliviousness dictates that $r_t$ as a random variable is independent of $z_t$. Let $W^{t-1} := \bc{\vecw_1,\ldots,\vecw_{t-1}}$ and $R^{t-1} := \bc{r_1,\ldots,r_{t-1}}$. Note that $R^{t-1}$ completely decides the indices present in the buffer $B_t$ at step $t$ independent of $Z^{t-1}$. For any $h \in \H$, define
\[
\Ltbuft := \EE{\vecz_t}{\left.\Lbuft\right|W^{t-1}}.
\]

\subsection{Examples of Stream Oblivious Policies}
Below we give some examples of stream oblivious policies for updating the buffer:
\begin{enumerate}
	\item \textbf{FIFO}: in this policy, the data point $\vecz_t$ arriving at time $t > s$ is inducted into the buffer by evicting the data point $\vecz_{(t-s)}$ from the buffer. Since this is a non-randomized policy, there is no need for auxiliary randomness and we can assume that $r_t$ follows the trivial law $r_t \sim \ind_{\bc{r = 1}}$.
	\item \rsorig: the Reservoir Sampling policy was introduced by \citet{vitter-rs}. In this policy, at time $t > s$, the incoming data point $\vecz_t$ is inducted into the buffer with probability $s/t$. If chosen to be induced, it results in the eviction of a random element of the buffer. In this case the auxiliary random variable is 2-tuple that follows the law
	\[
		r_t = (r_t^1,r_t^2) \sim \br{\text{Bernoulli}\br{\frac{s}{t}}, \frac{1}{s}\sum\limits_{i=1}^s \ind_{\bc{r_2 = i}}}.
	\]
	\item \mrs (see Algorithm~\ref{algo:rs-x}): in this policy, the incoming data point $\vecz_t$ at time $t > s$, replaces each data point in the buffer independently with probability $1/t$. Thus the incoming point has the potential to evict multiple buffer points while establishing multiple copies of itself in the buffer. In this case, the auxiliary random variable is defined by a Bernoulli process: $r_t = (r_t^1, r_t^2 \ldots, r_t^s) \sim \br{\text{Bernoulli}\br{\frac{1}{t}},\text{Bernoulli}\br{\frac{1}{t}}, \ldots, \text{Bernoulli}\br{\frac{1}{t}}}$.
	\item \mmrs (see Algorithm~\ref{algo:rs-x-alternate}): this is a variant of \mrs in which the number of evictions is first decided by a Binomial trial and then those many random points in the buffer are replaced by the incoming data point. This can be implemented as follows: $r_t = (r_t^1, r_t^2) \sim \br{\text{Binomial}\br{s,\frac{1}{t}}, \text{Perm}(s)}$ where $\text{Perm}(s)$ gives a random permutation of $s$ elements.
\end{enumerate}

\subsection{Finite Buffer Algorithms with Bounded Loss Functions}
We shall prove the result in two steps. In the first step we shall prove the following uniform convergence style result
\begin{lem}
\label{lem:bounded-forward-buffer}
Let $h_1,\ldots,h_{n-1}$ be an ensemble of hypotheses generated by an online learning algorithm working with a $B$-bounded loss function $\ell$ and a finite buffer of capacity $s$. Then for any $\delta > 0$, we have with probability at least $1 - \delta$,
\begin{align*}
\frac{1}{n-1}\sum_{t=2}^n\L(h_{t-1}) \leq {} & \frac{1}{n-1}\sum_{t=2}^n\Lbuft(h_{t-1}) + B\sqrt\frac{2\log\frac{n}{\delta}}{s}\\
									 	+ {} & \frac{2}{n-1}\sum_{t=2}^{n}\RR_{\min\bc{t-1,s}}(\ell\circ\H).
\end{align*}
\end{lem}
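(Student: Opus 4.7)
The plan is to mirror the decomposition of Lemma~\ref{lem:bounded-forward} while carefully conditioning on the auxiliary randomness used by the stream-oblivious buffer update policy. Write $\L(h_{t-1}) - \Lbuft(h_{t-1}) = P_t + Q_t$, where $P_t := \L(h_{t-1}) - \Ltbuft(h_{t-1})$ and $Q_t := \Ltbuft(h_{t-1}) - \Lbuft(h_{t-1})$. Since $h_{t-1}$ and $B_t$ are determined by $W^{t-1}$ while $\vecz_t$ is an independent fresh draw, the definition of $\Ltbuft$ gives $\E{Q_t \mid W^{t-1}} = 0$; boundedness of $\ell$ makes $\bc{Q_t}_{t=2}^n$ a $B$-bounded martingale difference sequence, so Azuma--Hoeffding controls $\frac{1}{n-1}\sum_t Q_t$ by $\O{B\sqrt{\log(1/\delta)/n}}$, which is absorbed in the $B\sqrt{2\log(n/\delta)/s}$ summand since $s \le n$.

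The main work lies in bounding $\sum_t P_t$. Following the \emph{Symmetrization of Expectations} device, introduce $\wp(h,\vecz) := \EE{\vecz'}{\ell(h,\vecz',\vecz)}$, so that $\L(h) = \EE{\vecz}{\wp(h,\vecz)}$ and $\Ltbuft(h) = \frac{1}{\abs{B_t}}\sum_{\vecz \in B_t}\wp(h,\vecz)$. Upper-bound $P_t$ by $g_t := \sup_{h \in \H}\bs{\L(h) - \Ltbuft(h)}$ and condition on the entire auxiliary randomness $R^n$. By stream-obliviousness, $R^n$ is independent of $Z^n$, so conditional on $R^n$ the indices constituting $B_t$ are frozen while $\bc{\vecz_i : i \in B_t}$ remain i.i.d.\ draws from $\D$. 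Perturbing a single $\vecz_i$ changes $g_t$ by at most $B/\abs{B_t}$ when $i \in B_t$ and by zero otherwise, so McDiarmid's inequality (applied conditional on $R^n$) yields $g_t \le \EE{Z^{t-1}}{g_t \mid R^n} + B\sqrt{\log(1/\delta)/(2\abs{B_t})}$ with high probability. Standard symmetrization -- exactly as in the proof of Lemma~\ref{lem:bounded-forward} once decoupling has been performed -- bounds $\EE{Z^{t-1}}{g_t \mid R^n}$ by $2\RR_{\abs{B_t}}(\wp\circ\H)$, and Jensen's inequality (pulling the inner $\vecz$-expectation outside the supremum) gives $\RR_{\abs{B_t}}(\wp\circ\H) \le \RR_{\abs{B_t}}(\ell\circ\H)$.

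A union bound over $t = 2,\ldots,n$ (with confidence rescaled to $\delta/n$), combined with the $Q_t$ estimate and the fact that $\abs{B_t} = \min\bc{t-1,s}$ for the policies of interest (FIFO, \rsorig, \mrs), then gives the inequality upon averaging: the sum $\frac{1}{n-1}\sum_{t=2}^n \sqrt{1/\min\bc{t-1,s}}$ is dominated by $\O{1/\sqrt{s}}$, producing the $B\sqrt{2\log(n/\delta)/s}$ term, while the averaged Rademacher contributions collapse to the displayed $\frac{2}{n-1}\sum_t \RR_{\min\bc{t-1,s}}(\ell\circ\H)$ term. The main technical subtlety -- and the reason \emph{stream-obliviousness} is required -- is precisely this conditioning on $R^n$: without independence of $R^n$ from $Z^n$, conditioning on the buffer indices would distort the data distribution, and McDiarmid would fail to deliver the sharp $1/\abs{B_t}$ scaling on which the whole bound hinges.
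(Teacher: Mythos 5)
Your proposal is correct and follows essentially the same route as the paper's proof: the same $P_t + Q_t$ decomposition with Azuma--Hoeffding for the martingale part, and, crucially, the same conditioning on the stream-oblivious auxiliary randomness so that McDiarmid is applied only over the $\min\bc{t-1,s}$ buffer points (avoiding the naive $(t-1)$-variable application that would force $s = \omega(\sqrt{n})$), followed by symmetrization of expectations and de-conditioning over $R$. The only cosmetic difference is that you symmetrize with a ghost sample of buffer data points directly, whereas the paper introduces ghost auxiliary variables and lets the buffer construction dictate the matching; these yield the same bound.
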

At a high level, our proof progression shall follow that of Lemma~\ref{lem:bounded-forward}. However, the execution of the proof will have to be different in order to accommodate the finiteness of the buffer and randomness used to construct it. Similarly, we shall also be able to show the following result.
\begin{lem}
\label{lem:bounded-converse-buffer}
For any $\delta > 0$, we have with probability at least $1 - \delta$,
\begin{align*}
\frac{1}{n-1}\sum_{t=2}^n\Lbuft(h^\ast) \leq {} & \L(h^\ast) + 3B\sqrt\frac{\log\frac{n}{\delta}}{s}\\
										   + {} & \frac{2}{n-1}\sum_{t=2}^n\RR_{\min\bc{t-1,s}}(\ell\circ\H).
\end{align*}
\end{lem}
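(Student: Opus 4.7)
The plan is to mirror the proof of Lemma~\ref{lem:bounded-forward-buffer} with the direction of the inequality flipped, exploiting the fact that $h^\ast \in \H$ to upper bound the single-hypothesis deviation by a supremum over $\H$ so that Rademacher averages enter naturally. First I would condition on the auxiliary randomness $R^n$ used by the buffer update policy. Stream-obliviousness makes the multiset of buffer indices $\mathrm{Ind}(B_t) \subseteq \{1,\ldots,t-1\}$ a deterministic function of $R^n$ alone, so that conditionally on $R^n$ the data stream remains i.i.d.\ while the buffer slots effectively hold $|B_t|=\min\{t-1,s\}$ samples drawn from the source distribution (as verified for \mrs in Appendix~\ref{appsec:rs-x-analysis} and trivially for FIFO).

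Next, setting $\Ltbuft(h^\ast) := \EE{\vecz_t}{\left.\Lbuft(h^\ast)\right|W^{t-1}}$, I would decompose
\[
\Lbuft(h^\ast) - \L(h^\ast) \;=\; \underbrace{\bigl(\Lbuft(h^\ast) - \Ltbuft(h^\ast)\bigr)}_{Q_t} \;+\; \underbrace{\bigl(\Ltbuft(h^\ast) - \L(h^\ast)\bigr)}_{P_t}.
\]
Conditional on $R^n$, the sequence $Q_2,\ldots,Q_n$ is a $B$-bounded martingale difference sequence against the filtration generated by $W^t$, since $\EE{\vecz_t}{Q_t \mid W^{t-1}}=0$ by construction. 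Azuma--Hoeffding therefore yields $\tfrac{1}{n-1}\sum_t Q_t \leq B\sqrt{2\log(2/\delta)/(n-1)}$ with probability at least $1-\delta/2$, which is dominated by the $B\sqrt{\log(n/\delta)/s}$ term in the target bound whenever $s\leq n-1$.

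For $P_t$, I would use $h^\ast \in \H$ to pass to a supremum: $P_t \leq g_t(W^{t-1}) := \sup_{h\in\H}\bigl[\Ltbuft(h) - \L(h)\bigr]$. Since $\Ltbuft(h) = |B_t|^{-1}\sum_{\vecz\in B_t}\EE{\vecz'}{\ell(h,\vecz',\vecz)}$ and $\ell$ is bounded by $B$, altering one data point indexed in $B_t$ perturbs $g_t$ by at most $B/|B_t|$, so McDiarmid's inequality gives $g_t \leq \E{g_t} + B\sqrt{\log(2n/\delta)/(2|B_t|)}$ with probability $\geq 1-\delta/(2n)$. The expectation is controlled by replaying the Symmetrization-of-Expectations argument from Lemma~\ref{lem:bounded-forward}: introducing a ghost buffer, attaching Rademacher signs to each paired difference, and using Jensen's inequality to move $\EE{\vecz'}{\cdot}$ outside the supremum produce $\E{g_t} \leq 2\RR_{|B_t|}(\ell\circ\H) \leq 2\RR_{\min\{t-1,s\}}(\ell\circ\H)$.

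Finally a union bound over $t=2,\ldots,n$ followed by summation collapses the McDiarmid contributions, using $\tfrac{1}{n-1}\sum_{t=2}^n |B_t|^{-1/2} \leq \tfrac{1}{n-1}\bigl(2\sqrt{s} + (n-s)/\sqrt{s}\bigr) = O(1/\sqrt{s})$, into $O\bigl(B\sqrt{\log(n/\delta)/s}\bigr)$; combining with the Azuma term yields the constant $3B$ in the stated bound, after which taking expectation over $R^n$ removes the conditioning. The main technical subtlety is the symmetrization step when the buffer update policy may introduce duplicated indices (as with \mrs), and this is precisely why conditioning on $R^n$ upfront is essential: it lets us invoke the \mrs guarantee that the $|B_t|$ buffer slots are i.i.d.~draws from the source, after which the standard Rademacher symmetrization goes through unchanged.
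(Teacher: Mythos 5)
Your proposal is correct and follows essentially the same route as the paper's argument (the paper proves Lemma~\ref{lem:bounded-forward-buffer} and obtains Lemma~\ref{lem:bounded-converse-buffer} by the mirrored argument): condition on the stream-oblivious buffer randomness $R^n$, split into a martingale part handled by Azuma--Hoeffding and a residual handled by McDiarmid over the $\min\{t-1,s\}$ effective buffer variables, symmetrize with a ghost buffer to produce $2\RR_{\min\{t-1,s\}}(\ell\circ\H)$, union bound over $t$, and then take expectation over $R^n$. One small remark: the i.i.d.-buffer property of \mrs from Appendix~\ref{appsec:rs-x-analysis} is not actually needed (and is an unconditional statement, not one conditional on $R^n$); all the proof uses is that, given $R^{t-1}$, the buffer indices are fixed and the corresponding stream points are i.i.d., exactly as in the paper.
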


Note that for classes whose Rademacher averages behave as $\RR_n(\H) \leq C_d\cdot\O{\frac{1}{\sqrt n}}$, applying Lemma~\ref{thm:contraction-rademacher} gives us $\RR_n(\ell\circ\H) \leq C_d\cdot\O{\frac{1}{\sqrt n}}$ as well which allows us to show
\[
\frac{2}{n-1}\sum_{t=2}^n\RR_{\min\bc{t-1,s}}(\ell\circ\H) = C_d\cdot\O{\frac{1}{\sqrt s}}.
\]

Combining Lemmata~\ref{lem:bounded-forward-buffer} and \ref{lem:bounded-converse-buffer} along with the definition of bounded buffer regret $\Rkbufn$ gives us the first part of Theorem~\ref{thm:finite-buffer-online-to-batch}. We prove Lemma~\ref{lem:bounded-forward-buffer} below:

\begin{proof}[Proof (of Lemma~\ref{lem:bounded-forward-buffer}).]
We first decompose the excess risk term as before\vspace*{-5ex}

{\small
\begin{align*}
&\sum_{t=2}^n\L(h_{t-1}) - \Lbuft(h_{t-1})\\ &= \sum_{t=2}^n\underbrace{\L(h_{t-1}) - \Ltbuft(h_{t-1})}_{P_t}
											+\underbrace{\Ltbuft(h_{t-1}) - \Lbuft(h_{t-1})}_{Q_t}.
\end{align*}
}
By construction, the sequence $Q_t$ forms a martingale difference sequence, i.e., $\EE{\vecz_t}{\left.Q_t\right|Z^{t-1}} = 0$ and hence by an application of Azuma Hoeffding inequality we have
\begin{eqnarray}
\frac{1}{n-1}\sum_{t=2}^n Q_t \leq B\sqrt\frac{2\log\frac{1}{\delta}}{n-1}.
\label{eq:Q-t-bound-buf}
\end{eqnarray}
We now analyze each term $P_t$ individually. To simplify the analysis a bit we assume that the buffer update policy keeps admitting points into the buffer as long as there is space so that for $t \leq s+1$, the buffer contains an exact copy of the preceding stream. This is a very natural assumption satisfied by FIFO as well as reservoir sampling. We stress that our analysis works even without this assumption but requires a bit more work. In case we do make this assumption, the analysis of Lemma~\ref{lem:bounded-forward} applies directly and we have, for any $t \leq s + 1$, with probability at least $1 - \delta$,
\[
P_t \leq \RR_{t-1}(\ell\circ\H) + B\sqrt\frac{\log\frac{1}{\delta}}{2(t-1)}
\]
For $t > s +1$, for an independent ghost sample $\bc{\tilde\vecw_1,\ldots,\tilde\vecw_{t-1}}$ we have,
\begin{align*}
  & \EE{\tilde W^{t-1}}{\Ltbuft} = \EE{\tilde W^{t-1}}{\frac{1}{s}\sum_{\tilde\vecz \in \tilde B_t}\EE{\vecz}{\ell(h_{t-1},\vecz,\tilde\vecz)}}\\
= & \EE{\tilde R^{t-1}}{\EE{\tilde Z^{t-1}}{\left.\frac{1}{s}\sum_{\tilde\vecz \in \tilde B_t}\EE{\vecz}{\ell(h_{t-1},\vecz,\tilde\vecz)}\right|\tilde R^{t-1}}}.
\end{align*}
The conditioning performed above is made possible by stream obliviousness. Now suppose that given $\tilde R^{t-1}$ the indices $\tilde\tau_1,\ldots,\tilde\tau_s$ are present in the buffer $\tilde B_t$ at time $t$. Recall that this choice of indices is independent of $\tilde Z^{t-1}$ because of stream obliviousness. Then we can write the above as
\begin{align*}
  & \EE{\tilde R^{t-1}}{\EE{\tilde Z^{t-1}}{\left.\frac{1}{s}\sum_{\tilde\vecz \in \tilde B_t}\EE{\vecz}{\ell(h_{t-1},\vecz,\tilde\vecz)}\right|\tilde R^{t-1}}}\\
= & \EE{\tilde R^{t-1}}{\EE{\tilde Z^{t-1}}{\frac{1}{s}\sum_{j=1}^s\EE{\vecz}{\ell(h_{t-1},\vecz,\tilde\vecz_{\tilde\tau_j})}}}\\
= & \EE{\tilde R^{t-1}}{\EE{\tilde\vecz_1,\ldots,\tilde\vecz_s}{\frac{1}{s}\sum_{j=1}^s\EE{\vecz}{\ell(h_{t-1},\vecz,\tilde\vecz_j)}}}\\
= & \EE{\tilde R^{t-1}}{\L(h_{t-1})} = \L(h_{t-1}).
\end{align*}
We thus have
\begin{eqnarray}
\EE{\tilde W^{t-1}}{\frac{1}{s}\sum_{\tilde\vecz \in \tilde B_t}\EE{\vecz}{\ell(h_{t-1},\vecz,\tilde\vecz)}} = \L(h_{t-1}).
\label{eq:second-to-first-order-buf}
\end{eqnarray}

We now upper bound $P_t$ as
\begin{eqnarray*}
P_t &=& \L(h_{t-1}) - \Ltbuft(h_{t-1})\\
	&=& \EE{\tilde W^{t-1}}{\frac{1}{s}\sum_{\tilde\vecz \in \tilde B_t}\EE{\vecz}{\ell(h_{t-1},\vecz,\tilde\vecz)}} - \Ltbuft(h_{t-1})\\
	&\leq& \underbrace{\underset{h \in \H}{\sup}\bs{\EE{\tilde W^{t-1}}{\frac{1}{s}\sum_{\tilde\vecz \in \tilde B_t}\EE{\vecz}{\ell(h,\vecz,\tilde\vecz)}} - \Ltbuft(h)}}_{g_t(\vecw_1,\ldots,\vecw_{t-1})}.
\end{eqnarray*}
Now it turns out that applying McDiarmid's inequality to $g_t(\vecw_1,\ldots,\vecw_{t-1})$ directly would yield a very loose bound. This is because of the following reason: since $\Lbuft(h) = \frac{1}{\abs{B_t}}\sum_{\vecz \in B_t}\ell(h,\vecz_t,\vecz)$ depends only on $s$ data points, changing any one of the $(t-1)$ variables $\vecw_i$ brings about a perturbation in $g_t$ of magnitude at most $\O{1/s}$. The problem is that $g_t$ is a function of $(t-1) \gg s$ variables and hence a direct application of McDiarmid's inequality would yield an excess error term of $\sqrt{\frac{t\log(1/\delta)}{s^2}}$ which would in the end require $s = \omega(\sqrt n)$ to give any non trivial generalization bounds. In contrast, we wish to give results that would give non trivial bounds for $s = \tilde\omega(1)$.

In order to get around this problem, we need to reduce the number of variables in the statistic while applying McDiarmid's inequality. Fortunately, we observe that $g_t$ \emph{effectively} depends only on $s$ variables, the data points that end up in the buffer at time $t$. This allows us to do the following. For any $R^{t-1}$, define
\[
\delta(R^{t-1}) := \Prr{Z^{t-1}}{\left.g_t(\vecw_1,\ldots,\vecw_{t-1}) > \epsilon \right|R^{t-1}}.
\]
We will first bound $\delta(R^{t-1})$. This will allow us to show
\[
\Prr{W^{t-1}}{g_t(\vecw_1,\ldots,\vecw_{t-1}) > \epsilon} \leq \EE{R^{t-1}}{\delta(R^{t-1})},
\]
where we take expectation over the distribution on $R^{t-1}$ induced by the buffer update policy. Note that since we are oblivious to the nature of the distribution over $R^{t-1}$, our proof works for any stream oblivious buffer update policy. Suppose that given $R^{t-1}$ the indices $\tau_1,\ldots,\tau_s$ are present in the buffer $B_t$ at time $t$. Then we have
{\small
\begin{align*}
  &\ g_t(\vecw_1,\ldots,\vecw_{t-1}; R^{t-1})\\
=&\ \underset{h \in \H}{\sup}\bs{\EE{\tilde W^{t-1}}{\frac{1}{s}\sum_{\tilde\vecz \in \tilde B_t}\EE{\vecz}{\ell(h,\vecz,\tilde\vecz)}} - \frac{1}{s}\sum_{j=1}^s\EE{\vecz}{\ell(h,\vecz,\vecz_{\tau_j})}}\\
=:&\ \tilde g_t(\vecz_{\tau_1},\ldots,\vecz_{\tau_s}).
\end{align*}
}
The function $\tilde g_t$ can be perturbed at most $B/s$ due to a change in one of $\vecz_{\tau_j}$. Applying McDiarmid's inequality to the function $\tilde g_t$ we get with probability at least $1-\delta$,
\[
\tilde g_t(\vecz_{\tau_1},\ldots,\vecz_{\tau_s}) \leq \EE{Z^{t-1}}{\tilde g_t(\vecz_{\tau_1},\ldots,\vecz_{\tau_s})} + B\sqrt{\frac{\log\frac{1}{\delta}}{2s}}
\]
We analyze $\EE{Z^{t-1}}{\tilde g_t(\vecz_{\tau_1},\ldots,\vecz_{\tau_s})}$ in Figure~\ref{fig:long-cal-simple-buffer}. In the third step in the calculations we symmetrize the true random variable $\vecz_{\tau_j}$ with the ghost random variable $\tilde\vecz_{\tilde\tau_j}$. This is contrasted with traditional symmetrization where we would symmetrize $\vecz_i$ with $\tilde\vecz_i$. In our case, we let the buffer construction dictate the matching at the symmetrization step.
\begin{figure*}
\small
\setlength{\arraycolsep}{0.0em}
\begin{eqnarray*}
\EE{Z^{t-1}}{\tilde g_t(\vecz_{\tau_1},\ldots,\vecz_{\tau_s})}
&{}={}& \EE{Z^{t-1}}{\underset{h \in \H}{\sup}\bs{\EE{\tilde W^{t-1}}{\frac{1}{s}\sum_{\tilde\vecz \in \tilde B_t}\EE{\vecz}{\ell(h,\vecz,\tilde\vecz)}} - \frac{1}{s}\sum_{j=1}^s\EE{\vecz}{\ell(h,\vecz,\vecz_{\tau_j})}}}\\
&{}\leq {} & \EE{\tilde R^{t-1}}{\left.\EE{Z^{t-1},\tilde Z^{t-1}}{\underset{h \in \H}{\sup}\bs{\frac{1}{s}\sum_{j= 1}^s\EE{\vecz}{\ell(h,\vecz,\tilde\vecz_{\tilde\tau_j})} - \frac{1}{s}\sum_{j=1}^s\EE{\vecz}{\ell(h,\vecz,\vecz_{\tau_j})}}}\right|\tilde R^{t-1}}\\
&{}= {} & \EE{\tilde R^{t-1}}{\left.\EE{Z^{t-1},\tilde Z^{t-1},\epsilon_j}{\underset{h \in \H}{\sup}\bs{\frac{1}{s}\sum_{j= 1}^s \epsilon_j\br{\EE{\vecz}{\ell(h,\vecz,\tilde\vecz_{\tilde\tau_j})} - \EE{\vecz}{\ell(h,\vecz,\vecz_{\tau_j})}}}}\right|\tilde R^{t-1}}\\
&{}\leq {} & 2\EE{\tilde R^{t-1}}{\left.\EE{Z^{t-1},\epsilon_j}{\underset{h \in \H}{\sup}\bs{\frac{1}{s}\sum_{j= 1}^s \epsilon_j\EE{\vecz}{\ell(h,\vecz,\vecz_{\tau_j})}}}\right|\tilde R^{t-1}}\\
&{}\leq {} & 2\EE{\tilde R^{t-1}}{\RR_s(\ell\circ\H)} \leq 2\RR_s(\ell\circ\H).
\end{eqnarray*}
\vspace*{-3ex}
\caption{Decoupling training and auxiliary variables for Rademacher complexity-based analysis.}
\label{fig:long-cal-simple-buffer}
\end{figure*}
Thus we get, with probability at least $1 - \delta$ over $\vecz_1,\ldots,\vecz_{t-1}$,
\[
g_t(\vecw_1,\ldots,\vecw_{t-1}; R^{t-1}) \leq 2\RR_s(\ell\circ\H) + B\sqrt{\frac{\log\frac{1}{\delta}}{2s}}
\]
which in turn, upon taking expectations with respect to $R^{t-1}$, gives us with probability at least $1 - \delta$ over $\vecw_1,\ldots,\vecw_{t-1}$,
\[
P_t = g_t(\vecw_1,\ldots,\vecw_{t-1}) \leq 2\RR_s(\ell\circ\H) + B\sqrt{\frac{\log\frac{1}{\delta}}{2s}}.
\]
Applying a union bound on the bounds for $P_t, t = 2,\ldots,n$ gives us with probability at least $1 - \delta$,
\setlength{\arraycolsep}{0.0em}
\begin{eqnarray}
\frac{1}{n-1}\sum_{t=2}^n P_t &{}\leq{}& \frac{2}{n-1}\sum_{t=2}^n\RR_{\min\bc{t-1,s}}(\ell\circ\H)\nonumber\\
&{}{}&\qquad + B\sqrt\frac{\log\frac{n}{\delta}}{2s}.
\label{eq:P-t-bound-buf}
\end{eqnarray}
Adding Equations \eqref{eq:Q-t-bound-buf} and \eqref{eq:P-t-bound-buf} gives us the result.
\end{proof}

\subsection{Finite Buffer Algorithms with Strongly Convex Loss Functions}
In this section we prove faster convergence bounds for algorithms that offer \emph{finite-buffer} regret bounds and use strongly convex loss functions. Given the development of the method of decoupling training and auxiliary random variables in the last section, we can proceed with the proof right away.

Our task here is to prove bounds on the following quantity
\[
\frac{1}{n-1}\sum_{t=2}^n\L(h_{t-1}) - \L(h^\ast).
\]
Proceeding as before, we will first prove the following result
\begin{eqnarray}
\Prr{Z^n}{\left.\frac{1}{n-1}\sum_{t=2}^n\L(h_{t-1}) - \L(h^\ast) > \epsilon\right|R^n} \leq \delta.
\label{eq:strongly-convex-buf-inter}
\end{eqnarray}
This will allow us, upon taking expectations over $R^n$, show the following
\[
\Prr{W^n}{\frac{1}{n-1}\sum_{t=2}^n\L(h_{t-1}) - \L(h^\ast) > \epsilon} \leq \delta,
\]
which shall complete the proof.

In order to prove the statement given in Equation~\eqref{eq:strongly-convex-buf-inter}, we will use Theorem~\ref{thm:fast-batch}. As we did in the case of all-pairs loss functions, consider the loss function $\wp(h,\vecz') := \EE{\vecz}{\ell(h,\vecz,\vecz')}$ with $\P$ and $\hat\P$ as the associated population and empirical risk functionals. Clearly, if $\ell$ is $L$-Lipschitz and $\sigma$-strongly convex then so is $\wp$. By linearity of expectation, for any $h \in \H$, $\P(h) = \L(h)$. Suppose that given $R^{t-1}$ the indices $\tau_1,\ldots,\tau_s$ are present in the buffer $B_t$ at time $t$. Applying Theorem~\ref{thm:fast-batch} on $h_{t-1}$ at the $t\th$ step with the loss function $\wp$ gives us that given $R^{t-1}$, with probability at least $1-\delta$ over the choice of $Z^{t-1}$, 
\begin{align*}
\L(h_{t-1}) - \L(h^\ast) \leq {} & (1 + \epsilon)\br{\Ltbuft(h_{t-1}) - \Ltbuft(h^\ast)}\\
								 & + \frac{C_\delta}{\epsilon \sigma (\min\bc{s,t-1})},
\end{align*}
where we have again made the simplifying (yet optional) assumption that prior to time $t = s+1$, the buffer contains an exact copy of the stream. Summing across time steps and taking a union bound, gives us that given $R^n$, with probability at least $1 - \delta$ over the choice of $Z^n$,
\begin{align*}
\frac{1}{n-1}\sum_{t=2}^n\L(h_{t-1}) \leq {} & \L(h^\ast) + \frac{C_{(\delta/n)}}{\epsilon \sigma}\br{\frac{\log 2s}{n-1} + \frac{1}{s}}\\
											 & + \frac{1 + \epsilon}{n-1}\sum_{t=2}^n\Ltbuft(h_{t-1}) - \Ltbuft(h^\ast).
\end{align*}
Let us define as before \vspace*{-3ex}

{\small
\[
\xi_t := \br{\Ltbuft(h_{t-1}) - \Ltbuft(h^\ast)} - \br{\Lbuft(h_{t-1}) - \Lbuft(h^\ast)}.
\]
}
Then using the regret bound $\Rkbufn$ we can write,
\begin{align*}
\frac{1}{n-1}\sum_{t=2}^n\L(h_{t-1}) \leq {} & \L(h^\ast) + \frac{1 + \epsilon}{n-1}\br{\Rkbufn + \sum_{t=2}^n\xi_t}\\
											 & + \frac{C_{(\delta/n)}}{\epsilon \sigma}\br{\frac{\log 2s}{n-1} + \frac{1}{s}}.
\end{align*}
Assuming $s < n/\log n$ simplifies the above expression to the following:
\begin{align*}
\frac{1}{n-1}\sum_{t=2}^n\L(h_{t-1}) \leq {} & \L(h^\ast) + \frac{1 + \epsilon}{n-1}\br{\Rkbufn + \sum_{t=2}^n\xi_t}\\
											 & + \frac{2C_{(\delta/n)}}{\epsilon \sigma s}.
\end{align*}
Note that this assumption is neither crucial to our proof nor very harsh as for $s = \Om{n}$, we can always apply the results from the \emph{infinite-buffer} setting using Theorem~\ref{thm:strongly-convex-forward}. Moving forward, by using the Bernstein-style inequality from \cite{online-batch-strongly-convex}, one can show with that probability at least $1 - \delta$, we have
\begin{align*}
\sum_{t=1}^n \xi_t \leq \max\bc{8L\sqrt{\frac{\Dk_n}{\sigma}},6B\sqrt{\log \frac{4\log n}{\delta}}}\sqrt{\log \frac{4\log n}{\delta}},
\end{align*}
where $\Dk_n = \sum_{t=2}^n\br{\L(h_{t-1}) - \L(h^\ast)}$. This gives us
\begin{align*}
\frac{\Dk_n}{n-1} \leq {} & \frac{1 + \epsilon}{n-1}\br{\Rkbufn + \max\bc{8L\sqrt{\frac{\mathfrak D_n}{\sigma}},6B\Delta}\Delta} \\
											 & + \frac{2C_{(\delta/n)}}{\epsilon \sigma s}.
\end{align*}
Using \citep[Lemma 4]{online-batch-strongly-convex} and absorbing constants inside the $\O{\cdot}$ notation we get:
\begin{align*}
\Dk_n \leq {} & \br{1 + \epsilon}\Rkbufn + \O{\frac{C_d^2n\log(n/\delta)}{\epsilon s} + \log\frac{\log n}{\delta}}\\
											 & + \O{\sqrt{\br{\Rkbufn + \frac{C_d^2n\log(n/\delta)}{\epsilon s}}\log\frac{\log n}{\delta}}}.
\end{align*}
Let ${\Wk_n} = \max\bc{\Rkbufn,\frac{2C_d^2n\log(n/\delta)}{s}}$. Concentrating only on the portion of the expression involving $\epsilon$ and ignoring the constants, we get
\begin{align*}
& \epsilon\Rkbufn + \frac{C_d^2n\log(n/\delta)}{\epsilon s} + \sqrt{\frac{C_d^2n\log(n/\delta)}{\epsilon s}\log\frac{\log n}{\delta}}\\
\leq {} & \epsilon\Rkbufn + \frac{2C_d^2n\log(n/\delta)}{\epsilon s} \leq \epsilon\Wk_n + \frac{2C_d^2n\log(n/\delta)}{\epsilon s}\\
\leq {} & 2C_d\sqrt{\frac{2\Wk_nn\log(n/\delta)}{s}},
\end{align*}
where the second step follows since $\epsilon \leq 1$ and $s \leq n$ and the fourth step follows by using $\epsilon = \sqrt\frac{2C_d^2n\log(n/\delta)}{\Wk_n s} \leq 1$
Putting this into the excess risk expression gives us
\begin{align*}
\frac{1}{n-1}\sum_{t=2}^n\L(h_{t-1}) \leq {} & \L(h^\ast) + \frac{\Rkbufn}{n-1}\\
											 & + C_d\cdot\O{\sqrt{\frac{\Wk_n\log(n/\delta)}{sn}}},
\end{align*}
which finishes the proof. Note that in case ${\Wk_n} = \Rkbufn$, we get
\begin{align*}
\frac{1}{n-1}\sum_{t=2}^n\L(h_{t-1}) \leq {} & \L(h^\ast) + \frac{\Rkbufn}{n-1}\\
											 & + C_d\cdot\O{\sqrt{\frac{\Rkbufn\log(n/\delta)}{sn}}}.
\end{align*}
On the other hand if $\Rkbufn \leq \frac{2C_d^2n\log(n/\delta)}{s}$, we get
\begin{align*}
\frac{1}{n-1}\sum_{t=2}^n\L(h_{t-1}) \leq {} & \L(h^\ast) + \frac{\Rkbufn}{n-1}\\
											 & + C_d^2\cdot\O{\frac{\log(n/\delta)}{s}}.
\end{align*}

\section{Proof of Theorem~\ref{thm:contraction-rademacher}}
\label{appsec:proof-thm-contraction-rademacher}
Recall that we are considering a composition classes of the form $\ell\circ\H := \bc{(\vecz,\vecz') \mapsto \ell(h,\vecz,\vecz'), h \in \H}$ where $\ell$ is some Lipschitz loss function. We have $\ell(h,z_1,z_2) = \phi\br{h(x_1,x_2)Y(y_1,y_2)}$ where $Y(y_1,y_2) = y_1 - y_2$ or $Y(y_1,y_2) = y_1y_2$ and $\phi : \R \rightarrow \R$ involves some margin loss function. We also assume that $\phi$ is point wise $L$-Lipschitz. Let $Y = \underset{y_1,y_2 \in \Y}{\sup}\abs{Y(y_1,y_2)}$.

\begin{thm}[Theorem~\ref{thm:contraction-rademacher} restated]
\[
\RR_n(\ell\circ\H) \leq LY\RR_n(\H)
\]
\end{thm}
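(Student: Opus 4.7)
The plan is to reduce the claim to a standard Ledoux--Talagrand style contraction inequality for Rademacher processes, applied conditionally on the data samples.

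First, I would unfold the definitions. By definition,
\[
\RR_n(\ell\circ\H) = \EE{\vecz,\vecz_1,\ldots,\vecz_n,\vecsigma}{\sup_{h \in \H}\frac{1}{n}\sum_{\tau=1}^n \epsilon_\tau \phi\bigl(h(\vecx,\vecx_\tau)\,Y(y,y_\tau)\bigr)}.
\]
The strategy is to condition on $\vecz$ and $\vecz_1,\ldots,\vecz_n$ and work with the inner expectation over the Rademacher variables $\epsilon_\tau$ only; this is valid by the tower property.

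Next, for each $\tau$, with $(y,y_\tau)$ held fixed, I would introduce the auxiliary function $\psi_\tau : u \mapsto \phi(u\,Y(y,y_\tau))$. Since $\phi$ is $L$-Lipschitz and $|Y(y,y_\tau)| \le Y$, the map $\psi_\tau$ is $L\,|Y(y,y_\tau)|$-Lipschitz, and therefore $LY$-Lipschitz. This rewrites the conditional Rademacher sum as
\[
\EE{\vecsigma}{\sup_{h \in \H}\frac{1}{n}\sum_{\tau=1}^n \epsilon_\tau \,\psi_\tau\!\bigl(h(\vecx,\vecx_\tau)\bigr)}.
\]

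Now I would invoke the Ledoux--Talagrand contraction principle for Rademacher averages, which states that if $\psi_1,\ldots,\psi_n$ are each $M$-Lipschitz, then for any real-valued function class $\F$ indexed over the same $n$ coordinates,
\[
\EE{\vecsigma}{\sup_{f \in \F}\sum_{\tau=1}^n \epsilon_\tau\, \psi_\tau(f_\tau)}
\;\le\; M \cdot \EE{\vecsigma}{\sup_{f \in \F}\sum_{\tau=1}^n \epsilon_\tau\, f_\tau}.
\]
Applied with $f_\tau = h(\vecx,\vecx_\tau)$ and $M = LY$, this yields
\[
\EE{\vecsigma}{\sup_{h \in \H}\frac{1}{n}\sum_{\tau=1}^n \epsilon_\tau\, \psi_\tau\!\bigl(h(\vecx,\vecx_\tau)\bigr)}
\;\le\; LY \cdot \EE{\vecsigma}{\sup_{h \in \H}\frac{1}{n}\sum_{\tau=1}^n \epsilon_\tau\, h(\vecx,\vecx_\tau)}.
\]
Finally, I would take expectation over $\vecz,\vecz_1,\ldots,\vecz_n$ on both sides; the right-hand side becomes exactly $LY\,\RR_n(\H)$ and the bound is established.

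The only real subtlety, and the step I would be most careful about, is the exact form of the contraction inequality invoked: some formulations carry an extra factor of $2$ unless $\phi(0)=0$ or the class is symmetric. I would either cite the sharp version available for Rademacher (as opposed to Gaussian) processes, or, to be safe, recenter by passing to $\tilde{\psi}_\tau(u) := \psi_\tau(u) - \psi_\tau(0)$; the $\psi_\tau(0)$ terms contribute $\sum_\tau \epsilon_\tau \psi_\tau(0)$ inside the supremum, which is independent of $h$ and has zero Rademacher expectation, so it drops out cleanly. Everything else is just bookkeeping with the conditioning.
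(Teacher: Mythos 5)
Your proposal is correct and follows essentially the same route as the paper's proof: condition on the data so that each $Y(y,y_\tau)$ becomes a constant, fold it into a per-coordinate $LY$-Lipschitz function, recenter by subtracting $\phi(0)$ (whose contribution vanishes since the Rademacher variables have zero mean), and apply the Ledoux--Talagrand contraction inequality for empirical Rademacher averages before taking the outer expectation. The subtlety you flag about needing $\psi_\tau(0)=0$ for the sharp (factor-free) contraction is precisely how the paper handles it, via $\tilde\phi(x)=\phi(x)-\phi(0)$.
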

\begin{proof}
Let $\tilde\phi(x) = \phi(x) - \phi(0)$. Note that $\tilde\phi(\cdot)$ is point wise $L$-Lipschitz as well as satisfies $\tilde\phi(0) = 0$. Let $Y = \underset{y,y' \in \Y}{\sup}\abs{Y(y,y')}$.

We will require the following contraction lemma that we state below.
\begin{thm}[Implicit in proof of \cite{talagrand-book}, Theorem 4.12]
Let $\H$ be a set of bounded real valued functions from some domain $\X$ and let $\vecx_1,\ldots,\vecx_n$ be arbitrary elements from $\X$. Furthermore, let $\phi_i : \R \rightarrow \R$, $i = 1,\ldots,n$ be $L$-Lipschitz functions such that $\phi_i(0) = 0$ for all $i$. Then we have
\[
\E{\underset{h \in \H}{\sup}\frac{1}{n}\sum_{i=1}^n \epsilon_i\phi_i(h(\vecx_i))} \leq L \E{\underset{h \in \H}{\sup}\frac{1}{n}\sum_{i=1}^n \epsilon_ih(\vecx_i)}.
\]
\end{thm}
Using the above inequality we can state the following chain of (in)equalities:
\begin{eqnarray*}
\RR_n(\ell\circ\H) &=& \E{\underset{h \in \H}{\sup}\frac{1}{n}\sum_{i=1}^n \epsilon_i{\ell(h,\vecz,\vecz_i)}}\\
		  &=& \E{\underset{h \in \H}{\sup}\frac{1}{n}\sum_{i=1}^n \epsilon_i{\phi\br{h(\vecx,\vecx_i)Y(y,y_i)}}}\\
		  &=& \E{\underset{h \in \H}{\sup}\frac{1}{n}\sum_{i=1}^n \epsilon_i{\tilde\phi\br{h(\vecx,\vecx_i)Y(y,y_i)}}}\\
		  && +\ \phi(0)\E{\frac{1}{n}\sum_{i=1}^n \epsilon_i}\\
		  &=& \E{\underset{h \in \H}{\sup}\frac{1}{n}\sum_{i=1}^n \epsilon_i{\tilde\phi\br{h(\vecx,\vecx_i)Y(y,y_i)}}}\\
		  &\leq& LY\E{\underset{h \in {\H}}{\sup}\frac{1}{n}\sum_{i=1}^n \epsilon_ih(\vecx,\vecx_i)}\\
		  &=& LY\RR_n(\H),
\end{eqnarray*}
where the fourth step follows from linearity of expectation. The fifth step is obtained by applying the contraction inequality to the functions $\psi_i : x \mapsto \tilde\phi(a_ix)$ where $a_i = Y(y,y_i)$. We exploit the fact that the contraction inequality is actually proven for the empirical Rademacher averages due to which we can take $a_i = Y(y,y_i)$ to be a constant dependent only on $i$, use the inequality, and subsequently take expectations. We also have, for any $i$ and any $x,y \in \R$,
\begin{eqnarray*}
\abs{\psi_i(x) - \psi_i(y)} &=& \abs{\tilde\phi(a_ix) - \tilde\phi(a_iy)}\\
							&\leq& L\abs{a_ix - a_iy}\\
							&\leq& L\abs{a_i}\abs{x - y}\\
							&\leq& LY\abs{x - y},
\end{eqnarray*}
which shows that every function $\psi_i(\cdot)$ is $LY$-Lipschitz and satisfies $\psi_i(0) = 0$. This makes an application of the contraction inequality possible on the empirical Rademacher averages which upon taking expectations give us the result.
\end{proof}

\section{Applications}
\label{appsec:apps-supp}

In this section we shall derive Rademacher complexity bounds for hypothesis classes used in various learning problems. Crucial to our derivations shall be the following result by \cite{rad-bounds}. Recall the usual definition of Rademacher complexity of a \emph{univariate} function class $\F = \bc{f : \X \rightarrow \R}$
\[
\RR_n(\F) = \E{\underset{f \in \F}{\sup}\ \frac{1}{n} \sum_{i=1}^n \epsilon_if(\vecx_i)}.
\]
\begin{thm}[\cite{rad-bounds}, Theorem 1]
\label{thm:rad-bounds}
Let $\W$ be a closed and convex subset of some Banach space equipped with a norm $\norm{\cdot}$ and dual norm $\norm{\cdot}_\ast$. Let $F : \W \rightarrow \R$ be $\sigma$-strongly convex with respect to $\norm{\cdot}_\ast$. Assume $\W \subseteq \bc{\vecw : F(\vecw) \leq W_\ast^2}$. Furthermore, let $\X = \bc{\vecx : \norm{\vecx} \leq X}$ and $\F_\W := \bc{\vecw \mapsto \ip{\vecw}{\vecx} : \vecw \in \W, \vecx \in \X}$. Then, we have
\[
\RR_n(\F_\W) \leq XW_\ast\sqrt{\frac{2}{\sigma n}}.
\]
\end{thm}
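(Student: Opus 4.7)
My plan is to prove this via Fenchel duality, using the classical correspondence between strong convexity of $F$ and strong smoothness of its conjugate $F^*$. First I would rewrite the Rademacher average as a single supremum of an inner product: letting $\vecs_n = \frac{1}{n}\sum_{\tau=1}^n \epsilon_\tau \vecx_\tau$, we have
\[
\RR_n(\F_\W) = \E\sup_{\vecw \in \W}\ip{\vecw}{\vecs_n}.
\]
The idea is then to introduce a free scaling parameter $\lambda > 0$ and apply the Fenchel--Young inequality,
\[
\lambda\ip{\vecw}{\vecs_n} = \ip{\vecw}{\lambda\vecs_n} \leq F(\vecw) + F^\ast(\lambda\vecs_n) \leq W_\ast^2 + F^\ast(\lambda\vecs_n),
\]
where the last step uses the hypothesis that $F$ is bounded by $W_\ast^2$ on $\W$. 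Dividing by $\lambda$ and taking expectations, the task reduces to bounding $\E[F^\ast(\lambda\vecs_n)]$.

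The key ingredient here is the duality lemma: if $F$ is $\sigma$-strongly convex with respect to $\norm{\cdot}_\ast$, then $F^\ast$ is $(1/\sigma)$-strongly smooth with respect to the predual norm $\norm{\cdot}$. Strong smoothness means
\[
F^\ast(\vecu+\vecv) \leq F^\ast(\vecu) + \ip{\nabla F^\ast(\vecu)}{\vecv} + \frac{1}{2\sigma}\norm{\vecv}^2.
\]
I would apply this incrementally along the partial sums $\vecs_n^{(k)} := \frac{\lambda}{n}\sum_{\tau=1}^k \epsilon_\tau \vecx_\tau$. At each step, conditioning on $\epsilon_1,\ldots,\epsilon_{k-1}$ and taking expectation over the symmetric Rademacher variable $\epsilon_k$ kills the linear term, leaving an $\O{\lambda^2\norm{\vecx_k}^2 / (\sigma n^2)}$ contribution. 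Since $\norm{\vecx_k} \leq X$, iterating $n$ times yields
\[
\E[F^\ast(\lambda\vecs_n)] \leq F^\ast(\vec 0) + \frac{\lambda^2 X^2}{2\sigma n},
\]
and $F^\ast(\vec 0)$ can be taken to be $0$ after a harmless shift of $F$.

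Combining the two bounds gives $\RR_n(\F_\W) \leq \frac{W_\ast^2}{\lambda} + \frac{\lambda X^2}{2\sigma n}$, and optimizing over $\lambda$ (the stationary point is $\lambda = W_\ast \sqrt{2\sigma n}/X$) yields exactly $XW_\ast\sqrt{2/(\sigma n)}$. The main technical obstacle is the strong-convexity/strong-smoothness duality: invoking it cleanly for general Banach spaces requires care about reflexivity and the identification $(\norm{\cdot}_\ast)_\ast = \norm{\cdot}$, but this is a standard tool and I would simply cite it. The rest of the argument --- Fenchel--Young, a telescoping martingale-style expansion of $F^\ast$ along the partial sums, and a one-variable optimization --- is mechanical.
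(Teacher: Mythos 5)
The paper itself offers no proof of this statement: it is imported verbatim as Theorem~1 of \cite{rad-bounds}, so the only internal ``proof'' is the citation. Your argument is correct and is essentially the proof from that reference --- write $\RR_n(\F_\W)=\E{\sup_{\vecw\in\W}\ip{\vecw}{\vecs_n}}$, apply Fenchel--Young with a free scale $\lambda$, bound $\E{F^\ast(\lambda\vecs_n)}$ by telescoping the $(1/\sigma)$-strong smoothness of $F^\ast$ (dual to the $\sigma$-strong convexity of $F$) along the partial sums so that the symmetric Rademacher variables annihilate the linear terms, and then optimize $\lambda$. One caveat worth recording: the step ``$F^\ast(\veczero)$ can be taken to be $0$ after a harmless shift of $F$'' is harmless only if $\inf_{\vecw\in\W}F(\vecw)\geq 0$; the original statement in \cite{rad-bounds} explicitly assumes $\inf_{\vecw\in\W}F(\vecw)=0$, a normalization that the restatement above silently drops, and without some such hypothesis the claimed bound cannot hold (shifting $F$ down by a large constant makes $F\leq W_\ast^2$ true with tiny $W_\ast$ on an arbitrarily large set). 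So your proof is sound under the hypotheses of the cited theorem, and it usefully pinpoints exactly where that implicit normalization is needed.
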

We note that Theorem~\ref{thm:rad-bounds} is applicable only to first order learning problems since it gives bounds for univariate function classes. However, our hypothesis classes consist of bivariate functions which makes a direct application difficult. Recall our extension of Rademacher averages to bivariate function classes:
\[
\RR_n(\H) = \E{\underset{h \in \H}{\sup}\ \frac{1}{n}\sum_{i=1}^n \epsilon_ih(\vecz,\vecz_i)}
\]
where the expectation is over $\epsilon_i$, $\vecz$ and $\vecz_i$. To overcome the above problem we will use the following two step proof technique:
\begin{enumerate}
	\item \textbf{Order reduction}: We shall cast our learning problems in a modified input domain where predictors behave linearly as univariate functions. More specifically, given a hypothesis class $\H$ and domain $\X$, we shall construct a modified domain $\tilde\X$ and a map $\psi : \X \times \X \rightarrow \tilde\X$ such that for any $\vecx, \vecx' \in \X$ and $h \in \H$, we have $h(\vecx,\vecx') = \ip{h}{\psi(\vecx,\vecx')}$.
	\item \textbf{Conditioning}: For every $\vecx \in \X$, we will create a function class $\F_{\vecx} = \bc{\vecx' \mapsto \ip{h}{\psi(\vecx,\vecx')}: h \in \H}$. Since $\F_\vecx$ is a univariate function class, we will use Theorem~\ref{thm:rad-bounds} to bound $\RR_n(\F_\vecx)$. Since $\RR_n(\H) = \EE{\vecx}{\RR_n(\F_\vecx)}$, we shall obtain Rademacher complexity bounds for $\H$.
\end{enumerate}
 We give below some examples of learning situations where these results may be applied.

As before, for any subset $X$ of a Banach space and any norm $\norm{\cdot}_p$, we define $\norm{X}_p := \underset{\vecx \in X}{\sup} \norm{\vecx}_p$. We also define norm bounded balls in the Banach space as $\B_p(r) := \bc{\vecx : \norm{\vecx}_p \leq r}$ for any $r > 0$. Let the domain $\X$ be a subset of $\R^d$.

For sake of convenience we present the examples using loss functions for classification tasks but the same can be extended to other learning problems such as regression, multi-class classification and ordinal regression.

\subsection{AUC maximization for Linear Prediction}

\begin{table}[t]
	\centering
	\begin{tabular}{|c|c|}
		\hline
		Hypothesis class & Rademacher Complexity\\\hline
		$\B_q(\norm{\W}_q)$ & $2\norm{\X}_p\norm{\W}_q\sqrt{\frac{p - 1}{n}}$\\\hline
		$\B_1(\norm{\W}_1)$& $2\norm{\X}_\infty \norm{\W}_1\sqrt{\frac{e\log d}{n}}$\\\hline
	\end{tabular}
	\caption{Rademacher complexity bounds for AUC maximization. We have $1/p+1/q = 1$ and $q > 1$.}
	\label{tab:rad-bounds-auc}
\end{table}

In this case the goal is to maximize the area under the ROC curve for a linear classification problem at hand. This translates itself to a learning situation where $\W, \X \subseteq \R^d$. We have $h_{\vecw}(\vecx,\vecx') = \vecw^\top\vecx-\vecw^\top\vecx'$ and $\ell(h_\vecw,z_1,z_2) = \phi\br{(y-y')h_{\vecw}(\vecx,\vecx')}$ where $\phi$ is the hinge loss or the exponential loss \cite{oam-icml}.

In order to apply Theorem~\ref{thm:rad-bounds}, we rewrite the hypothesis as $h_{\vecw}(\vecx,\vecx') = \vecw^\top(\vecx - \vecx')$ and consider the input domain $\tilde\X = \bc{\vecx - \vecx' : \vecx,\vecx' \in \X}$ and the map $\psi : (\vecx,\vecx') \mapsto \vecx - \vecx'$. Clearly if $\X \subseteq \bc{\vecx : \norm{\vecx} \leq X}$ then $\tilde\X \subseteq \bc{\vecx : \norm{\vecx} \leq 2X}$ and thus we have $\norm{\tilde\X} \leq 2\norm{\X}$ for any norm $\norm{\cdot}$. It is now possible to regularize the hypothesis class $\W$ using a variety of norms.

If we wish to define our hypothesis class as $\B_q(\cdot), q > 1$, then in order to apply Theorem~\ref{thm:rad-bounds}, we can use the regularizer $F(\vecw) = \norm{\vecw}_q^2$. If we wish the sparse hypotheses class, $\B_1(W_1)$, we can use the regularizer $F(\vecw) = \norm{\vecw}_q^2$ with $q = \frac{\log d}{\log d - 1}$ as this regularizer is strongly convex with respect to the $L_1$ norm \cite{ambuj-learn-matrices}. Table~\ref{tab:rad-bounds-auc} gives a succinct summary of such possible regularizations and corresponding Rademacher complexity bounds.

\textit{Kernelized AUC maximization}: Since the $L_2$ regularized hypothesis class has a dimension independent Rademacher complexity, it is possible to give guarantees for algorithms performing AUC maximization using kernel classifiers as well. In this case we have a Mercer kernel $K$ with associated reproducing kernel Hilbert space $\H_K$ and feature map $\Phi_K : \X \rightarrow \H_K$. Our predictors lie in the RKHS, i.e., $\vecw \in \H_K$ and we have $h_{\vecw}(\vecx,\vecx') = \vecw^\top\br{\Phi_K(\vecx)-\Phi_K(\vecx')}$. In this case we will have to use the map $\psi : (\vecx,\vecx') \mapsto \Phi_K(\vecx) - \Phi_K(\vecx') \in \H_K$. If the kernel is bounded, i.e., for all $\vecx, \vecx' \in \X$, we have $\abs{K(\vecx,\vecx')} \leq \kappa^2$, then we can get a Rademacher average bound of $2\kappa \norm{\W}_2\sqrt{\frac{1}{n}}$.

\subsection{Linear Similarity and Mahalanobis Metric learning}

\begin{table}[t]
	\centering
	\begin{tabular}{|c|c|}
		\hline
		Hypothesis Class & Rademacher Complexity\\\hline
		$\B_{2,2}(\norm{\W}_{2,2})$ & $\norm{\X}_2^2\norm{\W}_{2,2}\sqrt{\frac{1}{n}}$\\\hline
		$\B_{2,1}(\norm{\W}_{2,1})$ & $\norm{\X}_2\norm{\X}_\infty \norm{\W}_{2,1}\sqrt{\frac{e\log d}{n}}$\\\hline
		$\B_{1,1}(\norm{\W}_{1,1})$ & $\norm{\X}_\infty^2 \norm{\W}_{1,1}\sqrt{\frac{2e\log d}{n}}$\\\hline
		$\B_{S(1)}(\norm{\W}_{S(1)})$ & $\norm{\X}_2^2\norm{\W}_{S(1)}\sqrt{\frac{e\log d}{n}}$\\\hline
	\end{tabular}
	\caption{Rademacher complexity bounds for Similarity and Metric learning}
	\label{tab:metric-rad-bounds}
\end{table}
A variety of applications, such as in vision, require one to fine tune one's notion of proximity by learning a similarity or metric function over the input space. We consider some such examples below. In the following, we have $\vecW \in \R^{d \times d}$.
\begin{enumerate}
	\item \textit{Mahalanobis metric learning}: in this case we wish to learn a metric $M_\vecW(\vecx,\vecx') = (\vecx-\vecx')^\top\vecW(\vecx-\vecx')$ using the loss function $\ell(M_\vecW,\vecz,\vecz') = \phi\br{yy'\br{1- M_\vecW^2(\vecx,\vecx')}}$ \cite{reg-metric-learn}.
	\item \textit{Linear kernel learning}: in this case we wish to learn a linear kernel function $K_\vecW(\vecx,\vecx') = \vecx^\top \vecW \vecx', \vecW \succeq 0$. A variety of loss functions have been proposed to aid the learning process
	\begin{enumerate}
		\item \emph{Kernel-target Alignment}: the loss function used is $\ell(K_\vecW,\vecz,\vecz') = \phi\br{yy'K_\vecW(\vecx,\vecx')}$ where $\phi$ is used to encode some notion of alignment \cite{kernel-target-alignment, two-stage-cortes}.
		\item \emph{S-Goodness}: this is used in case one wishes to learn a \emph{good} similarity function that need not be positive semi definite \cite{good-sim-learn, good-sim} by defining $\ell(K_\vecW,\vecz) = \phi\br{y\EE{(\vecx',y')}{y'K_\vecW(\vecx,\vecx')}}$.
	\end{enumerate}
\end{enumerate}
In order to apply Theorem~\ref{thm:rad-bounds}, we will again rewrite the hypothesis and consider a different input domain. For the similarity learning problem, write the similarity function as $K_\vecW(\vecx,\vecx') = \ip{\vecW}{\vecx\vecx'^\top}$ and consider the input space $\tilde\X = \bc{{\vecx\vecx'^\top} : \vecx,\vecx' \in \X} \subseteq \R^{d \times d}$ along with the map $\psi : (\vecx,\vecx') \mapsto \vecx\vecx'^\top$. For the metric learning problem, rewrite the metric as $M_\vecW(\vecx,\vecx') = \ip{\vecW}{(\vecx-\vecx')(\vecx-\vecx')^\top}$ and consider the input space $\tilde\X = \bc{{(\vecx-\vecx')(\vecx-\vecx')^\top} : \vecx,\vecx' \in \X} \subseteq \R^{d \times d}$ along with the map $\psi : (\vecx,\vecx') \mapsto (\vecx-\vecx')(\vecx-\vecx')^\top$.

In this case it is possible to apply a variety of matrix norms to regularize the hypothesis class. We consider the following (mixed) matrix norms : $\norm{\cdot}_{1,1}$, $\norm{\cdot}_{2,1}$ and $\norm{\cdot}_{2,2}$.
We also consider the Schatten norm $\norm{\vecX}_{S(p)} := \norm{\vecsigma(\vecX)}_p$ that includes the widely used \emph{trace norm} $\norm{\vecsigma(\vecX)}_1$. As before, we define norm bounded balls in the Banach space as follows: $\B_{p,q}(r) := \bc{\vecx : \norm{\vecx}_{p,q} \leq r}$.

Using results on construction of strongly convex functions with respect to theses norms from \cite{ambuj-learn-matrices}, it is possible to get bounds on the Rademacher averages of the various hypothesis classes. However these bounds involve norm bounds for the modified domain $\tilde\X$. We make these bounds explicit by expressing norm bounds for $\tilde\X$ in terms of those for $\X$. From the definition of $\tilde\X$ for the similarity learning problems, we get, for any $p,q \geq 1$, $\norm{\tilde\X}_{p,q} \leq \norm{\X}_p\norm{\X}_q$. Also, since every element of $\tilde\X$ is of the form $\vecx\vecx'^\top$, it has only one non zero singular value $\norm{\vecx}_2\norm{\vecx'}_2$ which gives us $\norm{\tilde X}_{S(p)} \leq \norm{\X}_2^2$ for any $p \geq 1$.

For the metric learning problem, we can similarly get $\norm{\tilde X}_{p,q} \leq 4\norm{\X}_p\norm{\X}_q$ and $\norm{\tilde X}_{S(p)} \leq 4\norm{\X}_2^2$ for any $p \geq 1$ which allows us to get similar bounds as those for similarity learning but for an extra constant factor. We summarize our bounds in Table~\ref{tab:metric-rad-bounds}. We note that \cite{gen-bound-metric-learn} devote a substantial amount of effort to calculate these values for the mixed norms on a case-by-case basis (and do not consider Schatten norms either) whereas, using results exploiting strong convexity and strong smoothness from \cite{ambuj-learn-matrices}, we are able to get the same as simple corollaries.

\subsection{Two-stage Multiple kernel learning}

\begin{table}[t]
	\centering
	\begin{tabular}{|c|c|}
		\hline
		Hypothesis Class & Rademacher Avg. Bound\\\hline
		$\S_2(1)$ & $\kappa^2\sqrt{\frac{p}{n}}$\\\hline
		$\Delta(1)$ & $\kappa^2\sqrt{\frac{e\log p}{n}}$\\\hline
	\end{tabular}
	\caption{Rademacher complexity bounds for Multiple kernel learning}
	\label{tab:mkl-rad-bounds}
\end{table}

The analysis of the previous example can be replicated for learning non-linear Mercer kernels as well. Additionally, since all Mercer kernels yield Hilbertian metrics, these methods can be extended to learning Hilbertian metrics as well. However, since Hilbertian metric learning has not been very popular in literature, we restrict our analysis to kernel learning alone. We present this example using the framework proposed by \cite{two-stage-mkl-general} due to its simplicity and generality.

We are given $p$ Mercer kernels $K_1,\ldots,K_p$ that are bounded, i.e., for all $i$, $\abs{K_i(\vecx,\vecx')} \leq \kappa^2$ for all $\vecx,\vecx' \in \X$ and our task is to find a combination of these kernels given by a vector $\vecmu \in \R^p, \vecmu \geq 0$ such that the kernel $K_{\vecmu}(\vecx,\vecx')=\sum_{i=1}^p\vecmu_iK_i(\vecx,\vecx')$ is a \emph{good} kernel \cite{good-sim}. In this case the loss function used is $\ell(\vecmu,\vecz,\vecz') = \phi\br{yy'K_{\vecmu}(\vecx,\vecx')}$ where $\phi(\cdot)$ is meant to encode some notion of alignment. \citet{two-stage-mkl-general} take $\phi(\cdot)$ to be the hinge loss.

To apply Theorem~\ref{thm:rad-bounds}, we simply use the ``K-space'' construction proposed in \cite{two-stage-mkl-general}. We write $K_{\vecmu}(\vecx,\vecx') = \ip{\vecmu}{z(\vecx,\vecx')}$ where $z(\vecx,\vecx') = \br{K_1(\vecx,\vecx'),\ldots,K_p(\vecx,\vecx')}$. Consequently our modified input space looks like $\tilde\X = \bc{z(\vecx,\vecx') : \vecx,\vecx' \in \X} \subseteq \R^p$ with the map $\psi : (\vecx,\vecx') \mapsto z(\vecx,\vecx')$. Popular regularizations on the kernel combination vector $\vecmu$ include the sparsity inducing $L_1$ regularization that constrains $\vecmu$ to lie on the unit simplex $\Delta(1) = \bc{\vecmu : \norm{\vecmu}_1 = 1, \vecmu \geq 0}$ and $L_2$ regularization that restricts $\vecmu$ to lie on the unit sphere $\S_2(1) = \bc{\vecmu : \norm{\vecmu}_2 = 1, \vecmu \geq 0}$. Arguments similar to the one used to discuss the case of AUC maximization for linear predictors give us bounds on the Rademacher averages for these two hypothesis classes in terms of $\norm{\tilde X}_2$ and $\norm{\tilde X}_\infty$. Since $\norm{\tilde X}_2 \leq \kappa^2\sqrt p$ and $\norm{\tilde X}_\infty \leq \kappa^2$, we obtain explicit bounds on the Rademacher averages that are given in Table~\ref{tab:mkl-rad-bounds}.

We note that for the $L_1$ regularized case, our bound has a similar dependence on the number of kernels, i.e., $\sqrt{\log p}$ as the bounds presented in \cite{mkl-bound-cortes}. For the $L_2$ case however, we have a worse dependence of $\sqrt p$ than \citet{mkl-bound-cortes} who get a $\sqrt[4]{p}$ dependence. However, it is a bit unfair to compare the two bounds since \citet{mkl-bound-cortes} consider single stage kernel learning algorithms that try to learn the kernel combination as well as the classifier in a single step whereas we are dealing with a two-stage process where classifier learning is disjoint from the kernel learning step.

\section{Regret Bounds for Reservoir Sampling Algorithms}
\label{appsec:regret-rs}

The Reservoir Sampling algorithm \cite{vitter-rs} essentially performs sampling without replacement which means that the samples present in the buffer are not i.i.d. samples from the preceding stream. Due to this, proving regret bounds by way of uniform convergence arguments becomes a bit more difficult. However, there has been a lot of work on analyzing learning algorithms that learn from non-i.i.d. data such as data generated by ergodic processes. Of particular interest is a result by Serfling \footnote{R. J. Serfling, Probability Inequalities for the Sum in Sampling without Replacement, \emph{The Annals of Statistics}, 2(1):39-48, 1974.} that gives Hoeffding style bounds for data generated from a finite population without replacement.

Although Serfling's result does provide a way to analyze the \rsorig algorithm, doing so directly would require using arguments that involve covering numbers that offer bounds that are dimension dependent and that are not tight. It would be interesting to see if equivalents of the McDiarmid's inequality and Rademacher averages can be formulated for samples obtained without replacement to get tighter results. For our purposes, we remedy the situation by proposing a new sampling algorithm that gives us i.i.d. samples in the buffer allowing existing techniques to be used to obtain regret bounds (see Appendices~\ref{appsec:rs-x-analysis} and \ref{appsec:proof-thm-rs-x-regret}).

\section{Analysis of the \mrs Algorithm}
\label{appsec:rs-x-analysis}
In this section we analyze the \mrs substream sampling algorithm and prove its statistical properties. Recall that the \mrs algorithm simply admits a point into the buffer if there is space. It performs a \emph{Repopulation step} at the first instance of overflow which involves refilling the buffer by sampling with replacement from all the set of points seen so far (including the one that caused the overflow). In subsequent steps, a \emph{Normal update step} is performed. The following theorem formalizes the properties of the sampling algorithm

\begin{thm}
\label{thm:rs-x-prop}
Suppose we have a stream of elements $\vecz_1,\ldots,\vecz_n$ being sampled into a buffer $B$ of size $s$ using the \mrs algorithm. Then at any time $t \geq s + 2$, each element of $B$ is an i.i.d. sample from the set $Z^{t-1}$.
\end{thm}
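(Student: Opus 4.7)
The proof plan is a straightforward induction on $t$, with the base case handled by the Repopulation step and the inductive step handled by the Normal update step. Throughout, I will condition on the realization of the stream $Z^n$, so that the only randomness comes from the buffer update decisions.

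For the base case, I take $t = s+2$, which corresponds to the state of $B$ at the start of step $s+2$ (equivalently, after the Repopulation step at step $s+1$). By construction, during Repopulation the buffer is refilled by drawing $s$ points uniformly with replacement from the $s+1$-element set $\bc{\vecz_1,\ldots,\vecz_{s+1}}=Z^{s+1}$. These draws are by definition i.i.d.\ uniform over $Z^{s+1}$, which is precisely the desired conclusion at $t=s+2$.

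For the inductive step, assume that at the start of step $t$ (with $t \ge s+2$) the buffer entries $B[1],\ldots,B[s]$ are i.i.d.\ uniform over $Z^{t-1}$. At step $t$, \mrs draws independent Bernoulli$(1/t)$ indicators $X_1,\ldots,X_s$ and sets $B'[i]=\vecz_t$ if $X_i=1$ and $B'[i]=B[i]$ otherwise. For the marginal of each $B'[i]$, I simply compute: $\Pr[B'[i]=\vecz_t]=1/t$, while for $j<t$, $\Pr[B'[i]=\vecz_j]=(1-1/t)\cdot 1/(t-1)=1/t$, so $B'[i]$ is uniform over $Z^t$. For the joint distribution across slots $i\ne i'$, note that $B'[i]$ is a deterministic function of the pair $(X_i,B[i])$ and similarly for $B'[i']$; by the induction hypothesis together with the independence of the Bernoulli trials across slots, the pairs $(X_i,B[i])$ and $(X_{i'},B[i'])$ are mutually independent, hence so are $B'[i]$ and $B'[i']$. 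The same argument extends to the full joint distribution of $B'[1],\ldots,B'[s]$, completing the induction.

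The main conceptual point—and the reason \mrs is designed with per-slot independent Bernoulli trials rather than a single replacement decision as in \rsorig—is precisely to keep the buffer slots mutually independent after updating. In \rsorig, the decision of which slot is replaced couples the slots, so although each slot is marginally uniform, the slots are not independent (they cannot coincide unless they both equal $\vecz_t$ through separate earlier steps). In \mrs, independent Bernoulli decisions across slots ensure that the product-form distribution is preserved across steps. I expect no real obstacle beyond carefully bookkeeping what ``time $t$'' refers to (start-of-step versus end-of-step) and keeping the conditioning on $Z^n$ explicit so that the induction hypothesis is a statement purely about the buffer-sampling randomness.
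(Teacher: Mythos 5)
Your proposal is correct and follows essentially the same route as the paper: induction with the repopulation step giving the base case at $t=s+2$, the marginal computation $(1-1/t)\cdot\frac{1}{t-1}=\frac{1}{t}$ showing each slot stays uniform, and mutual independence of the slots preserved because each slot is updated by its own independent Bernoulli trial. Your explicit argument that each updated slot is a deterministic function of the independent pair $(X_i,B[i])$ is, if anything, a slightly more careful justification of the independence claim than the paper's one-line remark.
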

\begin{proof}
To prove the results, let us assume that the buffer contents are addressed using the variables $\zeta_1, \ldots, \zeta_s$. We shall first concentrate on a fixed element, say $\zeta_1$ (which we shall call simply $\zeta$ for notational convenience) of the buffer and inductively analyze the probability law $\P_t$ obeyed by $\zeta$ at each time step $t \geq s + 2$.

We will prove that the probability law obeyed by $\zeta$ at time $t$ is $\P_t(\zeta) = \frac{1}{t-1}\sum_{\tau=1}^{t-1}\ind_{\bc{\zeta = \vecz_\tau}}$. The law is interpreted as saying the following: for any $\tau \leq t-1$, $\Pr{\zeta = \vecz_\tau} = \frac{1}{t-1}$ and shows that the element $\zeta$ is indeed a uniform sample from the set $Z^{t-1}$. We would similarly be able to show this for all locations $\zeta_2, \ldots, \zeta_s$ which would prove that the elements in the buffer are indeed identical samples from the preceding stream. Since at each step, the \mrs algorithm updates all buffer locations independently, the random variables $\zeta_1, \ldots, \zeta_s$ are independent as well which would allow us to conclude that at each step we have $s$ i.i.d. samples in the buffer as claimed.

We now prove the probability law for $\zeta$. We note that the repopulation step done at time $t = s+1$ explicitly ensures that at step $t = s+2$, the buffer contains $s$ i.i.d samples from $Z^{s+1}$ i.e. $\P_{s+2}(\zeta) = \frac{1}{s+1}\sum_{\tau=1}^{s+1}\ind_{\bc{\zeta = \vecz_\tau}}$. This forms the initialization of our inductive argument. Now suppose that at the $t\th$ time step, the claim is true and $\zeta$ obeys the law $\P_t(\zeta) = \frac{1}{t-1}\sum_{\tau=1}^{t-1}\ind_{\bc{\zeta = \vecz_\tau}}$. At the $t\th$ step, we would update the buffer by making the incoming element $\vecz_t$ replace the element present at the location indexed by $\zeta$ with probability $1/(t+1)$. Hence $\zeta$ would obey the following law after the update
\[
\br{1 - \frac{1}{t}}\P_t(\zeta) + \frac{1}{t} \ind_{\bc{\zeta = \vecz_t}} = \frac{1}{t}\sum_{\tau=1}^{t}\ind_{\bc{\zeta = \vecz_\tau}}
\]
which shows that at the $(t+1)\th$ step, $\zeta$ would follow the law $\P_{t+1}(\zeta) = \frac{1}{t}\sum_{\tau=1}^{t}\ind_{\bc{\zeta = \vecz_\tau}}$ which completes the inductive argument and the proof.
\end{proof}

\section{Proof of Theorem~\ref{thm:rs-x-regret}}
\label{appsec:proof-thm-rs-x-regret}
We now prove Theorem~\ref{thm:rs-x-regret} that gives a high confidence regret bound for the \olp learning algorithm when used along with the \mrs buffer update policy. Our proof proceeds in two steps: in the first step we prove a uniform convergence type guarantee that would allow us to convert regret bounds with respect to the \emph{finite-buffer} penalties $\Lbuft$ into regret bounds in in terms of the \emph{all-pairs} loss functions $\hat\L_t$. In the second step we then prove a regret bound for \olp with respect to the \emph{finite-buffer} penalties.

We proceed with the first step of the proof by proving the lemma given below. Recall that for any sequence of training examples $\vecz_1,\ldots,\vecz_n$, we define, for any $h \in \H$, the all-pairs loss function as $\hat\L_t(h) = \frac{1}{t-1}\sum_{\tau = 1}^{t-1}\ell(h,\vecz_t,\vecz_\tau)$. Moreover, if the online learning process uses a buffer, the we also define the \emph{finite-buffer} loss function as $\Lbuft(h_{t-1}) = \frac{1}{\abs{B_t}}\sum_{\vecz \in B_t}\ell(h_{t-1},\vecz_t,\vecz)$.

\begin{lem}
\label{lem:rs-x-unif-conv}
Suppose we have an online learning algorithm that incurs buffer penalties based on a buffer $B$ of size $s$ that is updated using the \mrs algorithm. Suppose further that the learning algorithm generates an ensemble $h_1,\ldots,h_{n-1}$. Then for any $t \in [1,n-1]$, with probability at least $1 - \delta$ over the choice of the random variables used to update the buffer $B$ until time $t$, we have
\[
\hat\L_t(h_{t-1}) \leq \Lbuft(h_{t-1}) + C_d\cdot\O{\sqrt{\frac{\log\frac{1}{\delta}}{s}}}
\]
\end{lem}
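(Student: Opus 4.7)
The first move is to reduce the desired bound, which concerns the specific (random) hypothesis $h_{t-1}$, to a uniform convergence statement over the whole hypothesis class. Although $h_{t-1}$ shares randomness with the buffer $B_t$ through the auxiliary variables $r_1,\ldots,r_{t-2}$, we sidestep this coupling by the trivial estimate
\[
\hat\L_t(h_{t-1}) - \Lbuft(h_{t-1}) \leq \sup_{h \in \H}\bs{\hat\L_t(h) - \Lbuft(h)},
\]
after which the right-hand side no longer depends on which element of $\H$ the algorithm happened to produce. I will prove this bound by first conditioning on the training stream $Z^{t-1}$, which freezes the point $\vecz_t$ together with the all-pairs penalty $\hat\L_t(\cdot)$, and then take probability over the auxiliary buffer randomness alone; expectations over the stream are never taken, which matches the form of the claim.

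The key structural fact is Theorem~\ref{thm:rs-x-prop}: once $Z^{t-1}$ is fixed, the $s$ entries of $B_t$ are i.i.d.~draws from the uniform distribution on $\bc{\vecz_1,\ldots,\vecz_{t-1}}$. Hence for every fixed $h$ one has $\EE{B_t}{\left.\Lbuft(h)\right|Z^{t-1}} = \hat\L_t(h)$, and the problem turns into a standard first-order uniform convergence question for the univariate class $\bc{\vecz' \mapsto \ell(h,\vecz_t,\vecz') : h \in \H}$ evaluated on $s$ i.i.d.~samples. Since perturbing any single buffer entry alters $\Lbuft(h)$ by at most $B/s$ uniformly in $h$, McDiarmid's inequality delivers, with probability at least $1-\delta/2$ over the buffer randomness,
\[
\sup_{h \in \H}\bs{\hat\L_t(h) - \Lbuft(h)} \leq \EE{B_t}{\sup_{h \in \H}\bs{\hat\L_t(h) - \Lbuft(h)}} + B\sqrt{\frac{\log(2/\delta)}{2s}}.
\]

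It remains to bound the expected supremum. Because $\vecz_t$ has been frozen by the conditioning, there is no head/tail coupling of the sort that forced the \emph{Symmetrization of Expectations} trick in Section~\ref{sec:simply-convex}; ordinary symmetrization against a ghost buffer drawn i.i.d.~from the same empirical law on $Z^{t-1}$ applies verbatim and upper bounds the expected supremum by $2\RR_s^\dagger(\ell\circ\H)$, where $\RR_s^\dagger$ denotes the Rademacher average of the loss class when the tail variable is drawn from the uniform measure on $Z^{t-1}$. Because the Rademacher complexity estimates developed in Section~\ref{sec:apps} (together with the contraction Theorem~\ref{thm:contraction-rademacher}) are driven only by ambient quantities such as $\norm{\X}_p$ and the regularizer radius, they hold uniformly over the choice of sampling distribution; in particular $\RR_s^\dagger(\ell\circ\H) \leq C_d\cdot\O{1/\sqrt s}$. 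Combining this with the McDiarmid deviation gives the target inequality.

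The step I expect to be most delicate is the conditioning argument: one has to be careful that, after fixing $Z^{t-1}$, the buffer still has the clean i.i.d.~structure provided by Theorem~\ref{thm:rs-x-prop} even though $h_{t-1}$ itself is a function of earlier buffer states. The supremum bound does exactly this decoupling, letting the buffer randomness be used twice — once implicitly to select $h_{t-1}$ and once explicitly to form $B_t$ — without double-counting in the concentration step.
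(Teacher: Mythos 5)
Your proposal takes essentially the same route as the paper's proof: pass to the supremum over $\H$ to decouple $h_{t-1}$ from the buffer randomness, use the i.i.d.\ buffer property of \mrs (Theorem~\ref{thm:rs-x-prop}) so that $\E{\Lbuft(h)} = \hat\L_t(h)$ for each fixed $h$, apply McDiarmid's inequality with bounded differences $B/s$ over the buffer randomness, and control the expected supremum by an empirical Rademacher average via ghost-sample symmetrization (the paper merely phrases the ghost sample in terms of the auxiliary variables $\rk^{t-1}_j$ rather than the buffer entries, which is equivalent). Two cosmetic fixes: you should condition on the full stream including $\vecz_t$ (since $\vecz_t \notin Z^{t-1}$), and note separately that for $t \leq s+1$ the buffer is an exact copy of the stream so the claim is trivial, because the i.i.d.\ property you invoke only holds for $t \geq s+2$.
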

\begin{proof}
Suppose $t \leq s + 1$, then since at that point the buffer stores the stream exactly, we have
\[
\hat\L_t(h_{t-1}) = \Lbuft(h_{t-1})
\]
which proves the result. Note that, as Algorithm~\ref{algo:olp} indicates, at step $t = s+1$ the buffer is updated (using the repopulation step) only after the losses have been calculated and hence step $t = s+1$ still works with a buffer that stores the stream exactly.

We now analyze the case $t > s + 1$. At each step $\tau > s$, the \mrs algorithm uses $s$ independent Bernoulli random variables (which we call \emph{auxiliary random variables}) to update the buffer, call them $r^\tau_1,\ldots,r^\tau_s$ where $r^\tau_j$ is used to update the $j\th$ item $\zeta_j$ in the buffer. Let $\rk^t_j := \{r^{s+1}_j,r^2_j,\ldots,r^t_j\} \in \bc{0,1}^t$ denote an ensemble random variable composed of $t-s$ independent Bernoulli variables. It is easy to see that the element $\zeta_j$ is completely determined at the $t\th$ step given $\rk^{t-1}_j$.

Theorem~\ref{thm:rs-x-prop} shows, for any $t > s + 1$, that the buffer contains $s$ i.i.d. samples from the set $Z^{t-1}$. Thus, for any \emph{fixed} function $h \in \H$, we have for any $j \in [s]$,
\[
\EE{\rk^{t-1}_j}{\ell(h,\vecz_t,\zeta_j)} = \frac{1}{t-1}\sum_{\tau=1}^{t-1}\ell(h,\vecz_t,\vecz_\tau)
\]
which in turn shows us that
\[
\EE{\rk^{t-1}_1,\ldots,\rk^{t-1}_s}{\Lbuft(h)} = \frac{1}{t-1}\sum_{\tau=1}^{t-1}\ell(h,\vecz_t,\vecz_\tau) = \hat\L_t(h)
\]
Now consider a ghost sample of auxiliary random variables $\tilde\rk^{t-1}_1,\ldots,\tilde\rk^{t-1}_s$. Since our hypothesis $h_{t-1}$ is independent of these ghost variables, we can write
\[
\EE{\tilde\rk^{t-1}_1,\ldots,\tilde\rk^{t-1}_s}{\Lbuft(h_{t-1})} = \hat\L_t(h_{t-1})
\]
We recall that error in the proof presented in \citet{oam-icml} was to apply such a result on the \emph{true} auxiliary variables upon which $h_{t-1}$ is indeed dependent. Thus we have
\begin{align*}
& \hat\L_t(h_{t-1}) - \Lbuft(h_{t-1})\\
= {} & \EE{\tilde\rk^{t-1}_1,\ldots,\tilde\rk^{t-1}_s}{\Lbuft(h_{t-1})} - \Lbuft(h_{t-1})\\
\leq {} & \underbrace{\underset{h \in \H}{\sup}\bs{\EE{\tilde\rk^{t-1}_1,\ldots,\tilde\rk^{t-1}_s}{\Lbuft(h)} - \Lbuft(h)}}_{g_t(\rk^{t-1}_1,\ldots,\rk^{t-1}_s)}
\end{align*}
Now, the perturbation to any of the ensemble variables $\rk_j$ (a perturbation to an ensemble variable implies a perturbation to one or more variables forming that ensemble) can only perturb only the element $\zeta_j$ in the buffer. Since $\Lbuft(h_{t-1}) = \frac{1}{s}\sum_{\vecz \in B_t}\ell(h_{t-1},\vecz_t,\vecz)$ and the loss function is $B$-bounded, this implies that a perturbation to any of the ensemble variables can only perturb $g(\rk^{t-1}_1,\ldots,\rk^{t-1}_s)$ by at most $B/s$. Hence an application of McDiarmid's inequality gives us, with probability at least $1 - \delta$,
\begin{align*}
g_t(\rk^{t-1}_1,\ldots,\rk^{t-1}_s) \leq & {} \EE{\rk^{t-1}_j}{g_t(\rk^{t-1}_1,\ldots,\rk^{t-1}_s)} + B\sqrt\frac{\log\frac{1}{\delta}}{2s}
\end{align*}
Analyzing the expectation term we get
\begin{align*}
& \EE{\rk^{t-1}_j}{g_t(\rk^{t-1}_1,\ldots,\rk^{t-1}_s)}\\
= {} & \EE{\rk^{t-1}_j}{\underset{h \in \H}{\sup}\bs{\EE{\tilde\rk^{t-1}_1,\ldots,\tilde\rk^{t-1}_s}{\Lbuft(h)} - \Lbuft(h)}}\\
\leq {} & \EE{\rk^{t-1}_j,\tilde\rk^{t-1}_j}{\underset{h \in \H}{\sup}\bs{\frac{1}{s}\sum_{j=1}^s\ell(h,\vecz_t,\tilde\zeta_j) - \ell(h,\vecz_t,\zeta_j)}}\\
= {} & \EE{\rk^{t-1}_j,\tilde\rk^{t-1}_j,\epsilon_j}{\underset{h \in \H}{\sup}\bs{\frac{1}{s}\sum_{j=1}^s \epsilon_j\br{\ell(h,\vecz_t,\tilde\zeta_j) - \ell(h,\vecz_t,\zeta_j)}}}\\
\leq {} & 2\EE{\rk^{t-1}_j,\tilde\rk^{t-1}_j,\epsilon_j}{\underset{h \in \H}{\sup}\bs{\frac{1}{s}\sum_{j=1}^s \epsilon_j\ell(h,\vecz_t,\zeta_j)}}\\
\leq {} & 2\RR_s(\ell\circ\H)
\end{align*}
where in the third step we have used the fact that symmetrizing a pair of true and ghost ensemble variables is equivalent to symmetrizing the buffer elements they determine. In the last step we have exploited the definition of Rademacher averages with the (empirical) measure $\frac{1}{t-1}\sum_{\tau=1}^{t-1}\delta_{\vecz_\tau}$ imposed over the domain $\ZZ$.

For hypothesis classes for which we have $\hat\RR_s(\ell\circ\H) = C_d\cdot\O{\sqrt{\frac{1}{s}}}$, this proves the claim.
\end{proof}
Using a similar proof progression we can also show the following:
\begin{lem}
\label{lem:rs-x-unif-conv-converse}
For any fixed $h \in \H$ and any $t \in [1,n-1]$, with probability at least $1 - \delta$ over the choice of the random variables used to update the buffer $B$ until time $t$, we have
\[
\Lbuft(h) \leq \hat\L_t(h) + C_d\cdot\O{\sqrt{\frac{\log\frac{1}{\delta}}{s}}}
\]
\end{lem}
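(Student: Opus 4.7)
The plan is to exploit the fact that, unlike in Lemma~\ref{lem:rs-x-unif-conv} where $h_{t-1}$ depends on the buffer's auxiliary randomness, here $h$ is a \emph{fixed} hypothesis. This removes any need for a supremum-over-class argument and reduces the problem to a straightforward concentration inequality.

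I would first dispatch the easy case $t \leq s+1$: in this regime the \mrs algorithm has only executed the ``space available'' branch, so the buffer $B_t$ coincides with $Z^{t-1}$ and $\Lbuft(h) = \hat\L_t(h)$ deterministically, and the stated inequality is trivial.

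For $t > s+1$, Theorem~\ref{thm:rs-x-prop} guarantees that, conditional on $Z^{t-1}$, the buffer entries $\zeta_1, \ldots, \zeta_s$ are drawn i.i.d.\ uniformly with replacement from $Z^{t-1}$. Since $h$ does not depend on the buffer auxiliary variables $\rk$, the $s$ quantities $\ell(h, \vecz_t, \zeta_j)$ are i.i.d., lie in $[0,B]$, and share the common conditional expectation
\[
\EE{\zeta_j}{\ell(h, \vecz_t, \zeta_j)} = \frac{1}{t-1}\sum_{\tau=1}^{t-1} \ell(h, \vecz_t, \vecz_\tau) = \hat\L_t(h).
\]
Consequently $\Lbuft(h) = \frac{1}{s}\sum_{j=1}^{s} \ell(h, \vecz_t, \zeta_j)$ is an empirical mean of $s$ bounded i.i.d.\ variables with expectation exactly $\hat\L_t(h)$.

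Finally, I would apply Hoeffding's inequality---or equivalently McDiarmid's inequality with bounded-difference constants $B/s$, mirroring the last step of the proof of Lemma~\ref{lem:rs-x-unif-conv}---to conclude that with probability at least $1-\delta$ over $\rk$,
\[
\Lbuft(h) - \hat\L_t(h) \leq B\sqrt{\frac{\log(1/\delta)}{2s}},
\]
which is subsumed by the stated $C_d \cdot \O{\sqrt{\log(1/\delta)/s}}$ bound. In fact, no $C_d$ factor is needed here, because fixing $h$ in advance cleanly decouples the hypothesis from the buffer randomness and sidesteps the symmetrization-and-Rademacher-average machinery that was the main obstacle in Lemma~\ref{lem:rs-x-unif-conv}. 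Accordingly, there is no serious difficulty in this converse direction: the only care required is to condition on the stream $Z^{t-1}$ so that Theorem~\ref{thm:rs-x-prop} can be invoked to recover the i.i.d.\ structure inside the buffer.
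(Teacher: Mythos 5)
Your proposal is correct, and it is in fact simpler than what the paper gestures at. The paper proves this converse lemma only by the remark that it follows ``using a similar proof progression'' to Lemma~\ref{lem:rs-x-unif-conv}, i.e.\ by introducing ghost auxiliary variables, applying McDiarmid's inequality to a supremum over $\H$ with bounded differences $B/s$, and symmetrizing to obtain the term $2\RR_s(\ell\circ\H)$ --- which is why the statement carries the $C_d$ factor. You observe, rightly, that because $h$ is fixed (and, in the application within Lemma~\ref{lem:rs-x-buf-to-all-pairs}, $\hat h$ depends only on the stream, which is independent of the buffer's auxiliary randomness by stream-obliviousness), no supremum is needed: conditional on the stream, Theorem~\ref{thm:rs-x-prop} makes $\Lbuft(h)$ an average of $s$ i.i.d.\ $[0,B]$-valued variables with mean exactly $\hat\L_t(h)$, and Hoeffding gives $B\sqrt{\log(1/\delta)/(2s)}$, which is subsumed by (indeed sharper than) the stated $C_d\cdot\O{\sqrt{\log(1/\delta)/s}}$. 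What the paper's route buys is uniformity over $\H$ (the same argument template works whether or not $h$ is coupled to the buffer randomness); what yours buys is elementarity and the removal of the Rademacher/$C_d$ term. One small point of precision: you should condition on the entire stream up to time $t$ (in particular on $\vecz_t$, not just $Z^{t-1}$), since $\vecz_t$ appears in both $\Lbuft(h)$ and $\hat\L_t(h)$; the probability in the statement is over the buffer update randomness alone, so this conditioning is exactly the right reading and costs nothing.
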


Combining Lemmata~\ref{lem:rs-x-unif-conv} and \ref{lem:rs-x-unif-conv-converse} and taking a union bound over all time steps, the following corollary gives us a \emph{buffer to all-pairs} conversion bound.
\begin{lem}
\label{lem:rs-x-buf-to-all-pairs}
Suppose we have an online learning algorithm that incurs buffer penalties based on a buffer $B$ of size $s$ that is updated using the \mrs algorithm. Suppose further that the learning algorithm generates an ensemble $h_1,\ldots,h_{n-1}$. Then with probability at least $1 - \delta$ over the choice of the random variables used to update the buffer $B$, we have
\[
\Rk_n \leq \Rkbufn + C_d\br{n-1}\cdot\O{\sqrt{\frac{\log\frac{n}{\delta}}{s}}},
\]
where we recall the definition of the \emph{all-pairs} regret as
\[
\Rk_n := \sum_{t=2}^n\hat\L_t(h_{t-1}) - \underset{h \in \H}{\inf}\sum_{t=2}^n\hat\L_t(h)
\]
and the \emph{finite-buffer} regret as
\[
\Rkbufn := \sum_{t=2}^n\Lbuft(h_{t-1}) - \underset{h \in \H}{\inf}\sum_{t=2}^n\Lbuft(h).
\]
\end{lem}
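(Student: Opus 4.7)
The strategy is to express the gap $\Rk_n - \Rkbufn$ as a sum of two uniform-convergence--style quantities, one controllable by Lemma~\ref{lem:rs-x-unif-conv} and the other by Lemma~\ref{lem:rs-x-unif-conv-converse}, and then apply union bounds over the $n-1$ time steps.

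\textbf{Step 1: Decomposition.} Let $h^\star := \arg\inf_{h \in \H} \sum_{t=2}^{n}\hat\L_t(h)$, so that $\Rk_n = \sum_{t=2}^{n}\hat\L_t(h_{t-1}) - \sum_{t=2}^{n}\hat\L_t(h^\star)$. Adding and subtracting $\sum_{t=2}^{n}\Lbuft(h_{t-1})$ and $\sum_{t=2}^{n}\Lbuft(h^\star)$, and noting that by optimality $\inf_{h}\sum_{t=2}^{n}\Lbuft(h) \leq \sum_{t=2}^{n}\Lbuft(h^\star)$, gives
\[
\Rk_n \leq \underbrace{\sum_{t=2}^{n}\bs{\hat\L_t(h_{t-1})-\Lbuft(h_{t-1})}}_{(\mathrm{I})} \;+\; \Rkbufn \;+\; \underbrace{\sum_{t=2}^{n}\bs{\Lbuft(h^\star)-\hat\L_t(h^\star)}}_{(\mathrm{II})}.
\]

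\textbf{Step 2: Bounding (I).} The iterate $h_{t-1}$ depends on the buffer randomness, so a pointwise bound will not do; this is exactly the situation handled by Lemma~\ref{lem:rs-x-unif-conv}, which supplies $\hat\L_t(h_{t-1}) - \Lbuft(h_{t-1}) \leq C_d \cdot \O{\sqrt{\log(1/\delta')/s}}$ with probability $\geq 1-\delta'$ per step. Applying this with $\delta' = \delta/(2(n-1))$ and a union bound over $t=2,\dots,n$ yields $(\mathrm{I}) \leq C_d(n-1)\cdot \O{\sqrt{\log(n/\delta)/s}}$ with probability $\geq 1-\delta/2$.

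\textbf{Step 3: Bounding (II).} Here we need to be careful because $h^\star$ is defined in terms of the entire training stream $Z^n$ via the all-pairs losses $\hat\L_t$. However, $h^\star$ is a deterministic function of $Z^n$ and does \emph{not} depend on the auxiliary randomness used by \mrs\!\! to maintain the buffer; this is guaranteed by stream obliviousness, which decouples the buffer randomness $R^n$ from $Z^n$. Thus, conditioning on $Z^n$, we may treat $h^\star$ as a fixed hypothesis and invoke the pointwise bound of Lemma~\ref{lem:rs-x-unif-conv-converse}. Applying it per time step with failure $\delta/(2(n-1))$ and taking a union bound gives, with probability $\geq 1-\delta/2$ over the buffer randomness (for every realization of $Z^n$, hence unconditionally), $(\mathrm{II}) \leq C_d(n-1)\cdot \O{\sqrt{\log(n/\delta)/s}}$.

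\textbf{Step 4: Combining.} A final union bound on the two events from Steps 2 and 3 gives, with probability at least $1-\delta$, the claimed inequality $\Rk_n \leq \Rkbufn + C_d(n-1)\cdot \O{\sqrt{\log(n/\delta)/s}}$. The main subtlety -- and the only place the argument is not purely mechanical -- is Step 3: one must notice that $h^\star$ is independent of the buffer randomness so that the pointwise converse lemma applies, in contrast to the data-dependent iterate $h_{t-1}$ for which we must pay the supremum cost. Everything else is bookkeeping: a sum of per-step concentration bounds followed by a union bound.
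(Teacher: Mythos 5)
Your proposal is correct and follows essentially the same route as the paper: decompose via the all-pairs minimizer, apply Lemma~\ref{lem:rs-x-unif-conv} to the (buffer-dependent) iterates and Lemma~\ref{lem:rs-x-unif-conv-converse} to the minimizer, then sum and union bound. Your explicit remark in Step 3 that the all-pairs minimizer is independent of the buffer randomness (so the pointwise converse lemma applies) makes precise a point the paper leaves implicit, but the argument is otherwise the same.
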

\begin{proof}
Let $\hat h : = \underset{h \in \H}{\arg\inf}\ \sum_{t=2}^n\hat\L_t(h)$. Then Lemma~\ref{lem:rs-x-unif-conv-converse} gives us, upon summing over $t$ and taking a union bound,
\begin{align}
\sum_{t=2}^n\Lbuft(\hat h) \leq \sum_{t=2}^n\hat\L_t(\hat h) + C_d(n-1)\cdot\O{\sqrt{\frac{\log\frac{n}{\delta}}{s}}},
\label{eq:modified-rs-regret-competitor-guarantee}
\end{align}
whereas Lemma~\ref{lem:rs-x-unif-conv} similarly guarantees
\begin{align}
\sum_{t=2}^n\hat\L_t(h_{t-1}) \leq \sum_{t=2}^n\Lbuft(h_{t-1}) + C_d(n-1)\cdot\O{\sqrt{\frac{\log\frac{n}{\delta}}{s}}},
\label{eq:modified-rs-regret-ensemble-guarantee}
\end{align}
where both results hold with high confidence. Adding the Equations~\eqref{eq:modified-rs-regret-competitor-guarantee} and \eqref{eq:modified-rs-regret-ensemble-guarantee} and using $\sum_{t=2}^n\Lbuft(h_{t-1}) \leq \underset{h \in \H}{\inf}\sum_{t=2}^n\Lbuft(\hat h) + \Rkbufn$ completes the proof.
\end{proof}

As the final step of the proof, we give below a \emph{finite-buffer} regret bound for the \olp algorithm.

\begin{lem}
\label{lem:finite-buffer-regret-bound-olp}
Suppose the \olp algorithm working with an $s$-sized buffer generates an ensemble $\vecw_1,\ldots,\vecw_{n-1}$. Further suppose that the loss function $\ell$ being used is $L$-Lipschitz and the space of hypotheses $\W$ is a compact subset of a Banach space with a finite diameter $D$ with respect to the Banach space norm. Then we have
\[
\Rkbufn \leq LD\sqrt{n-1}
\]
\end{lem}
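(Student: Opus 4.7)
}
The plan is to recognize that \olp is nothing more than Zinkevich's Greedy Projected Gradient Descent (GIGA) \cite{zinkevich} applied to the sequence of convex loss functions $\Lbuft(\cdot)$, and then invoke the standard regret analysis for GIGA. The \emph{finite-buffer} regret $\Rkbufn$ we want to bound is exactly the regret of this online convex optimization procedure against the best fixed comparator in $\W$, so no translation between loss notions is required at this stage.

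First I would verify the two ingredients needed to apply the GIGA bound. Convexity: since the per-pair loss $\ell(\cdot,\vecz_t,\vecz)$ is convex in its first argument (as is implicit in the use of (sub)gradients in Algorithm~\ref{algo:olp}), the function $\Lbuft(\vecw)=\frac{1}{|B_t|}\sum_{\vecz\in B_t}\ell(\vecw,\vecz_t,\vecz)$ is a convex combination of convex functions, hence convex. Gradient bound: the $L$-Lipschitz property of $\ell$ in $\vecw$ gives $\norm{\nabla_\vecw\ell(\vecw,\vecz,\vecz')}_\ast\le L$ for every $\vecz,\vecz'$, and by the triangle inequality the subgradient $\vecg_t:=\frac{1}{|B_t|}\sum_{\vecz\in B_t}\nabla_\vecw\ell(\vecw_{t-1},\vecz_t,\vecz)$ used in the \olp update also satisfies $\norm{\vecg_t}_\ast\le L$.

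Next I would carry out the standard GIGA potential argument with potential $\Phi_t(\vecw^\ast)=\tfrac12\norm{\vecw_{t-1}-\vecw^\ast}^2$. Using non-expansiveness of $\Pi_\W$, the update $\vecw_t=\Pi_\W[\vecw_{t-1}\pm\eta_t\vecg_t]$ yields
\[
\Phi_{t+1}(\vecw^\ast)-\Phi_t(\vecw^\ast)\le \eta_t\langle\vecg_t,\vecw^\ast-\vecw_{t-1}\rangle+\tfrac12\eta_t^2L^2,
\]
and convexity of $\Lbuft$ bounds $\Lbuft(\vecw_{t-1})-\Lbuft(\vecw^\ast)\le\langle\vecg_t,\vecw_{t-1}-\vecw^\ast\rangle$. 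Telescoping over $t=2,\ldots,n$ with $\eta_t=\eta/\sqrt{t}$, using $\norm{\vecw_{t-1}-\vecw^\ast}\le D$ to control the boundary terms via Abel summation (so that the varying step size produces a $D^2/\eta_n$ term plus $\sum_t \eta_t L^2/2$), and bounding $\sum_{t=2}^n 1/\sqrt{t}\le 2\sqrt{n-1}$, I obtain
\[
\Rkbufn\le \frac{D^2}{2\eta}\sqrt{n-1}+\eta L^2\sqrt{n-1}.
\]
Choosing $\eta=D/(L\sqrt{2})$ (up to a constant absorbed in the statement) yields $\Rkbufn\le LD\sqrt{n-1}$.

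I do not anticipate any real obstacle: the result is a direct instantiation of Zinkevich's theorem, and the only point requiring a moment's thought is to notice that the regret is measured against $\Lbuft$ (the same function on which the gradient step is taken), so that the stream-oblivious, possibly random buffer $B_t$ plays no role in the argument beyond entering the definition of $\Lbuft$ and of $\vecg_t$; in particular, we do not need to relate $\Lbuft$ to $\hat\L_t$ here, as that conversion is handled separately by Lemma~\ref{lem:rs-x-buf-to-all-pairs}.
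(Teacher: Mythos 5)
Your proposal is correct and follows essentially the same route as the paper: the paper's proof simply observes that \olp is the GIGA algorithm of \citet{zinkevich} run on the losses $\vecw \mapsto \Lbuft(\vecw)$, which inherit convexity and the Lipschitz constant $L$ from $\ell$, and then cites Zinkevich's regret analysis. You merely spell out that analysis (the projection/potential argument with $\eta_t = \eta/\sqrt{t}$) explicitly, which is fine, modulo the small constant in front of $LD\sqrt{n-1}$ that you already note is absorbed.
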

\begin{proof}
We observe that the algorithm \olp is simply a variant of the GIGA algorithm \cite{zinkevich} being applied with the loss functions $\ell^{\text{GIGA}}_t : \vecw \mapsto \Lbuft(\vecw)$. Since $\ell^{\text{GIGA}}_t$ inherits the Lipschitz constant of $\Lbuft$ which in turn inherits it from $\ell$, we can use the analysis given by \citet{zinkevich} to conclude the proof.
\end{proof}
Combining Lemmata~\ref{lem:rs-x-buf-to-all-pairs} and \ref{lem:finite-buffer-regret-bound-olp} gives us the following result:
\begin{thm}[Theorem~\ref{thm:rs-x-regret} restated]
Suppose the \olp algorithm working with an $s$-sized buffer generates an ensemble $\vecw_1,\ldots,\vecw_{n-1}$. Then with probability at least $1 - \delta$,
\[
\frac{\Rk_n}{n-1} \leq \O{C_d\sqrt{\frac{\log \frac{n}{\delta}}{s}} + \sqrt{\frac{1}{n-1}}}
\]
\end{thm}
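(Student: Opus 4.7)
The plan is to decompose the \emph{all-pairs} regret into two pieces: (i) a \emph{finite-buffer} regret of \olp with respect to the losses $\Lbuft$, which comes essentially for free from the standard GIGA analysis, and (ii) a ``buffer-to-all-pairs'' uniform-convergence correction that exploits the crucial i.i.d.\ property of \mrs proved in Theorem~\ref{thm:rs-x-prop}. Concretely, I will show
\[
\Rk_n \;\leq\; \Rkbufn \;+\; \text{(uniform convergence slack)},
\]
where the slack is $C_d(n-1)\cdot\softO{\sqrt{\log(n/\delta)/s}}$, and separately argue $\Rkbufn \leq LD\sqrt{n-1}$ from Zinkevich's analysis of GIGA applied to the convex surrogate losses $\vecw \mapsto \Lbuft(\vecw)$. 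Adding the two bounds and dividing by $n-1$ yields the claim.

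For step (i), I observe that \olp is precisely GIGA run on the sequence of convex loss functions $\ell^{\text{GIGA}}_t(\vecw) := \Lbuft(\vecw)$. These inherit the Lipschitz constant $L$ from $\ell$ (since $\Lbuft$ is an average of $L$-Lipschitz functions) and the hypothesis set $\W$ has finite diameter $D$, so Zinkevich's $\O{LD\sqrt{n}}$ bound applies verbatim to give a \emph{finite-buffer} regret $\Rkbufn = \O{\sqrt{n-1}}$.

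For step (ii), I need two uniform-convergence statements: for each $t$ and with high probability over the \mrs auxiliary randomness $\rk^{t-1}$,
\[
\hat\L_t(h_{t-1}) - \Lbuft(h_{t-1}) \;\leq\; C_d\cdot\softO{\sqrt{\log(1/\delta)/s}},
\]
and the analogous bound for any \emph{fixed} competitor $h^\ast$. Here is where the mistake in \cite{oam-icml} must be avoided: $h_{t-1}$ depends on the same auxiliary randomness that determines the buffer, so one cannot take $\E{\Lbuft(h_{t-1})} = \hat\L_t(h_{t-1})$ pointwise. Instead, I pass to $\sup_{h \in \H}$ and use Theorem~\ref{thm:rs-x-prop}: since the $s$ elements of the buffer at time $t$ are i.i.d.\ samples from $Z^{t-1}$, for any \emph{fixed} $h$ the ghost-sample identity $\E{\tilde\Lbuft(h)} = \hat\L_t(h)$ holds. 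Then McDiarmid's inequality, with bounded differences $B/s$ with respect to each of the $s$ Bernoulli ensemble variables $\rk^{t-1}_j$ (each controlling a single buffer slot), controls the supremum up to its expectation, and a standard symmetrization + Rademacher argument bounds the expectation by $2\RR_s(\ell\circ\H) = C_d\cdot\O{1/\sqrt{s}}$. A union bound over $t = 2,\ldots,n$ converts this into the aggregate slack. Applying the two uniform-convergence inequalities to $h_{t-1}$ and to the optimal competitor $\hat h = \arg\inf_h \sum_t \hat\L_t(h)$, and summing, yields the buffer-to-all-pairs conversion.

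The main obstacle is carrying out the uniform-convergence step correctly, because it requires the i.i.d.\ nature of the buffer contents (which is why the plain \rsorig scheme fails and we need \mrs). Once Theorem~\ref{thm:rs-x-prop} is available, the McDiarmid + symmetrization argument is standard, but one must be careful: the ``change one coordinate'' argument is applied to the $s$ ensemble Bernoulli variables (one per buffer slot) rather than to data points, since perturbing one ensemble variable perturbs exactly one buffer element and hence changes the averaged loss by at most $B/s$. Everything else is then assembling the pieces: finite-buffer regret from GIGA plus the uniform-convergence slack, divided by $n-1$, giving the stated $\O{C_d\sqrt{\log(n/\delta)/s} + 1/\sqrt{n-1}}$ rate.
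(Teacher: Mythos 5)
Your proposal is correct and follows essentially the same route as the paper: decompose $\Rk_n$ into the finite-buffer regret $\Rkbufn$ (bounded by $LD\sqrt{n-1}$ via Zinkevich's GIGA analysis on the losses $\vecw \mapsto \Lbuft(\vecw)$) plus a buffer-to-all-pairs uniform-convergence slack obtained from the i.i.d.\ buffer property of \mrs\!, ghost auxiliary variables, McDiarmid over the $s$ per-slot ensemble Bernoulli variables with $B/s$ bounded differences, symmetrization to $2\RR_s(\ell\circ\H)$, and a union bound over $t$. This matches the paper's Lemmata~\ref{lem:rs-x-unif-conv}--\ref{lem:finite-buffer-regret-bound-olp}, including the key point of where the argument of \citet{oam-icml} breaks down and how to avoid it.
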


\section{Implementing the \mrs Algorithm}
\label{appsec:rs-x-implementation}
Although the \mrs algorithm presented in the paper allows us to give clean regret bounds, it suffers from a few drawbacks. From a theoretical point of view, the algorithm is inferior to Vitter's \rsorig algorithm in terms of randomness usage. The \rsorig algorithm (see \cite{oam-icml} for example) uses a Bernoulli random variable and a discrete uniform random variable at each time step. The discrete random variable takes values in $[s]$ as a result of which the algorithm uses a total of $\O{\log s}$ random bits at each step.

The \mrs algorithm as proposed, on the other hand, uses $s$ Bernoulli random variables at each step (to decide which buffer elements to replace with the incoming point) taking its randomness usage to $\O{s}$ bits. From a practical point of view this has a few negative consequences:
\begin{enumerate}
	\item Due to increased randomness usage, the variance of the resulting algorithm increases.
	\item At step $t$, the Bernoulli random variables required all have success probability $1/t$. This quantity drops down to negligible values for even moderate values of $t$. Note that Vitter's \rsorig on the other hand requires a Bernoulli random variable with success probability $s/t$ which dies down much more slowly.
	\item Due to the requirement of such high precision random variables, the imprecisions of any pseudo random generator used to simulate this algorithm become apparent resulting in poor performance. 
\end{enumerate}

\begin{algorithm}[t]
	\caption{\small \mmrs: An Alternate Implementation of the \mrs Algorithm}
	\label{algo:rs-x-alternate}
	\begin{algorithmic}[1]
		\small{
			\REQUIRE Buffer $B$, new point $\vecz_t$, buffer size $s$, timestep $t$
			\ENSURE Updated buffer $B_\text{new}$
			\IF[There is space]{$|B| < s$}
				\STATE $B_\text{new} \leftarrow B \cup \bc{\vecz_t}$
			\ELSE[Overflow situation]
				\IF[Repopulation step]{$t = s+1$}
					\STATE $\text{TMP} = B \cup \bc{\vecz_t}$
					\STATE $B_\text{new} = \phi$
					\FOR{$i = 1$ \TO $s$}
						\STATE Select random $\vecr \in \text{TMP}$ with replacement
						\STATE $B_\text{new} \leftarrow B_\text{new} \cup \bc{\vecr}$
					\ENDFOR
				\ELSE[Normal update step]
					\STATE $B_\text{new} \leftarrow B$
					\STATE Sample $k \sim \text{Binomial}(s,1/t)$
					\STATE Remove $k$ random elements from $B_\text{new}$
					\STATE $B_\text{new} \leftarrow B_\text{new} \cup \br{\coprod_{i=1}^{k} \bc{\vecz_t}}$
				\ENDIF
			\ENDIF
			\RETURN{$B_\text{new}$}
		}
	\end{algorithmic}
\end{algorithm}

In order to ameliorate the situation, we propose an alternate implementation of the \emph{normal} update step of the \mrs algorithm in Algorithm~\ref{algo:rs-x-alternate}. We call this new sampling policy \mmrs\!. We shall formally demonstrate the equivalence of the \mrs and the \mmrs policies by showing that both policies result in a buffer whose each element is a uniform sample from the preceding stream with replacement. This shall be done by proving that the joint distribution of the buffer elements remains the same whether the \mrs \emph{normal} update is applied or the \mmrs \emph{normal} step is applied (note that \mrs and \mmrs have identical \emph{repopulation steps}). This will ensure that any learning algorithm will be unable to distinguish between the two update mechanisms and consequently, our regret guarantees shall continue to hold.



First we analyze the randomness usage of the \mmrs update step.
The update step first samples a number $K_t \sim B(s,1/t)$ from the binomial distribution and then replaces $K_t$ random locations with the incoming point. Choosing $k$ locations without replacement from a pool of $s$ locations requires at most $k\log s$ bits of randomness. Since $K_t$ is sampled from the binomial distribution $B(s,1/t)$, we have $K_t = \O{1}$ in expectation (as well as with high probability) since $t > s$ whenever this step is applied. Hence our randomness usage per update is at most $\O{\log s}$ random bits which is much better than the randomness usage of \mrs and that actually matches that of Vitter's \rsorig upto a constant.

To analyze the statistical properties of the \mmrs update step, let us analyze the state of the buffer after the update step. In the \mrs algorithm, the state of the buffer after an update is completely specified once we enumerate the locations that were replaced by the incoming point. Let the indicator variable $R_i$ indicate whether the $i\th$ location was replaced or not. Let $r \in \bc{0,1}^s$ denote a \emph{fixed} pattern of replacements. Then the original implementation of the update step of \mrs guarantees that
\[
\Prr{\text{\mrs}}{\bigwedge_{i=1}^s \br{R_i = r_i}} = \br{\frac{1}{t}}^{\norm{r}_1}\br{1 - \frac{1}{t}}^{s - \norm{r}_1}
\]
To analyze the same for the alternate implementation of the \mmrs update step, we first notice that choosing $k$ items from a pool of $s$ without replacement is identical to choosing the first $k$ locations from a random permutation of the $s$ items. Let us denote $\norm{r}_1 = k$. Then we have,
\begin{align*}
\Prr{\text{\mmrs}}{\bigwedge_{i=1}^s \br{R_i = r_i}} &= \sum_{j = 1}^s\Pr{\bigwedge_{i=1}^s \br{R_i = r_i} \wedge K_t = j}\\
									  &= \Pr{\bigwedge_{i=1}^s \br{R_i = r_i} \wedge K_t = k}\\
									  &= \Pr{\left.\bigwedge_{i=1}^s \br{R_i = r_i}\right|K_t = k}\Pr{K_t = k}
\end{align*}
We have
\[
\Pr{K_t = k} = \binom{s}{k}\br{\frac{1}{t}}^k\br{1 - \frac{1}{t}}^{s-k}
\]
The number of arrangements of $s$ items such that some specific $k$ items fall in the first $k$ positions is $k!(s-k)!$. Thus we have
\begin{align*}
\Prr{\text{\mmrs}}{\bigwedge_{i=1}^s \br{R_i = r_i}} &= \binom{s}{k}\br{\frac{1}{t}}^k\br{1 - \frac{1}{t}}^{s-k}\frac{k!(s-k)!}{s!}\\
													 &= \br{\frac{1}{t}}^k\br{1 - \frac{1}{t}}^{s-k}\\
													 &= \Prr{\text{\mrs}}{\bigwedge_{i=1}^s \br{R_i = r_i}}
\end{align*}
which completes the argument.

\section{Additional Experimental Results}
\label{appsec:expt-supp}
Here we present experimental results on 14 different benchmark datasets (refer to Figure~\ref{fig:auc-expts-additional}) comparing the \olp algorithm using the \mmrs buffer policy with the OAM$_{\text{gra}}$ algorithm using the \rsorig buffer policy. We continue to observe the trend that \olp performs competitively to OAM$_{\text{gra}}$ while enjoying a slight advantage in small buffer situations in most cases.

\begin{figure*}[t]
	\centering
	\subfigure[Fourclass\hspace*{-6ex}]{
		\includegraphics[width=0.27\linewidth]{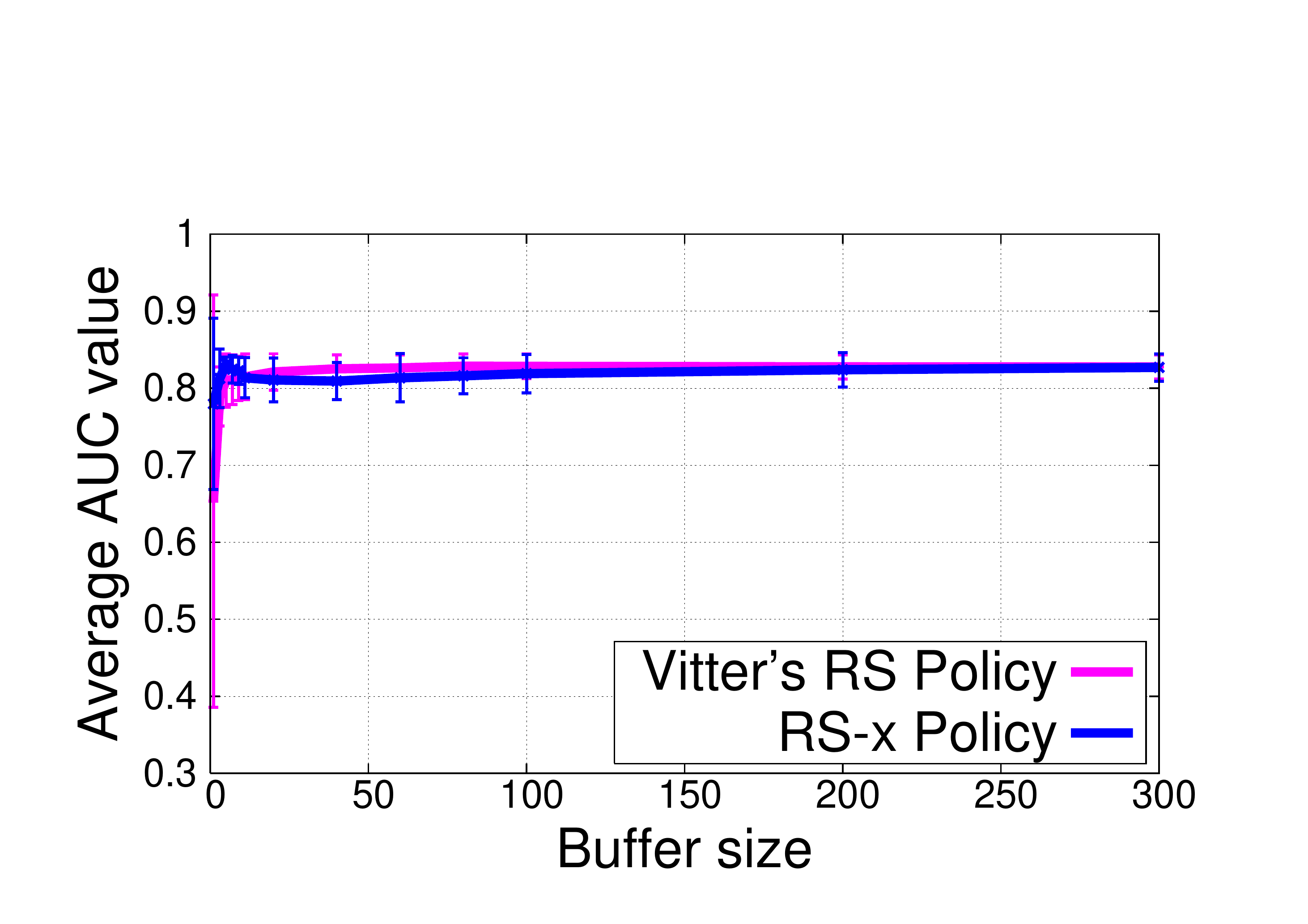}\hspace*{-6ex}
		\label{subfig:fourclass-res}
	}
	\subfigure[Liver Disorders\hspace*{-6ex}]{
		\includegraphics[width=0.27\linewidth]{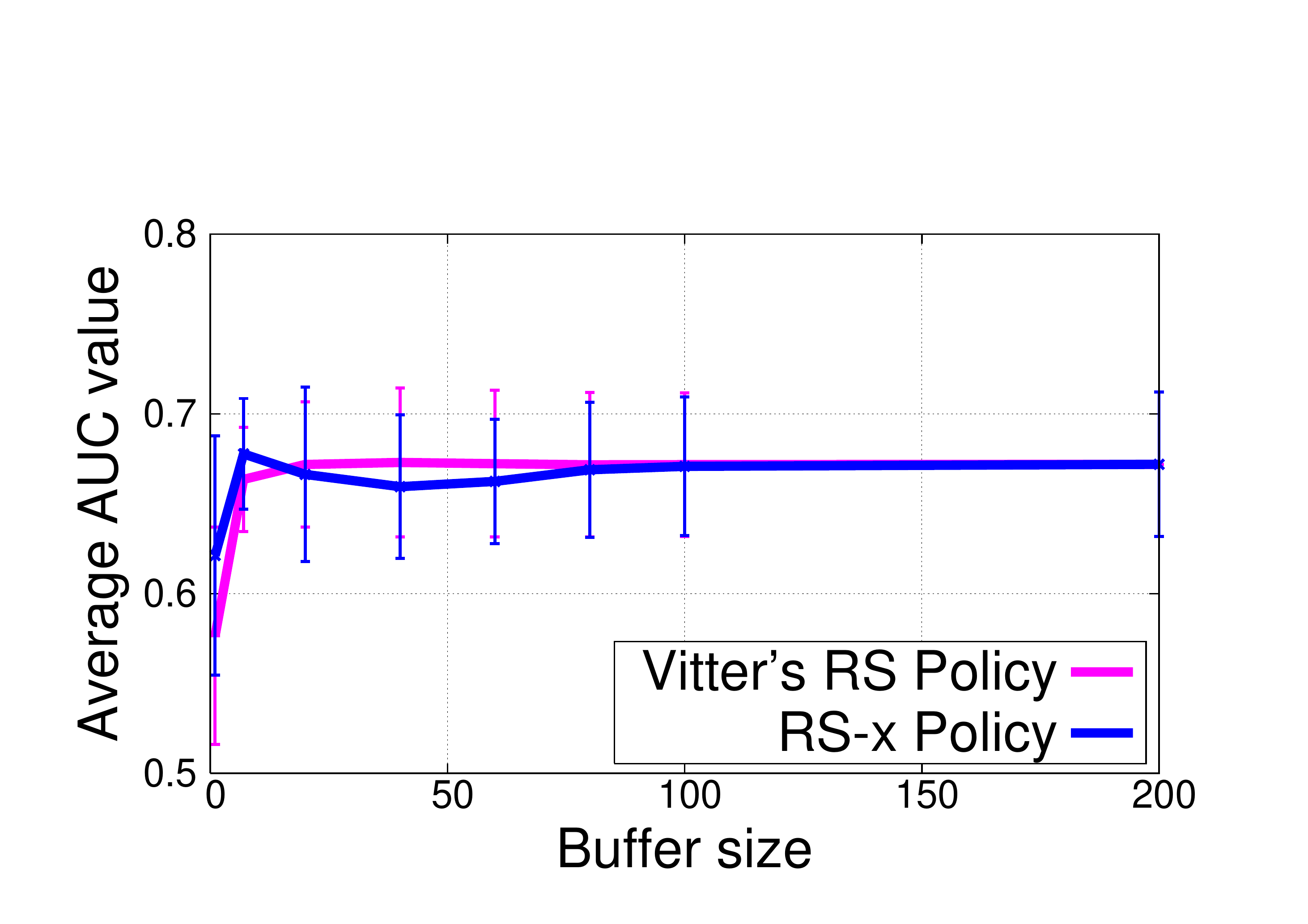}\hspace*{-6ex}
		\label{subfig:liver-disorders-res}
	}
	\subfigure[Heart\hspace*{-6ex}]{
		\includegraphics[width=0.27\linewidth]{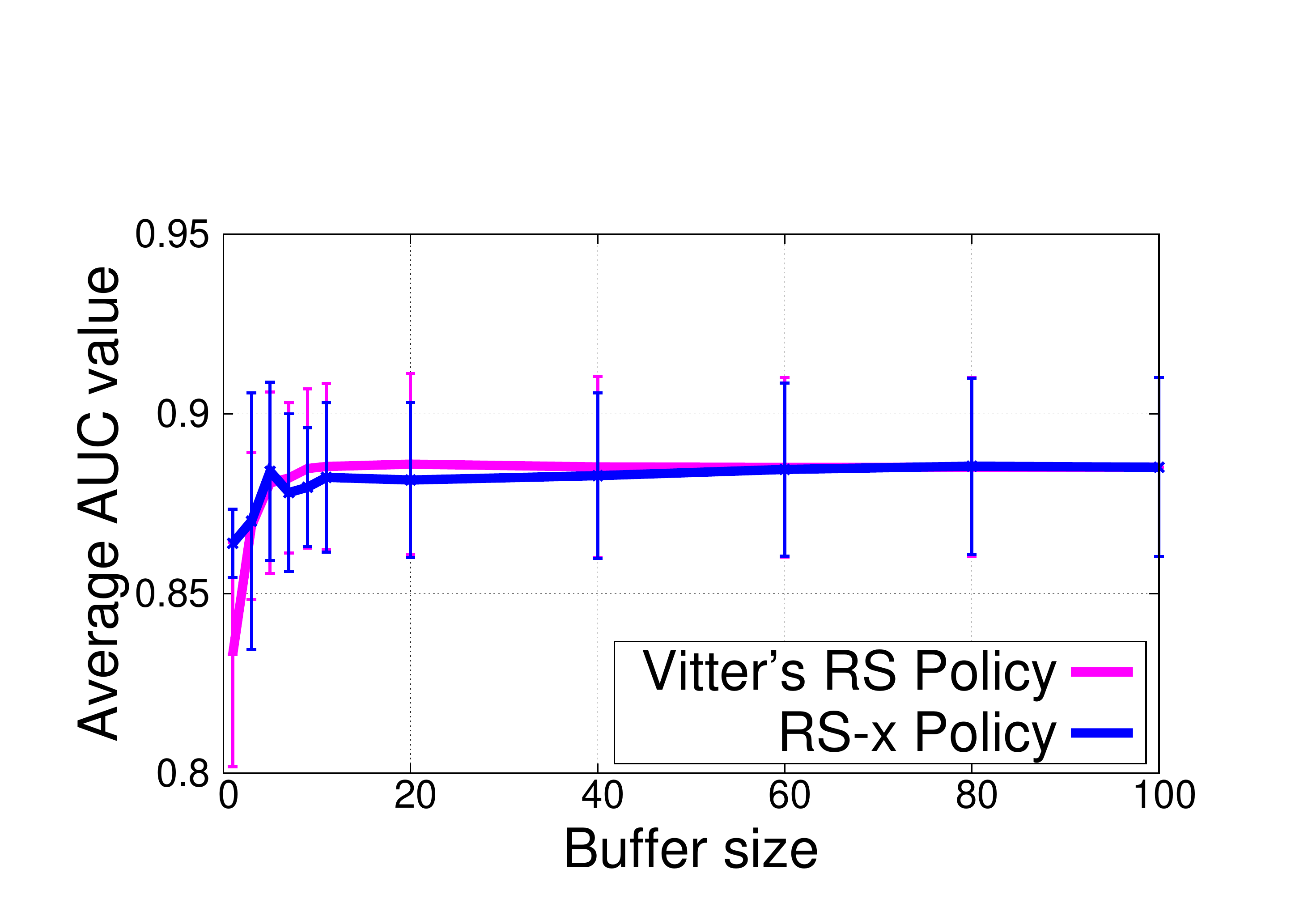}\hspace*{-6ex}
		\label{subfig:heart-res}
	}
	\subfigure[Diabetes]{
		\includegraphics[width=0.27\linewidth]{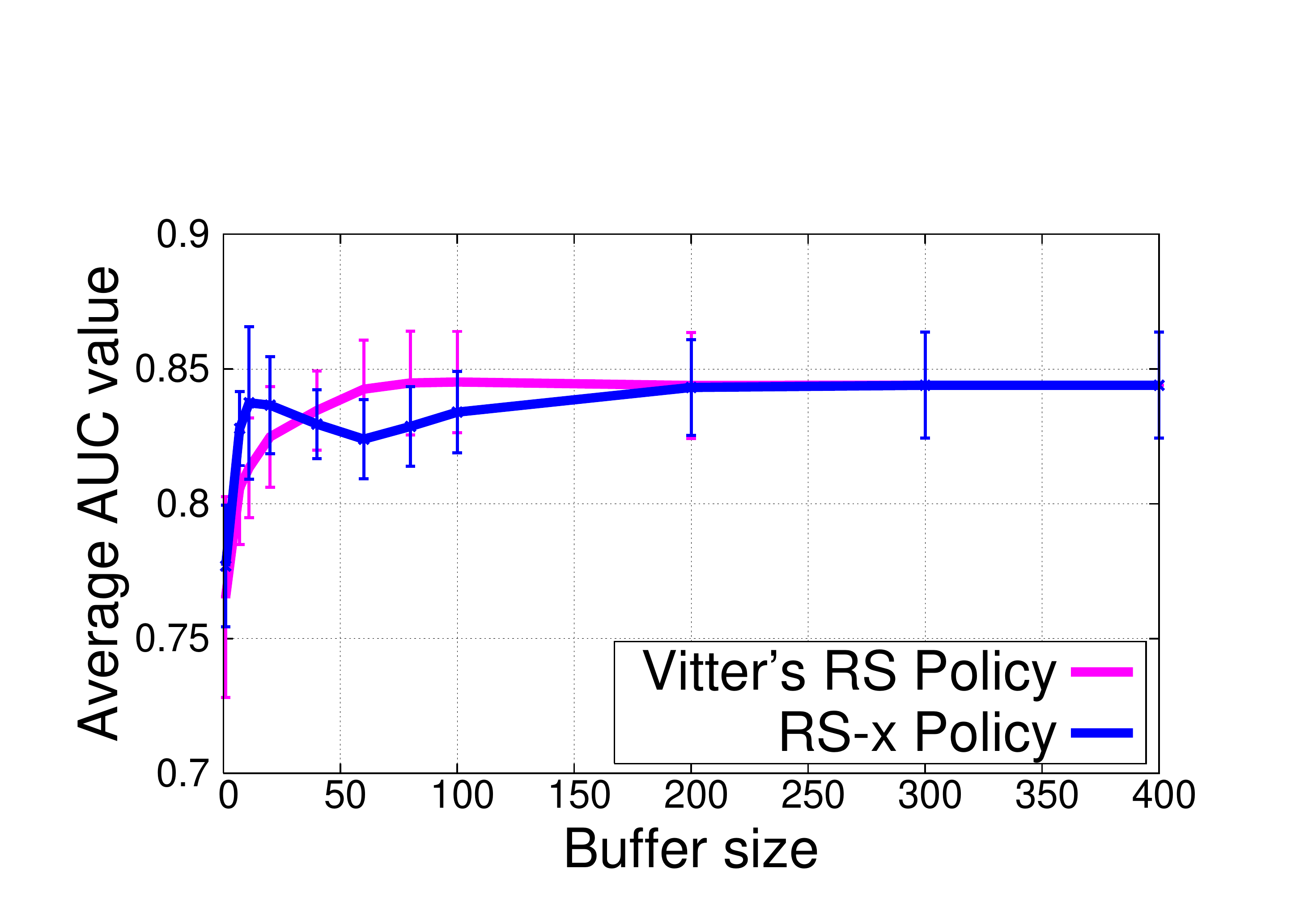}
		\label{subfig:diabetes-res}
	}
	\subfigure[Breast Cancer\hspace*{-6ex}]{
		\includegraphics[width=0.27\linewidth]{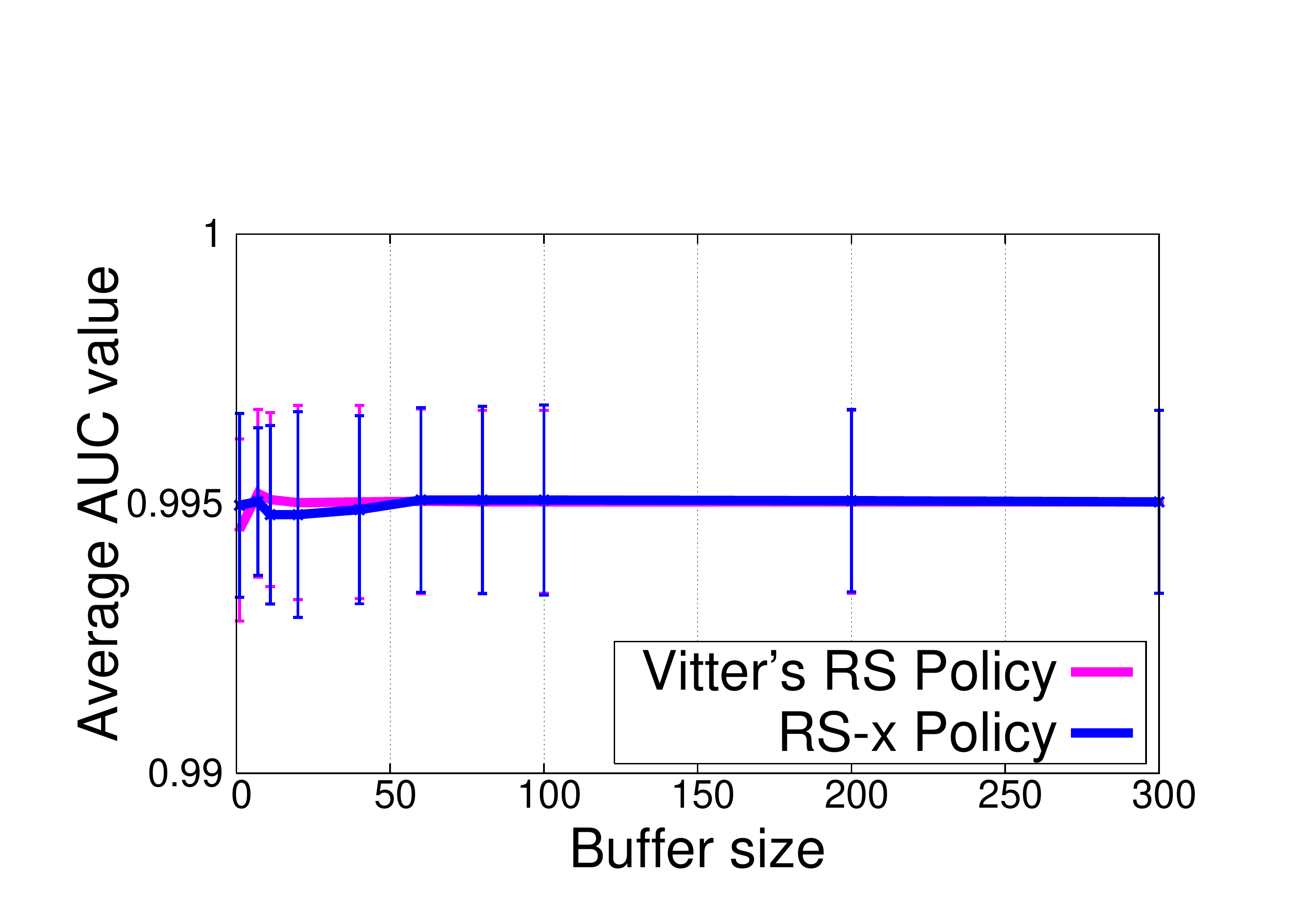}\hspace*{-6ex}
		\label{subfig:breast-cancer-res}
	}
	\subfigure[Vowel\hspace*{-6ex}]{
		\includegraphics[width=0.27\linewidth]{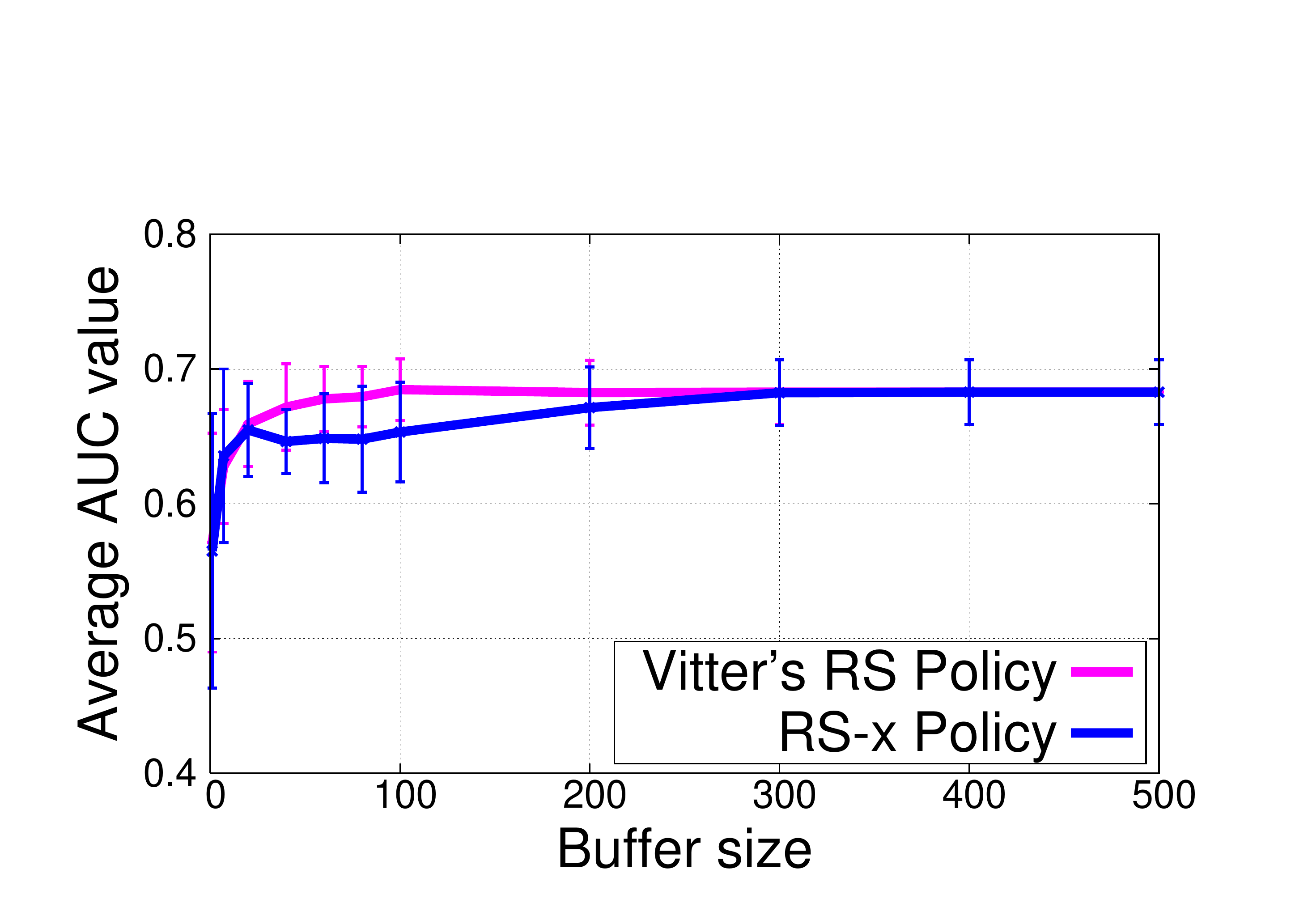}\hspace*{-6ex}
		\label{subfig:vowel-res}
	}
	\subfigure[Ionosphere\hspace*{-6ex}]{
		\includegraphics[width=0.27\linewidth]{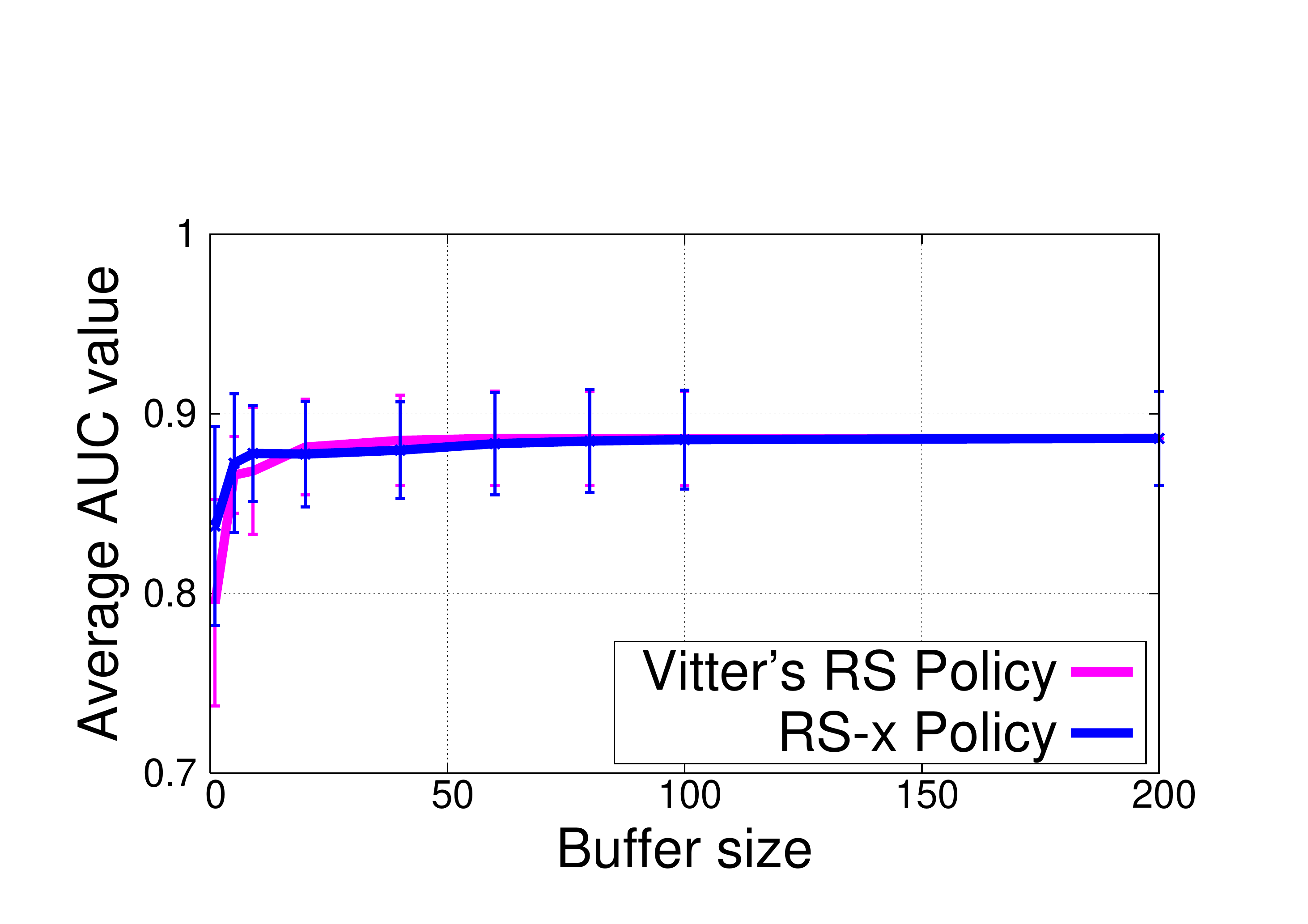}\hspace*{-6ex}
		\label{subfig:ionosphere-res}
	}
	\subfigure[German]{
		\includegraphics[width=0.27\linewidth]{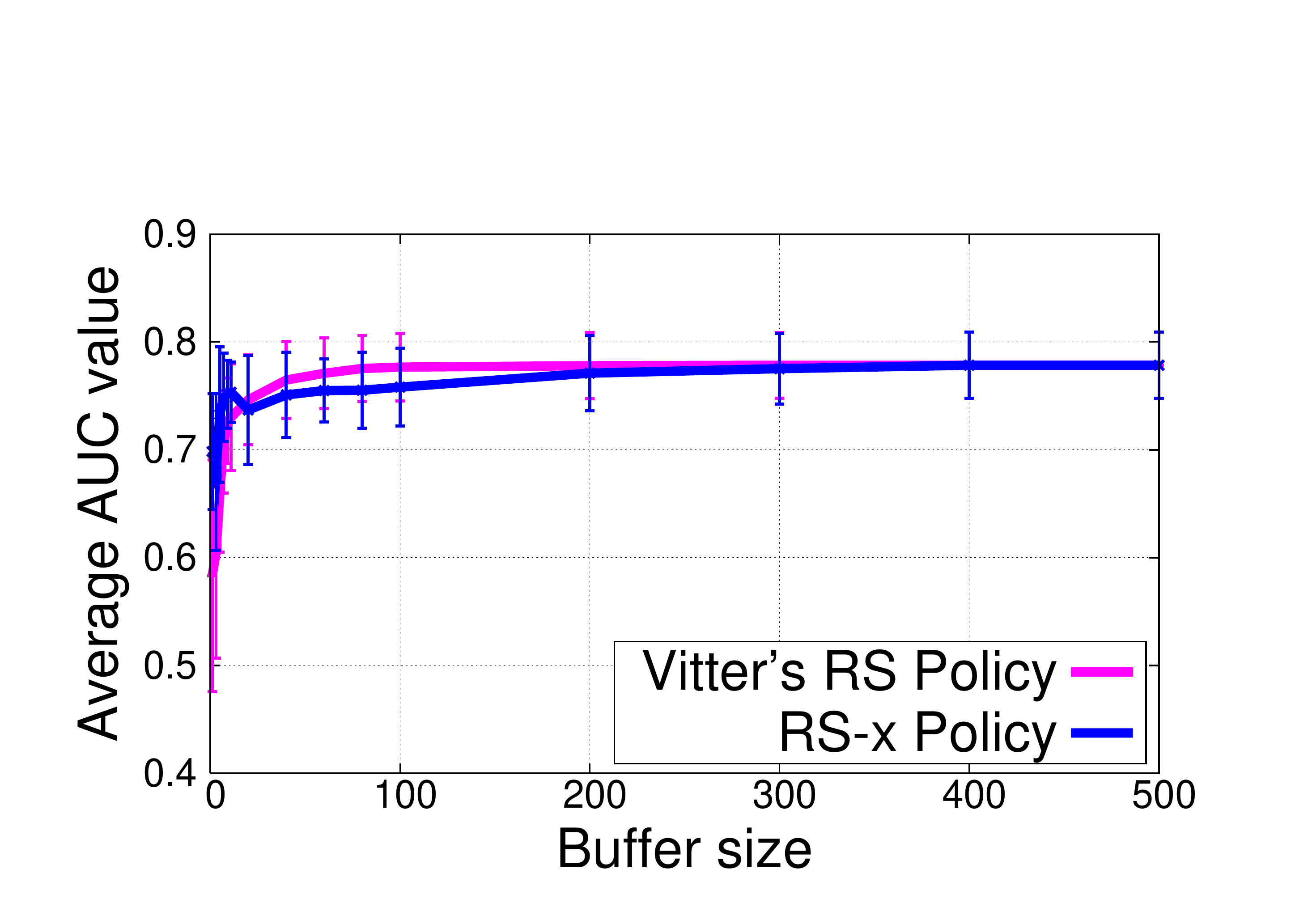}
		\label{subfig:german-res}
	}
	\subfigure[SVMguide3\hspace*{-6ex}]{
		\includegraphics[width=0.27\linewidth]{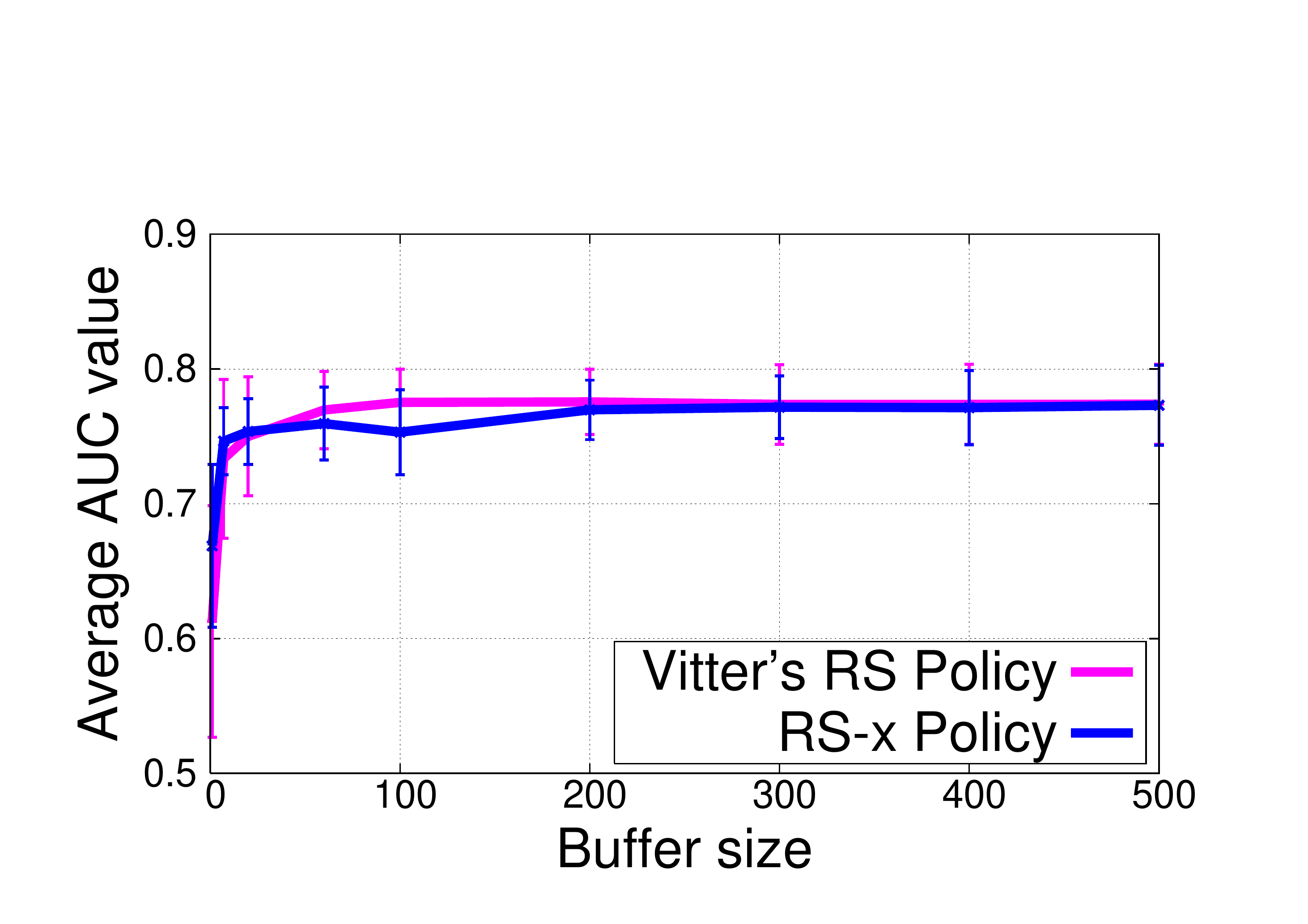}\hspace*{-6ex}
		\label{subfig:svmguide3-res}
	}
	\subfigure[SVMguide1\hspace*{-6ex}]{
		\includegraphics[width=0.27\linewidth]{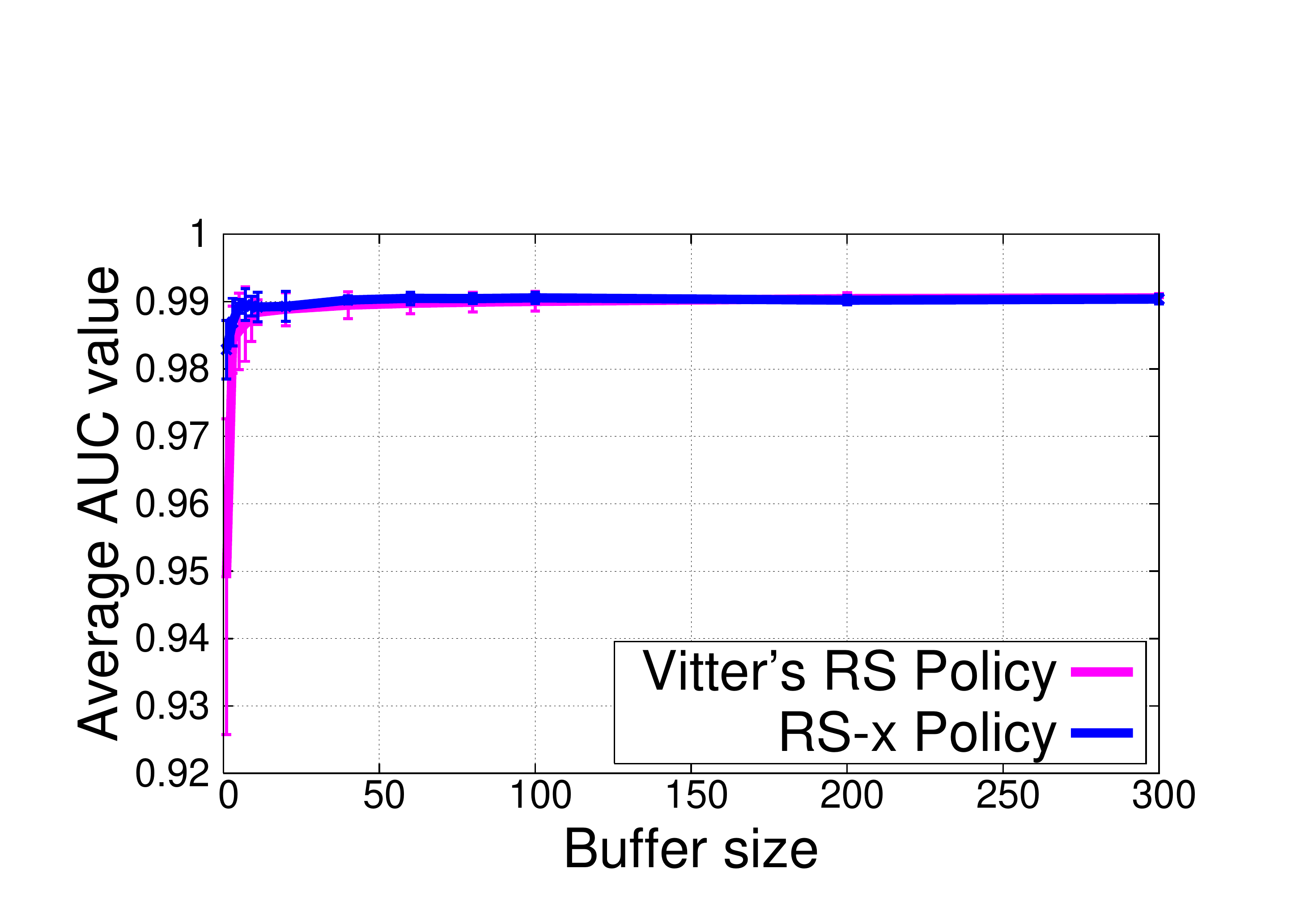}\hspace*{-6ex}
		\label{subfig:svmguide1-res}
	}
	\subfigure[Statlog\hspace*{-6ex}]{
		\includegraphics[width=0.27\linewidth]{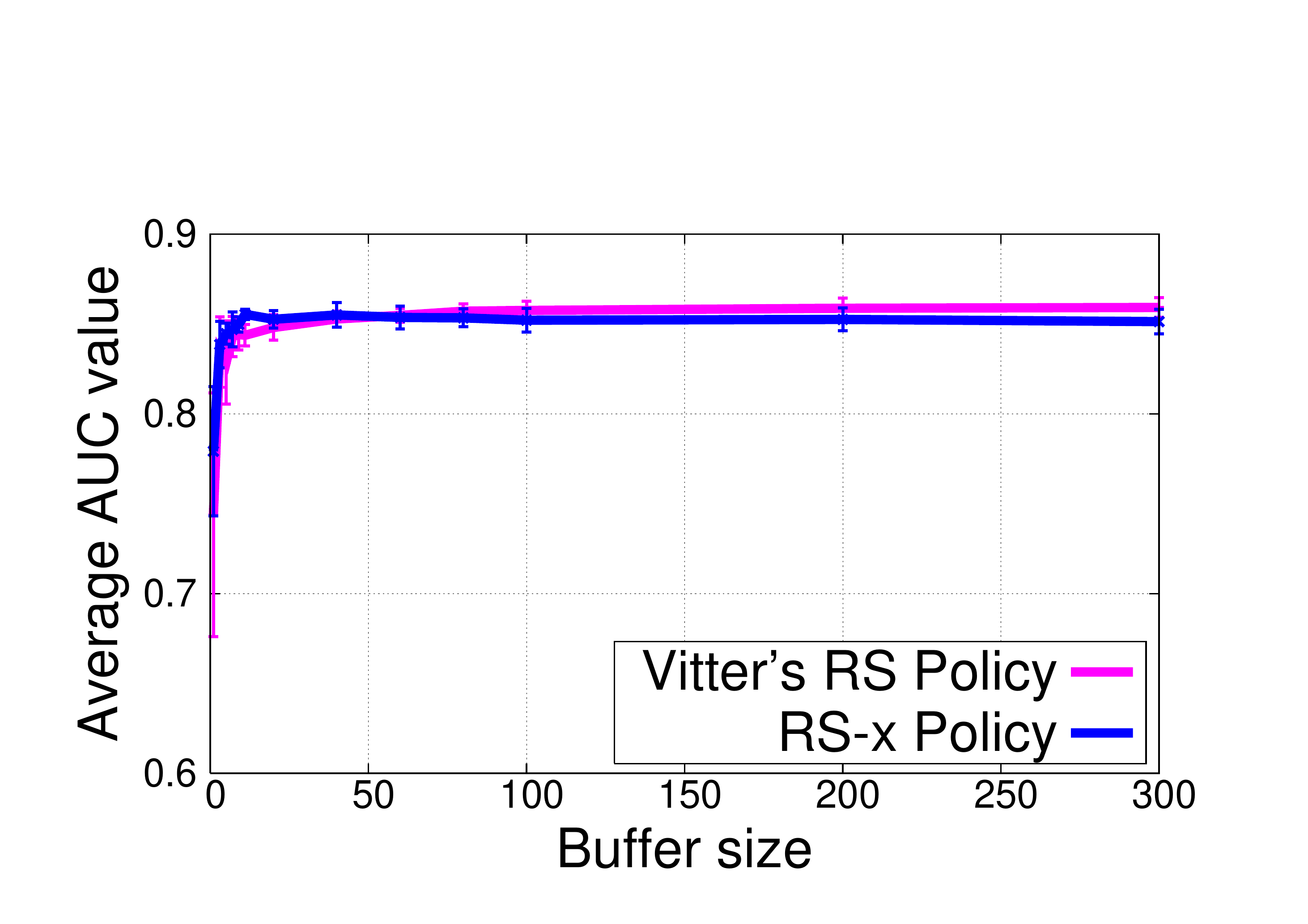}\hspace*{-6ex}
		\label{subfig:statlog-res}
	}
	\subfigure[Cod RNA]{
		\includegraphics[width=0.27\linewidth]{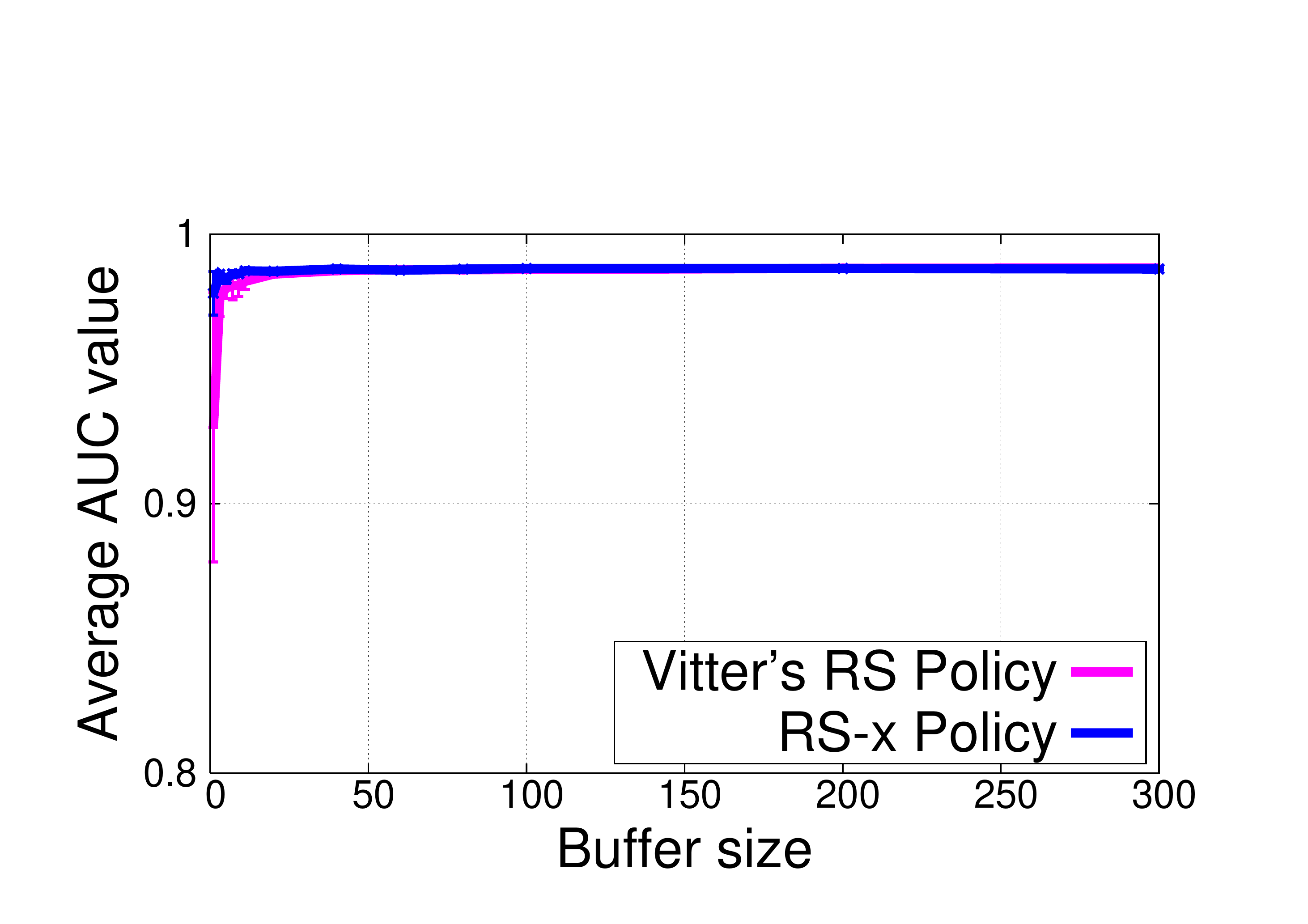}
		\label{subfig:cod-rna-res}
	}
	\subfigure[Letter\hspace*{-6ex}]{
		\includegraphics[width=0.27\linewidth]{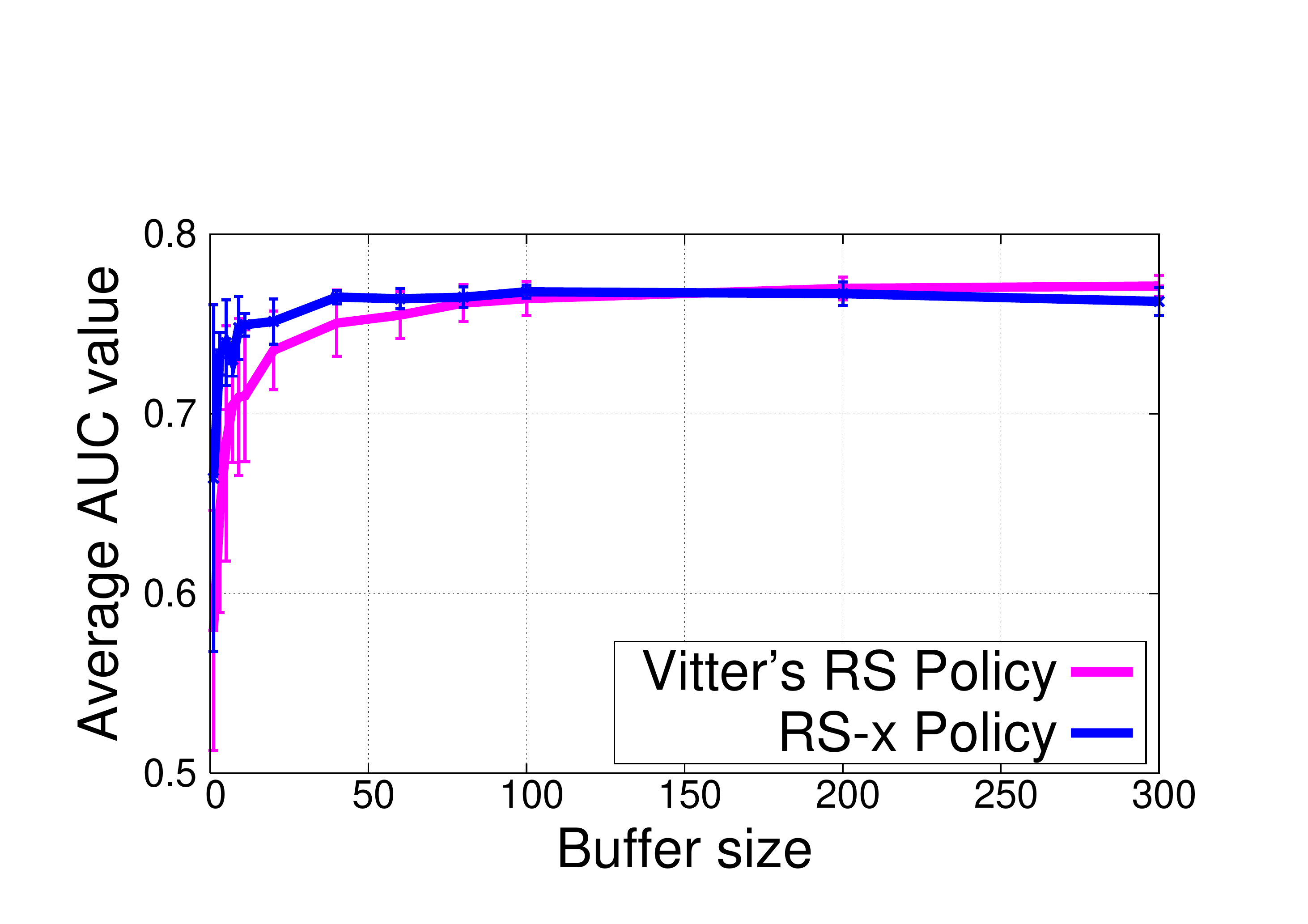}\hspace*{-6ex}
		\label{subfig:letter-res}
	}
	\subfigure[Adult\hspace*{-6ex}]{
		\includegraphics[width=0.27\linewidth]{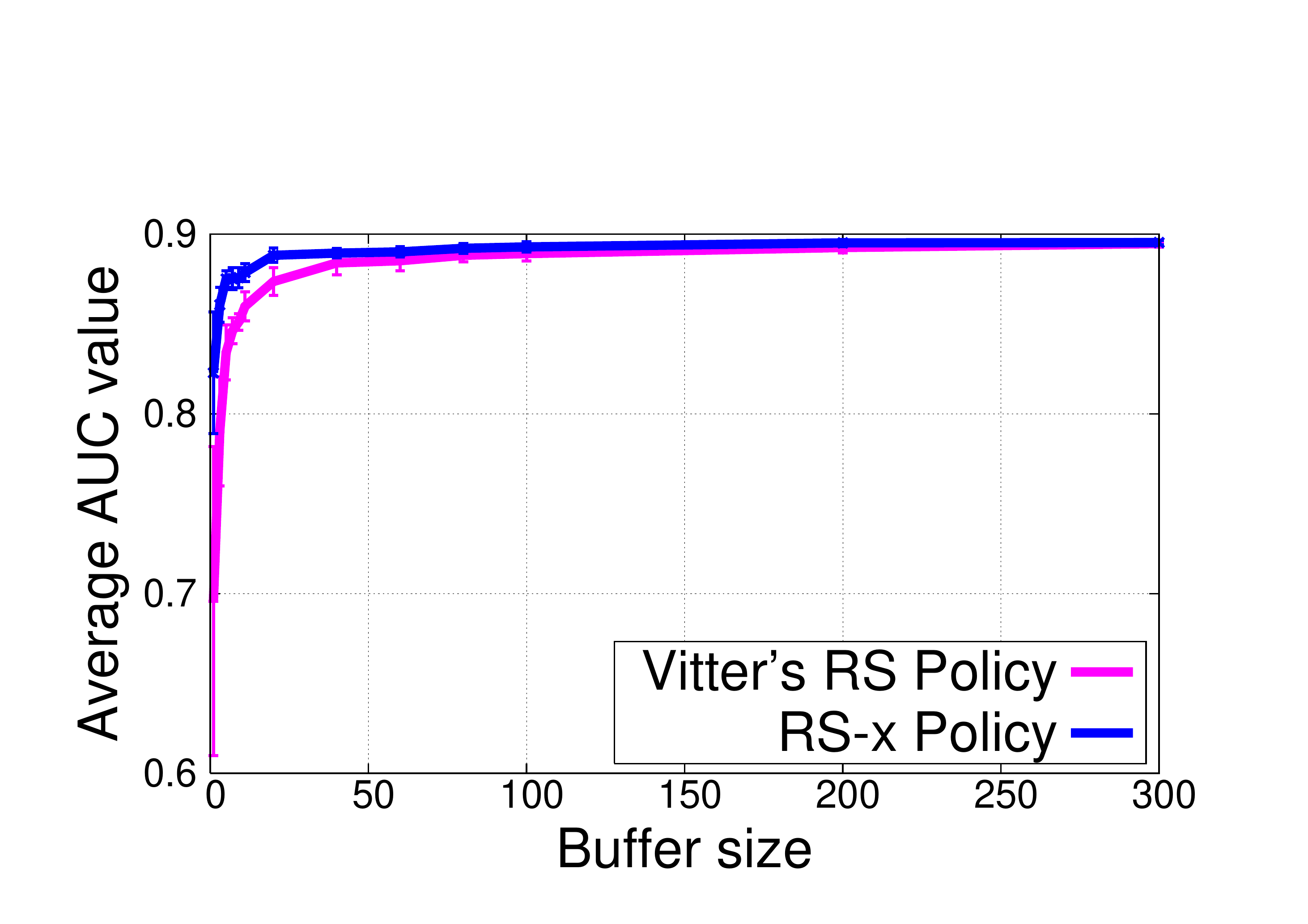}\hspace*{-6ex}
		\label{subfig:a9a-res}
	}
	\caption{Comparison between OAM$_{\text{gra}}$ (using \rsorig policy) and \olp (using \mrs policy) on AUC maximization tasks.}
	\label{fig:auc-expts-additional}
\end{figure*}

\end{document}